\documentclass{article}
\usepackage{siunitx}

\usepackage{iclr2026_conference,times}

\usepackage{amsmath,amsfonts,bm}

\def\eqref#1{equation~\ref{#1}}

\def\1{\bm{1}}

\DeclareMathAlphabet{\mathsfit}{\encodingdefault}{\sfdefault}{m}{sl}
\SetMathAlphabet{\mathsfit}{bold}{\encodingdefault}{\sfdefault}{bx}{n}

\usepackage[most]{tcolorbox}
\usepackage{url}
\usepackage{amsmath, amssymb}
\usepackage{multirow} 
\usepackage[table]{xcolor}
\usepackage{booktabs}
\usepackage{pifont}

\usepackage{enumitem}
\usepackage{ulem}
\usepackage{amsmath,amssymb,amsthm}
\usepackage{pifont}
\usepackage{graphicx}
\usepackage{pifont}
\usepackage{subcaption}

\usepackage[utf8]{inputenc} 
\usepackage[T1]{fontenc}    
\usepackage[colorlinks=true]{hyperref}
\hypersetup{
    linkcolor=purple,       
    citecolor=purple,      
    filecolor=purple,    
    urlcolor=purple         
}
\usepackage{url}            
\usepackage{booktabs}       
\usepackage{amsfonts}       
\usepackage{nicefrac}       
\usepackage{microtype}      
\usepackage{xcolor,colortbl}         
\usepackage{enumitem}

\usepackage{ulem}
\usepackage{amsmath,amssymb,amsthm}
\usepackage{pifont}
\usepackage{graphicx}
\usepackage{pifont}
\definecolor{cgreen}{HTML}{67AB9F}
\definecolor{cred}{HTML}{962C37}
\definecolor{pli-color}{HTML}{F37021}
\usepackage[most]{tcolorbox} 
\usepackage{mdframed}
\usepackage{xcolor}          
\usepackage{wrapfig}

\theoremstyle{plain}
\newtheorem{theorem}{Theorem}[section]

\theoremstyle{definition}

\theoremstyle{remark}

\usepackage{nicematrix}
\usepackage[svgnames]{xcolor}
\usepackage{tabularray}

\usepackage{listings}
\usepackage{xcolor}
\tcbuselibrary{listingsutf8}

\usepackage{titletoc}
\usepackage[nameinlink,capitalize,noabbrev]{cleveref}

\newcommand{\appcontents}{%
  \startcontents[appendix]%
  \printcontents[appendix]{l}{1}{\section*{Contents of the Appendix}}%
}

\newcommand{\blue}[1]{$_{\color{blue}\downarrow #1}$}
\newcommand{\red}[1]{$_{\color{red}\uparrow #1}$}
\definecolor{darksalmon}{rgb}{0.91, 0.59, 0.48}
\definecolor{emerald}{rgb}{0.31, 0.78, 0.47}
\definecolor{green(pigment)}{rgb}{0.0, 0.65, 0.31}
\definecolor{amaranth}{rgb}{0.9, 0.17, 0.31}
\definecolor{iris}{rgb}{0.35, 0.31, 0.81}
\definecolor{uu}{rgb}{0.95, 0.51, 0.51}
\definecolor{spirodiscoball}{rgb}{0.06, 0.75, 0.99}
\definecolor{rowgray}{rgb}{0.95, 0.95, 0.95}
\definecolor{headergray}{rgb}{0.86, 0.85, 0.89}

\definecolor{ruleblue}{RGB}{32,98,165}
\definecolor{contrastred}{RGB}{190,30,45}
\definecolor{emphteal}{RGB}{0,128,128}
\newcommand{\qtyy}[1]{\textcolor{ruleblue}{\textbf{#1}}}     
\newcommand{\chg}[1]{\textcolor{contrastred}{\textbf{#1}}}  
\newcommand{\kw}[1]{\textcolor{emphteal}{\textbf{#1}}}      

\title{Probing to Refine: Reinforcement Distillation of LLMs via Explanatory Inversion}

\author{
Zhen Tan$^{1}$\thanks{Corresponding Author}\hspace{0.5em}
Chengshuai Zhao$^{1}$\hspace{0.5em}
Song Wang$^{2}$\hspace{0.5em}
Jundong Li$^{2}$\hspace{0.5em}
Tianlong Chen$^{3}$ \hspace{0.5em}
Huan Liu$^{1}$
\\
$^{1}$Arizona State University \hspace{0.5em}
$^{2}$University of Virginia \hspace{0.5em}
$^{3}$University of North Carolina at Chapel Hill \\
{\ttfamily\fontsize{8.4}{9.8}\selectfont
\{ztan36,czhao93,huanliu\}@asu.edu
\{sw3wv,jundong\}@virginia.edu
tianlong@cs.unc.edu
}
}

\newmdenv[
    linewidth=2pt,       
    roundcorner=10pt,    
    linecolor=gray!90,     
    backgroundcolor=gray!05, 
    skipabove=5pt,      
    skipbelow=5pt       
]{custombox}
\iclrfinalcopy
\begin{document}

\maketitle

\begin{abstract}
        Distilling robust reasoning capabilities from large language models (LLMs) into smaller, computationally efficient student models remains an unresolved challenge. Despite recent advances, distilled models frequently suffer from superficial pattern memorization and subpar generalization. To overcome these limitations, we introduce a novel distillation framework that moves beyond simple mimicry to instill a deeper conceptual understanding. Our framework features two key innovations. \underline{\textit{First}}, to address pattern memorization, Explanatory Inversion (EI) generates targeted ``explanatory probes'' that compel the student to articulate the underlying logic behind an answer, rather than just memorizing it. \underline{\textit{Second}}, to improve generalization, Explanatory GRPO (\texttt{EXGRPO}) uses a reinforcement learning algorithm with a novel Dialogue Structure Utility Bonus, which explicitly rewards the student for maintaining a coherent reasoning process across these probes. Extensive evaluations on 12 datasets demonstrate significant improvements. Using Gemma-7b as the student model, our method yields an average \textbf{20.39\%} increase over zero-shot performance and a \textbf{6.02\%} improvement over the state-of-the-art distillation baselines. Moreover, models distilled with our method show remarkable training efficiency (e.g., surpassing vanilla fine-tuning with \textbf{10-25\%} training data) and strong generalization to out-of-distribution tasks. Implementation is released at \url{https://github.com/Zhen-Tan-dmml/ExGRPO.git}.
\end{abstract}

\vspace{-5mm}
\section{Introduction}
\vspace{-2mm}

Large Language Models (LLMs) have demonstrated remarkable capabilities in complex reasoning tasks, frequently leveraging techniques such as Chain-of-Thought (CoT) prompting to articulate step-by-step reasoning processes~\citep{wei2022chain, kojima2022large}. However, distilling these sophisticated reasoning abilities into smaller, more computationally efficient student models remains a significant open challenge~\citep{hinton2015distilling, hsieh2023distilling, li2025small}. The difficulties include exposure bias due to fixed supervision targets~\citep{xu2019rethinking}, loss of multimodal~\citep{tian2025beyond} or long-context reasoning capabilities~\citep{yan2025inftythink,yeotong2025longcot}, sensitivity to distribution shifts~\citep{yang2024self}, and inefficiencies in capturing nuanced preference or alignment signals from teachers~\citep{gu2025capturing}.

\vspace{-1mm}
More recently, studies show that LLMs suffer from generalization limitations, exemplified by challenges like compositional generalization~\citep{yang2024exploring} and out-of-distribution generalization~\citep{zhao2025chain}. Our work goes a step further by pioneering the finding that generalization limitations are not only present but are \textbf{amplified in distilled LLMs}, as illustrated in \cref{fig:teaser}(a). Existing knowledge distillation (KD) methods \citep{xu2024survey,yang2024survey} rely heavily on Supervised Fine-Tuning (SFT), forcing the student to mimic fixed teacher outputs. Such training encourages superficial pattern memorization that breaks down under even mild distribution shifts~\citep{chu2025sft}. The reversal curse~\citep{guo2024exploring,berglund2023reversal} exemplifies this issue (\cref{fig:teaser}(b)): while distilled models may correctly solve forward problems (e.g., $5-2=3$), they often fail on the inverse (e.g., $3+2=5$).

\vspace{-1mm}
Although not explicitly studying this challenge, recent KD methods~\citep{su2025learnbyinteract,zhang2025symbiotic} have attempted to mitigate such generalization issues by employing ``reverse thinking''~\citep{deng2022anomaly} to augment the training data, particularly on creating symmetric samples from a known solution back to the problem. As illustrated in \cref{fig:teaser}(c), frameworks like RevThink \citep{chen-etal-2025-reverse} train models on explicitly generated backward questions and reasoning, essentially learning the A-to-Q path (e.g., training on ``End with 3, Add 2 $\rightarrow$ Started with 5'' for the example above).
While beneficial, these methods still encourage the student to internalize another directional mapping, rather than developing a conceptual grasp of the underlying mathematical relationships (e.g., addition-subtraction duality). As a result, the model may learn to invert outputs mechanically without cultivating the deeper understanding needed for robust generalization.


\vspace{-1mm}
We tackle the generalization limitations from two perspectives. From the \underline{{data perspective}}, we move beyond A-to-Q reversals and draw inspiration from the concept of \textbf{{Explanatory Inversion}} (EI), rooted in cognitive science's emphasis on explanation-seeking for true understanding  \citep{searle1998consciousness,potochnik2020patterns,woodward2005making}. Genuine comprehension transcends rote memorization, involving recognition of understanding its context, the underlying principles, and its implications~\citep{ausubel1966meaningful}. Considering again the apple problem, a deeper understanding involves grasping why subtraction is used, the inverse relationship with addition, and the conditions under which the calculation holds. EI operationalizes this depth of inquiry. As dipicted in \cref{fig:teaser} (d), our EI technique generates targeted ``explanatory probes'' ($Q_i^{\text{aug}}$) related to the original reasoning ($Q \rightarrow A, R_T$). For the apple example ($5-2=3$), probes might include: ``Why is subtraction the correct operation here?'', ``What operation would find the starting number if you know the end number (3) and the amount given away (2)?'', ``Does the order `$5-2$' matter? What about `$2-5$'?'', or ``What if Mary gave apples to John instead?''. By learning from the teacher model to answer the diverse set of probes, which require understanding the logic rather than just reversing sequences, the student model is compelled to build a richer conceptual model of the task, thereby promoting generalizability.

\begin{figure}[t]
    \centering
    \includegraphics[trim={1cm 0 1cm 0}, width=0.91\linewidth]{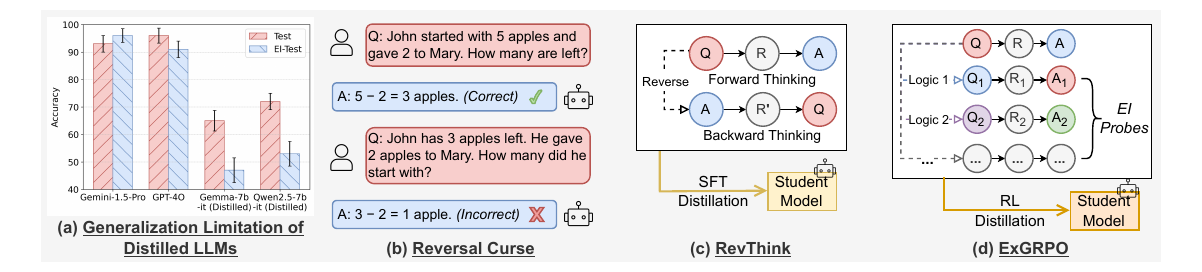}
    \vspace{-4mm}
    \caption{(a) Distilled LLMs often exhibit generalization limitations compared to teacher models (e.g., Gemini-1.5-Pro \textit{v.s.} smaller distilled models on Test \textit{v.s.} EI-Test set, which is the augmented version of the test set using Gemini-1.5-Pro and Explanatory Inversion (EI)). See more experimental details in Appendix~\ref{app:details}. (b) This is exemplified by the reversal curse, where a model correctly solves a forward problem (e.g., 5-2=3) but fails its inverse. (c) Prior ``Reverse Thinking'' approaches, like RevThink~\citep{chen-etal-2025-reverse}, attempt A-to-Q reasoning. (d) Our \texttt{ExGRPO} method enhances distillation by using EI probes to challenge and refine student models via RL.}
    \label{fig:teaser}
    \vspace{-8mm}
\end{figure}

\vspace{-1mm}
Furthermore, from the \underline{\textit{optimization perspective}},
we propose to leverage Reinforcement Learning (RL) to tackle the intrinsic limitation of SFT. RL promotes internalization by encouraging exploration and developing more generalizable strategies \citep{chu2025sft, guo2025deepseek}.
Although RL is often used for post-training of large models \citep{guo2025deepseek}, we propose the {\textbf{Explanatory GRPO}} (\texttt{ExGRPO}) algorithm, as a core component of the \textit{distillation process itself}. The goal is not only to learn from a reward \textit{detached} from the teacher, but to use RL to guide the student towards internalizing the complex reasoning structures \textit{revealed} by the teacher through EI. \texttt{ExGRPO} leverages the outcome reward and crucially incorporates the novel \textbf{{Dialogue Structure Utility Bonus}}. This bonus is designed to reward the student for {demonstrating a coherent understanding across the sequence of explanatory probes derived from the teacher's logic}, thereby optimizing for a deeper form of knowledge transfer than simple imitation through EI.

\vspace{-1mm}
Our main contributions include: (\textbf{\textit{i}}) We pioneer the identification of the amplified generalization limitation of distilled LLMs. (\textbf{\textit{ii}}) We introduce EI, a cognitively inspired reasoning augmentation technique that systematically probes and reinforces the student model's understanding of the underlying logic, moving beyond superficial pattern memorization.
(\textbf{\textit{iii}}) We propose \texttt{ExGRPO}, a novel reinforcement learning-based distillation algorithm that leverages a specifically designed \textit{Dialogue Structure Utility Bonus} to promote internalization and coherent reasoning processes explicitly revealed by EI. (\textbf{\textit{iv}}) Comprehensive experiments across 12 datasets demonstrate significant improvements in reasoning capability (avg. 20.39\% over zero-shot, 6.02\% over state-of-the-art KD baselines), robustness, sample \& token efficiency, and out-of-distribution (OOD) generalization for distilled models.

\vspace{-4mm}
\section{Related Work}
\vspace{-3mm}
We briefly discuss the most critical related works here. A detailed version is given in Appendix~\ref{app:relate}.

\vspace{-1mm}
\textbf{Knowledge Distillation for LLMs.}
Knowledge Distillation (KD) is a popular technique for creating smaller, efficient student LLMs from larger teacher models~\citep{xu2024survey}. While KD is widely used for various model refinement tasks, such as self-distillation~\citep{yang2024self}, we focus on the canonical setting where a small, efficient student model distills knowledge from a strong, large teacher. While early methods focused on aligning logits~\citep{sanh2019distilbert}, modern approaches distill reasoning capabilities by training students on teacher-generated outputs like Chain-of-Thought (CoT) rationales~\citep{hsieh2023distilling,mukherjee2023orca}.  Our work extends this line of research by using novel Explanatory Inversion (EI) probes to elicit and distill more robust reasoning.


\vspace{-1mm}
\textbf{Challenges of Distilled LLMs.}
Despite progress, distilled LLMs often suffer from limited generalization due to challenges like exposure bias, teacher-student discrepancies, and an inability to internalize complex reasoning~\citep{xu2019rethinking,chen2025survey}. This can lead to shallow reasoning and logical inconsistencies. While methods like RevThink~\citep{chen-etal-2025-reverse} address specific failure modes like the reversal curse by augmenting data, our work aims to foster a deeper, more generalizable understanding beyond rote pattern matching.

\vspace{-1mm}
\textbf{Rule-based RL for LLMs.}
Reinforcement Learning (RL) is increasingly used to enhance LLM generalizability~\citep{schulman2017proximal, rafailov2023direct}. While rule-based methods like GRPO can elicit reasoning from outcome-only rewards~\citep{guo2025deepseek}, they are ill-suited for distillation. GRPO optimizes the policy using only final answer correctness as the reward signal. It does not incorporate any intermediate reasoning steps or teacher-generated traces, making it unable to supervise the student’s reasoning behavior. Our Explanatory GRPO (\texttt{ExGRPO}) algorithm overcomes this by introducing a novel Dialogue Structure Utility (DSU) reward, which enables the teacher model to explicitly supervise coherent reasoning across multi-turn dialogues while encouraging exploration.

\vspace{-3mm}
\section{Reinforcement Distillation via Explanatory Inversion}
\vspace{-3mm}

This work aims to improve the limited generalizability of distilled LLMs. We achieve this by first using \textbf{{Explanatory Inversion}} (EI) to systematically probe the student model's understanding with targeted challenges derived from teacher reasoning. Subsequently, our novel \textbf{{Explanatory GRPO}} (\texttt{ExGRPO}) algorithm refines the student model, using these interactions to promote not only final answer accuracy but also the coherent processing of explanatory dialogues, effectively enhancing the distillation of nuanced reasoning. All the hyperparameters are detailed in Appendix~\ref{app:para}.

\vspace{-3mm}
\subsection{Explanatory Inversion: Crafting Probes for Deeper Reasoning}
\label{sec:explanatory_inversion}
\vspace{-2mm}

The core of our data augmentation strategy is \textbf{{Explanatory Inversion (EI)}}, a technique designed to generate a diverse set of ``explanatory probes'' that compel student models to move beyond superficial pattern matching towards deeper conceptual understanding. Given an original distillation data point containing question-answer pair $D(Q)=(Q, A, R_{T})$ where $R_T = (s_1, s_2, \dots, s_j, \dots, s_T)$ its corresponding teacher-generated CoT reasoning and $s_j$ is a reasoning step, EI aims to deconstruct, challenge, and explore the underlying logic, assumptions, and principles within $R_T$. 
EI systematically applies $N(=10\text{ in this paper})$ distinct categories of transformation rules, $F = \{f_1, \dots, f_{N}\}$, to $(Q, A, R_T)$ to generate a spectrum of new probing questions $Q_i^{\text{aug}} = f_i(Q, A, R_T)$. The specific formulation of each $f_i$ involves template-based transformations. This process yields a set of up to $N$ augmented data tuples for each original example:
 $D_{cand}(Q)=F(D(Q))= \{d_k\}_{i=1}^{N} = \{(Q_i^{\text{aug}}, A_i^{\text{aug}}, R_{T,i}^{\text{aug}})\}_{i=1}^{N}$.
These tuples serve as candidate augmentations. The choices of the EI probes are based on diverse cognitive findings \citep{keil2006explanation, sloman2009causal, machamer2000thinking, byrne2007rational, gentner1983structure, klahr1988dual, lave1991situated, newell1972human, zacks2007event, chi1989self}. Each probe is engineered to contain a rule $f_i$ to assess and cultivate specific facets of reasoning. For brevity, we present two adopted rules below, which transform an original task by creating a counterfactual scenario (R1) or demanding a more detailed justification of its premise (R2). The full lists with examples are detailed in Appendix~\ref{app:ei_rules} and \ref{app:prompt}. 

\vspace{-1mm}
\begin{tcolorbox}[
  breakable,
  title={Explanatory Inversion Rule Examples}, 
  colback=white,
  colframe=black!15!ruleblue, 
  coltitle=white, 
  fonttitle=\bfseries\scriptsize,
   before upper=\scriptsize
]

\begin{enumerate}[label=\textbf{R\arabic*.}, wide, labelwidth=!, labelindent=0pt, itemsep=-5pt] 
    \vspace{-5pt}
    \item \textbf{Counterfactual Scenario Generation ($f_1$):} Creates probes exploring outcomes if a premise were altered. This tests understanding of conditional dependencies and logical consequences \citep{byrne2007rational}.
    \vspace{-5pt}
    \begin{itemize}[leftmargin=1.2em, topsep=4pt, itemsep=-2pt]
        \item \kw{Original Task:} \qtyy{Premise}: ``All passengers and crew survived the crash.'' \qtyy{Hypothesis}: ``No children died.'' \qtyy{Label}: {Entailment}.
        \item \kw{Augmented Probe:} How would the entailment change if the premise stated that ``\chg{most}'' rather than ``\chg{all}'' passengers survived?
    \end{itemize}


    \item \textbf{Explanatory Challenge ($f_{3}$):} Poses questions that demand justification for a specific logical transition. This encourages explicit articulation of granular inferences and fosters self-reflection \citep{chi1989self}.
        \vspace{-5pt}
    \begin{itemize}[leftmargin=1.2em, topsep=4pt, itemsep=-2pt]
        \item \kw{Original Task:} \qtyy{Premise}: ``{All passengers and crew survived}.'' Is it reasonable to conclude that no children died?
        \item \kw{Augmented Probe:} \chg{Explain} the logical steps connecting the premise ``\qtyy{all passengers and crew survived}'' to the conclusion that ``\qtyy{no children were killed}.''
    \end{itemize}
    \vspace{-8pt}
\end{enumerate}

\end{tcolorbox}
\vspace{-1mm}



\vspace{-2mm}
\subsection{Stage 1: Data Curation}
\label{sec:data_curation}
\vspace{-2mm}

The initial stage of our framework involves curating a high-quality dataset from the EI-generated probes and then performing Supervised Fine-Tuning (SFT) to warm up the student model to the diverse reasoning challenges generated by EI, providing an initial alignment with teacher
insights. We carefully study the effect of SFT in \cref{sec:ablation_results}.

\textbf{Data Curation ($D_{EI}$).}
The dataset $D_{EI}$ is constructed through a two-step filtering process applied to the candidate EI-augmented tuples $D_{cand}(Q)$ generated for each original question $Q$. 

\vspace{-1mm}
First, an \textbf{EI Consistency Filter} ensures that the probe and its teacher-generated reasoning do not detract from solving the original problem. Let $\text{Predict}(\mathcal{M}, \text{Prompt})$ denote the final answer predicted by a model $\mathcal{M}$ for a given prompt. A tuple $d_k$ passes this filter, \textit{i.e.}, $\zeta_{EI}(d_k)=1$, if the teacher model $\mathcal{T}$, when conditioned on the probe $Q_k^{\text{aug}}$, its reasoning $R_{T,k}^{\text{aug}}$, its answer $A_{T,k}^{\text{aug}}$, and then prompted with the original question $Q$, still correctly predicts the original answer $A$:
{\small\begin{equation}
\label{eq:ei_consistency_filter}
\zeta_{EI}(d_k) \Leftrightarrow \mathcal{T} ([Q_k^{\text{aug}} || R_{T,k}^{\text{aug}} || A_{T,k}^{\text{aug}} || Q]) = A,
\end{equation}} 
where $||$ indicates concatenation. Let $D'_{EI}(Q) = \{d_k \in D_{cand}(Q) \mid \zeta_{EI}(d_k)=1\}$ be the set of EI-consistent probes for a specific original question $Q$, and $N'_Q = |D'_{EI}(Q)|$ be the count of such probes for that $Q$. Note that while $N'_Q \le N$, the overall training process across many original questions $Q$ still exposes the student model to the full diversity of the $N$ EI rule types, as different rules will be applicable and pass filtering for different $Q$.

\vspace{-1mm}
Second, we apply a \textbf{Rejective Filtering} step to remove original questions $Q$ (and all their associated probes in $D'_{EI}(Q)$) that are either ``too easy'' or ``too hard'' for a baseline student model $S_{base}$ (the student model before SFT) to learn effectively from their EI probes. Let $\lambda_{S_{base},j} = 1$ if $S_{base}$ correctly answers probe $Q_j^{\text{aug}} \in D'_{EI}(Q)$ with $A_{T,j}^{\text{aug}}$, and $0$ otherwise. Let $\Lambda_{Q,S_{base}} = \sum_{j=1}^{N'_Q} \lambda_{S_{base},j}$ be the count of EI-consistent probes for $Q$ that $S_{base}$ answers correctly. An original question $Q$ (along with all its probes in $D'_{EI}(Q)$) is included in the final dataset $D_{EI}$ if:
{\small\begin{equation}
\label{eq:rejective_filter}
\left( \neg (\Lambda_{Q,S_{base}} = N'_Q \land N'_Q > 0) \right) \land \left( \neg (\Lambda_{Q,S_{base}} \ge \tau_{hard} \land N'_Q > 0) \right), \\
\end{equation}} 
where $\tau_{hard}$ is a predefined integer threshold ($\tau_{hard} \ge 1$). The first clause ensures that $S_{base}$ does not answer all $N'_Q$ EI-consistent probes correctly (i.e., at least one probe is challenging for $S_{base}$). The second clause ensures that $S_{base}$ answers at least $\tau_{hard}$ of the EI-consistent probes correctly. If $N'_Q = 0$, $Q$ is also discarded.
The final dataset $D_{EI}$ comprises all tuples $(Q_k^{\text{aug}}, A_{T,k}^{\text{aug}}, R_{T,k}^{\text{aug}})$ from $D'_{EI}(Q)$ for all original questions $Q$ that pass this rejective filter. This $D_{EI}$ is used for the initial SFT, the source for teacher trajectories for $\mathcal{L}_{\text{SFT-aux}}$, and for sampling EI rule types in the RL stage.

\vspace{-2mm}
\subsection{Stage 2: Supervised Fine-Tuning for Cold Start}\label{sec:sft}
\vspace{-1mm}
The student model, with parameters $\theta$, is then fine-tuned on $D_{EI}$ for $P$ epochs. For each selected tuple $(Q_i^{\text{aug}}, A_{T,i}^{\text{aug}}, R_{T,i}^{\text{aug}}) \in D_{EI}$, the target output $T_i$ is the concatenation of the teacher's reasoning and answer $[R_{T,i}^{\text{aug}} || A_{T,i}^{\text{aug}}]$. The SFT objective is to maximize the likelihood of generating $T_i$ given $Q_i^{\text{aug}}$ by minimizing the negative log-likelihood loss:
\vspace{-2mm}
{\small\begin{equation}
\mathcal{L}_{SFT}(\theta) = - \sum_{(Q_i^{\text{aug}}, T_i) \in D_{EI}} \sum_{t=1}^{|T_i|} \log P(T_{i,t} | Q_i^{\text{aug}}, T_{i,<t}; \theta),
\end{equation}} 
where $T_{i,t}$ is the $t$-th token of the target sequence $T_i$. This stage warms up the student model to the diverse reasoning challenges generated by EI, providing an initial alignment with teacher insights.

\vspace{-3mm}
\subsection{Stage 3: Reinforcement Distillation via Explanatory GRPO (\texttt{ExGRPO}).}
\label{sec:rl_stage}
\vspace{-1mm}
Following SFT, the student model's policy $\pi_{\theta}$ (parameterized by $\theta$) is further refined using our \texttt{ExGRPO} algorithm. \texttt{ExGRPO} adapts Group Relative Policy Optimization (GRPO) to specifically enhance the distillation of reasoning by leveraging the structured interactions derived from EI.

\vspace{-2mm}
\subsubsection{Interaction Protocol with Randomized Explanatory Probe Sampling}
\vspace{-1mm}
For an original question $Q$ in an \texttt{ExGRPO} training batch, a $k$-turn explanatory dialogue is constructed to probe and refine the student's reasoning. The construction and interaction unfold as follows:

\textbf{Probe Selection:} From the $N$ available Explanatory Inversion (EI) rule categories, $k$ distinct rule types are \textit{randomly sampled without replacement} (where $k \le N$, e.g., $k=5$) to generate a sequence of $k$ augmented questions (probes $Q_1^{\text{aug}}, \dots, Q_k^{\text{aug}}$).

\begin{wrapfigure}[16]{r}{0.44\textwidth}
\vspace{-2mm}
  \begin{center}
    \includegraphics[width=0.44\textwidth]{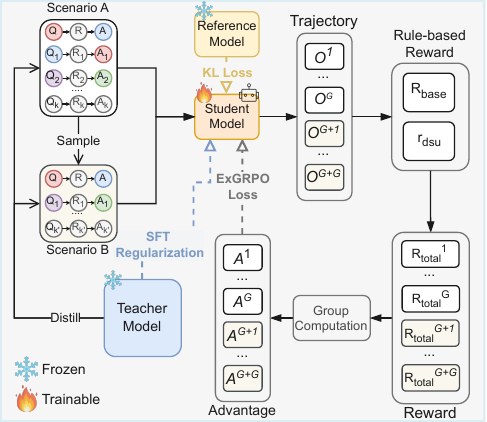}
  \end{center}
  \vspace{-3mm}
  \caption{\texttt{ExGRPO} framework overview. The student model learns from multi-turn explanatory probe dialogues.\label{fig:Framework}
  }
\end{wrapfigure}
\vspace{-1mm}

\textbf{Sequential Interaction Formulation:} These $k$ probes are presented to the student one by one, and finally the original $Q$. For each turn $j \in \{1, \dots, k\}$, the student receives the current probe $Q_j^{\text{aug}}$. The dialogue history, including all previous probe-response pairs $(Q_1^{\text{aug}}, R_{S,1}^{\text{aug}}, A_{S,1}^{\text{aug}}), \dots, (Q_{j-1}^{\text{aug}}, R_{S,j-1}^{\text{aug}}, A_{S,j-1}^{\text{aug}})$, forms the context for this turn. Finally, the student generates its reasoning ($\hat{R}_{S}$, $\hat{A}_{S}$) in response to $Q$, given the accumulated context. Note that each intermediate turn is supervised using an SFT loss $\mathcal{L}_{\text{SFT-aux}}$ defined in \cref{eq:sft_aux}. The final response $\hat{A}_{S}$ is used for computing rewards.

\textbf{Final Response Generation:} After completing all $k$ turns of interaction with the EI probes, the student model is prompted again with the original question $Q$ (now enriched by the preceding dialogue) to produce its final reasoning and answer for $Q$.


As shown in \cref{fig:Framework}, this entire sequence of $k$ probe interactions followed by the final answer to $Q$ constitutes a Full Dialogue (Scenario A). Randomly sampling a subset of EI probes for each dialogue instance offers several benefits: (1) It introduces diversity in training dialogues over epochs, encouraging robust and generalizable reasoning rather than overfitting to a fixed set or order of $N$ probes. (2) It enhances computational efficiency during RL compared to processing all $N$ probes in every $k$-turn dialogue if $k$ were always $N$. (3) It allows for focused yet varied probing in each dialogue, preventing potential cognitive overwhelm from too many simultaneous challenges.

For the purpose of computing the \textbf{Dialogue Structure Utility Bonus} ($r_{\text{dsu}}$), contrastive trajectories under a Partial Dialogue (Scenario B) are also generated for a subset of training instances. Scenario B is constructed similarly but involves interaction with only $k'$ probes sampled from the $k$ probes ($k' < k$, e.g., $k'/2$ turns) before the student answers the original question $Q$. A group of $G$ trajectories is generated for each scenario (Scenario A and Scenario B) per input $Q$.

\vspace{-2mm}
\subsubsection{Rule-Based Reward Design for \texttt{ExGRPO}}
Each of the $G$ generated trajectories receives a rule-based reward. Let $g$ be the trajectory index.

\vspace{-1mm}
\textbf{Outcome Reward ($R_{\text{outcome}}$):} The primary reward, assessing the correctness of the student's final answer to $Q$ for trajectory $g$.
\vspace{-1mm}
    {\small\begin{equation}
    \label{eq:outcome_reward}
    R_{\text{outcome}}^{(g)}(Q, A) = \begin{cases} 1, & \text{if student's final answer in trajectory } g \text{ to } Q \text{ matches ground-truth } A, \\ 0, & \text{otherwise.} \end{cases}
    \end{equation}} 
    This serves as the base reward: $R_{\text{base}}^{(g)} = R_{\text{outcome}}^{(g)}$.

\vspace{-1mm}
\textbf{Dialogue Structure Utility Bonus ($r_{\text{dsu}}$):} This bonus rewards the student if engaging in the full $k$-turn probing dialogue (Scenario A) leads to better overall outcomes than a partial $k'$-turn dialogue (Scenario B). Let $P_{\text{full}}$ be average $R_{\text{base}}$ for Scenario A trajectories, and $P_{\text{partial}}$ for Scenario B:
    {\small\begin{equation}
    \label{eq:r_dsu}
    r_{\text{dsu}} = \begin{cases} \delta, & \text{if } P_{\text{full}} > \nu \cdot P_{\text{partial}}, \\ 0, & \text{otherwise,} \end{cases}
    \end{equation}} 
    where $\delta > 0$ is the bonus value and $\nu \ge 1.0$ is a performance margin. This bonus encourages effective learning from the entire sequence of explanatory probes.

\textbf{Total Augmented Rule-Based Reward ($R_{\text{total}}$):} For trajectory $g$ from Scenario A:
    \vspace{-1mm}
    {\small\begin{equation}\label{eq:total_augmented_rule_reward}
    R_{\text{total}}^{(g)} =
    \begin{cases}
    R_{\text{base}}^{(g)} + r_{\text{dsu}}, & \text{if } R_{\text{outcome}}^{(g)} = 1 \text{ (from Scenario A) and \cref{eq:r_dsu} yields } \delta, \\
    R_{\text{base}}^{(g)}, & \text{otherwise.}
    \end{cases}
    \end{equation}} 

\vspace{-3mm}
\subsubsection{Advantage Computation and Policy Update}
Advantages $U^{(g)}$ are computed by normalizing $\{R_{\text{total}}^{(m)}\}_{m=1}^G$ within the group. The policy $\pi_{\theta}$ is updated using the \texttt{ExGRPO} objective, $\mathcal{L}_{\text{\texttt{ExGRPO}}}(\theta)$:
    {\small\begin{equation}
    \label{eq:ExGRPO_objective}
    \mathcal{L}_{\text{\texttt{ExGRPO}}}(\theta) = \mathbb{E}_{\text{traj}_g \sim \pi_{\theta_{\text{old}}}} \left[ \sum_{g=1}^{G} \min \left( \rho^{(g)}(\theta) U^{(g)}, \text{clip}(\rho^{(g)}(\theta), 1-\epsilon, 1+\epsilon) U^{(g)} \right) \right] - \beta \mathbb{D}_{\text{KL}}(\pi_{\theta} || \pi_{\text{ref}}),
    \end{equation}} 
    where $\pi_{\theta_{\text{old}}}$ is the policy before update, $\rho^{(g)}(\theta)$ is the probability ratio, $\epsilon$ is the clipping hyperparameter, $\pi_{\text{ref}}$ is a reference policy, and $\beta$ is the KL coefficient. We set the model after stage 1 as the reference.

\vspace{-1mm}
\subsubsection{Imitation-Based Policy Regularization:}
\vspace{-1mm}
To guide student reasoning during the EI turns, an auxiliary SFT loss, $\mathcal{L}_{\text{SFT-aux}}$, encourages imitation of teacher responses $R_{T,j}^{\text{aug}}$:
{\small\begin{equation}
\label{eq:sft_aux}
\mathcal{L}_{\text{SFT-aux}} = - \sum_{j=1}^{k} \log \pi_{\theta}(R_{T,j}^{\text{aug}} | Q_j^{\text{aug}}, \text{context}).
\end{equation}} 
This loss is combined with or alternated with $\mathcal{L}_{\text{\texttt{ExGRPO}}}$. In practice, we add more regularization losses for training stability. See details in Appendix~\ref{app:details}.

\begin{theorem}
\noindent\textbf{} 
Let $\pi_k$ and $\pi_{k'}$ be student policies trained with full ($k$-turn) and partial ($k' < k$) explanatory probe sequences, respectively. If the utility bonus $r_{\text{dsu}}$ is applied only when:
{\small\begin{equation}
\mathbb{E}_{\pi_k}[R_{\text{outcome}}(Q, A)] > \nu \cdot \mathbb{E}_{\pi_{k'}}[R_{\text{outcome}}(Q, A)] \quad \text{for some } \nu \ge 1.
\end{equation}} 
Then the ExGRPO policy update with clipped importance sampling ensures:
{\small\begin{equation}
\mathbb{E}_{\pi_{\text{new}}}[R_{\text{outcome}}(Q, A)] \ge \mathbb{E}_{\pi_k}[R_{\text{outcome}}(Q, A)],
\end{equation}} 
with strict inequality if $r_{\text{dsu}} > 0$ for any training instance. 
\end{theorem}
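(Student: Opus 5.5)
We treat the update as a single clipped-surrogate step in the style of the standard monotone-improvement arguments for trust-region and PPO-type methods, taking $\pi_{\theta_{\text{old}}}=\pi_k$. We write $J(\pi)=\mathbb{E}_{\pi}[R_{\text{outcome}}(Q,A)]$ and decompose the total reward of \cref{eq:total_augmented_rule_reward} as $R_{\text{total}}^{(g)}=R_{\text{outcome}}^{(g)}+r_{\text{dsu}}\,\mathbb{1}[R_{\text{outcome}}^{(g)}=1]$. The key observation is that the bonus is nonzero only on trajectories that already have outcome $1$, and only when the gating condition of the statement holds. The bonus is therefore a nonnegative multiple of the outcome indicator, and the ranking induced by $R_{\text{total}}$ inside a group coincides with the ranking induced by $R_{\text{outcome}}$.

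\textbf{Step 1 (sign preservation of advantages).} We first show that group normalization of $R_{\text{total}}$ gives advantages $U^{(g)}$ that are positive exactly on correct trajectories and negative on incorrect ones. With binary outcomes, if a fraction $p\in(0,1)$ of the group is correct, then a correct trajectory has $R_{\text{total}}=1+r_{\text{dsu}}$ and an incorrect one has $R_{\text{total}}=0$. Normalization yields $U^{+}=\sqrt{(1-p)/p}>0$ and $U^{-}=-\sqrt{p/(1-p)}<0$, independently of $r_{\text{dsu}}$ after standardization. For a mean-only baseline, the positive gap instead scales by $(1+r_{\text{dsu}})$. In either case $U^{(g)}$ is an affine increasing function of $R_{\text{outcome}}^{(g)}$, so $\mathbb{E}_{\pi_k}[\rho\,U]$ is, up to a positive constant $c\ge 1$, the standard surrogate advantage $L_{\pi_k}(\pi)=\mathbb{E}_{\pi_k}[\rho\,(R_{\text{outcome}}-b)]$.

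\textbf{Step 2 (surrogate improvement).} We invoke the performance-difference lemma. For single-step (bandit-style) outcome rewards it is exact: $J(\pi)-J(\pi_k)=\mathbb{E}_{\pi_k}[(\rho-1)(R_{\text{outcome}}-b)]$ for any constant baseline $b$. The clipped objective of \cref{eq:ExGRPO_objective} is a pessimistic lower bound on the unclipped surrogate, and it equals the surrogate at $\theta=\theta_{\text{old}}$, where its value is $0$ because advantages are centered. Since $\pi_{\text{new}}$ is the maximizer of this objective (KL term included, with $\pi_{\text{ref}}$ fixed), its objective value is at least the value at $\pi_k$. On the clipped region the min selects the unclipped term whenever the ratio moves against the advantage. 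It follows that $\mathbb{E}_{\pi_k}[(\rho_{\text{new}}-1)U]\ge 0$, up to the KL penalty. Dividing by $c$ gives $J(\pi_{\text{new}})\ge J(\pi_k)$.

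\textbf{Step 3 (strictness) and the main obstacle.} If $r_{\text{dsu}}=\delta>0$ on some instance, then by Step 1 under a mean baseline the advantage gap between correct and incorrect trajectories is enlarged by the factor $1+\delta$. The gradient of the surrogate at $\theta_{\text{old}}$ is then nonzero whenever $0<p<1$ and the policy class can shift mass toward correct trajectories. A sufficiently small step therefore increases the surrogate strictly, and by Step 2 it increases $J$ strictly. The gating condition $P_{\text{full}}>\nu P_{\text{partial}}\ge P_{\text{partial}}$ guarantees that $P_{\text{full}}>0$, so correct trajectories exist. We must additionally assume $P_{\text{full}}<1$, i.e.\ non-degenerate groups.

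The hardest part is controlling the KL penalty and the finite-sample, trust-region error, because the exact identity of Step 2 holds only in expectation over $\pi_k$. Our first attempt would be to state the result for the population objective with $\beta$ small, and to absorb the error through the standard bound $J(\pi)\ge J(\pi_k)+L_{\pi_k}(\pi)-C\,\max\mathrm{KL}(\pi_k\|\pi)$, where clipping keeps the KL term of order $\epsilon^2$ while $L$ is first order in the step. Strictness therefore holds for sufficiently small steps. We would state these regularity assumptions explicitly: the population objective, an exact maximizer, and non-degenerate groups.
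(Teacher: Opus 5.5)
Your route differs from the paper's, and its core is sharper. The paper shapes the return to $J_{\mathrm{tot}}=J_{\mathrm{out}}+\delta\,p_A$ and posits an additive advantage shift $+\delta$. It then appeals to PPO/TRPO conservative improvement for $J_{\mathrm{tot}}$ and drops the $\delta p_A$ terms using $p_A(\pi_{\text{new}})\ge p_A(\pi_k)$; that last deduction would actually need $p_A(\pi_{\text{new}})\le p_A(\pi_k)$. You instead note that within a group the bonus is a positive rescaling of the binary outcome, so under standardization it cancels outright. You then use the exact trajectory-level identity $\mathbb{E}_{\pi_k}[(\rho-1)(R_{\text{outcome}}-b)]=J(\pi)-J(\pi_k)$ together with ``clipped $\le$ unclipped''.

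That chain is valid only when the penalty vanishes at $\theta_{\text{old}}$, and this is your genuine gap. In \texttt{ExGRPO}, $\pi_{\text{ref}}$ is the post-SFT model, not $\pi_{\theta_{\text{old}}}=\pi_k$. Maximality of $\pi_{\text{new}}$ therefore only gives $L^{\mathrm{clip}}(\pi_{\text{new}})\ge\beta\,[\mathrm{KL}(\pi_{\text{new}}\|\pi_{\text{ref}})-\mathrm{KL}(\pi_k\|\pi_{\text{ref}})]$, which can be negative. So ``up to the KL penalty'' is exactly the missing step, not a technicality.

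A concrete failure: if every Scenario-A trajectory is correct, then $P_{\text{full}}=1>\nu P_{\text{partial}}$ can hold, so the bonus fires, yet $U\equiv 0$. The objective reduces to $-\beta\,\mathrm{KL}(\pi_\theta\|\pi_{\text{ref}})$, and the update moves toward $\pi_{\text{ref}}$, which may be less accurate than $\pi_k$. The statement as written is therefore false, and your non-degeneracy assumption is needed for the weak inequality too, not only for strictness.

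Your proposed patch does not close the gap. The Step 2 identity is exact, so there is no surrogate error for a TRPO-type bound to absorb. Clipping also does not keep $\mathrm{KL}(\pi_k\|\pi_{\text{new}})$ of order $\epsilon^2$: the clipped objective is flat once ratios leave $[1-\epsilon,1+\epsilon]$. In the tabular case, $\pi_k$ conditioned on a correct outcome already maximizes it. Finally, ``exact maximizer'' and ``sufficiently small step'' are two different regimes, and you use both.

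The clean repair is to assume $\beta=0$ or $\pi_{\text{ref}}=\pi_{\theta_{\text{old}}}$. The objective then equals $0$ at $\pi_k$, and maximality gives $L^{\mathrm{clip}}(\pi_{\text{new}})\ge\beta\,\mathrm{KL}(\pi_{\text{new}}\|\pi_{\text{ref}})\ge 0$. Hence $c\,(J(\pi_{\text{new}})-J(\pi_k))=\mathbb{E}_{\pi_k}[\rho_{\text{new}}U]\ge 0$. Strictness follows by comparing $\pi_{\text{new}}$ with a small step from $\pi_k$ (surrogate gain first order, KL second order), which makes the maximum positive when $0<p<1$ and the class can move mass.

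If you want $\beta>0$ with a fixed reference, compare instead with a maximizer $\hat\pi$ of $L^{\mathrm{clip}}$ alone that has $\mathrm{KL}(\hat\pi\|\pi_{\text{ref}})<\infty$. Then $L^{\mathrm{clip}}(\pi_{\text{new}})\ge M-\beta\,\mathrm{KL}(\hat\pi\|\pi_{\text{ref}})>0$ whenever $\beta<M/\mathrm{KL}(\hat\pi\|\pi_{\text{ref}})$, where $M=\max L^{\mathrm{clip}}>0$ by non-degeneracy. This instance-dependent condition is the honest form of ``$\beta$ small''.

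Two smaller issues remain. First, ``dividing by $c$'' presumes a single constant, but the per-group factors $c_Q=1/\sigma_Q$ (or $1+r_{\text{dsu},Q}$) differ across questions. With shared parameters, a nonnegative weighted sum of per-question gains does not give a nonnegative average gain, so state the result per question or for a policy class that decouples questions. Second, because the bonus cancels under standardization, strictness comes from $0<p<1$, not from $r_{\text{dsu}}>0$. The gate only supplies $P_{\text{full}}>0$, and the $(1+\delta)$ enlargement in your Step 3 plays no role.
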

The proof is provided in Appendix~\ref{app:proof}. With the derived theorem, we guarantee that learning from coherent and complete explanatory dialogues leads to performance improvements.

\vspace{-1mm}
\subsection{Inference Procedure}
\label{sec:inference}
\vspace{-1mm}

During inference, the trained student model $\pi_{\theta}$ generates reasoning and the final answer for an input $Q$ in a single pass. The multi-turn probing dialogues, random rule sampling, and reward computations (including $r_{\text{dsu}}$ logic) are training-time mechanisms for refinement.

\section{Experiments}

\vspace{-1mm}
\subsection{Experimental Setup}
\label{sec:experimental_setup}

\vspace{-1mm}
\underline{\textbf{Datasets.}}
For training, we follow \citep{chen-etal-2025-reverse} and utilize a diverse set of source datasets covering various reasoning tasks: \textit{Commonsense reasoning:} StrategyQA (SQA~\citep{geva2021did}), CommonsenseQA (CSQA~\citep{talmor2018commonsenseqa}), ARC-challenge (ARC~\citep{clark2018think}). \textit{Math reasoning:} MATH~\citep{hendrycks2021measuring}, GSM8K~\citep{cobbe2021training}. \textit{Tabular data reasoning:} TabMWP~\citep{lu2022dynamic}. \textit{Natural Language Inference:} ANLI~\citep{nie2019adversarial}. \textit{Logical Reasoning:} Date Understanding~\citep{srivastava2022beyond}. Our Explanatory Inversion (EI) augmentation process employs Gemini-1.5-Pro as the teacher model to generate responses to $N=10$ categories of EI probes (detailed in Appendix~\ref{app:ei_rules}). 
Statistics of the augmented dataset are provided in Appendix~\ref{app:data}. 

\vspace{-1mm}
\underline{\textbf{Models.}}
Our main experiments employ a \textit{weaker} Gemma-7b-it and a \textit{stronger} Qwen2.5-7b-instruct as student models. Following \citep{chen-etal-2025-reverse}, we compare our approach against these baselines. \textit{(1) Zero-shot:} We compare with the student's zero-shot performance~\citep{kojima2022large} as a reference.
\textit{(2) Knowledge distillation:} We compare with Symbolic Knowledge Distillation (SKD~\citep{li2023symbolic,west2021symbolic}),
Distilling Step-by-Step~\citep{hsieh2023distilling}, and \textcolor{black}{On-Policy Distillation}~\citep{agarwal2024policy}.
\textit{(3) Data augmentation:} This set of baselines uses various methods to augment the dataset while applying the same next-token prediction objective. We compare with: Question Rephrasing~\citep{yu2023metamath},
Question Augmentation~\citep{li2024common},
Answer Augmentation~\citep{yu2023metamath}, \textcolor{black}{Divide-or-Conquer}~\citep{wu2024divide}, and RevThink~\citep{chen-etal-2025-reverse}. Their detailed descriptions are given in Appendix~\ref{app:baseline}.

\vspace{-1mm}
\underline{\textbf{Evaluation Metrics \& Implementation Details.}}
The primary metric for evaluation across all tasks is accuracy. Specific values for all hyperparameters (\textit{e.g.}, learning rate) are detailed in Appendix~\ref{app:para}.

\subsection{Main Results: Overall Performance}

\begin{table*}[t]
  \centering
  \caption{Main results comparing our \texttt{ExGRPO} against zero-shot performance and various knowledge distillation and data augmentation baselines across eight held-in reasoning datasets for Qwen2.5-7B-Instruct and Gemma-7B-it student models. Teacher model performance is also shown for reference. \texttt{ExGRPO} shows consistent improvements. Scores are reported in percentage ($\%$). * indicates the score is quoted from RevThink~\citep{chen-etal-2025-reverse}.The contribution of each rule is studied in Appendix~\ref{app:each}.} 
  \label{tab:main_results}
  \resizebox{\textwidth}{!}{%
  \begin{NiceTabular}{lcccccccccc}
    \toprule
    \rowcolor{headergray}
    \textbf{Methods} & \textbf{SQA} & \textbf{CSQA} & \textbf{ARC-c} & \textbf{MATH} & \textbf{GSM8K} & \textbf{TabMWP} & \textbf{ANLI} & \textbf{Date} & \textbf{{Avg.}} \\
    \midrule
    \multicolumn{10}{c}{\textit{Gemini-1.5-Pro (Teacher Model)}} \\
    \midrule
    \rowcolor{rowgray}
    Zero-shot~\citep{kojima2022large} & 78.60 & 77.90 & 92.00 & 77.20 & 94.50 & 95.10 & 71.00 & 81.50 & 83.48 \\
    Zero-shot-EI [Ours] & 78.55\blue{0.05} & 78.78\red{0.88} & 96.82\red{4.82} & 80.59\red{3.39} & 93.51\blue{0.99} & 97.61\red{2.51} & 74.02\red{3.02} & 85.75\red{4.25} & 85.70\red{2.22} \\
    \midrule
    \multicolumn{10}{c}{\textit{Qwen2.5-7B-Instruct (Student Model)}} \\
    \midrule
    \rowcolor{rowgray}
    Zero-shot~\citep{kojima2022large} & 72.93 & 71.33 & 85.36 & 74.40 & 90.90 & 93.88 & 61.67 & 73.43 & 77.99 \\
    SKD~\citep{li2023symbolic,west2022symbolic} & 73.51\red{0.58} & 73.12\red{1.79} & 87.26\red{1.90} & 74.88\red{0.48} & 86.20\blue{4.70} & 94.17\red{0.29} & 62.73\red{1.06} & 75.10\red{1.67} & 78.01\red{0.02} \\
    \rowcolor{rowgray}
    Distill Step-by-Step~\citep{hsieh2023distilling} & 74.12\red{1.19} & 74.33\red{3.00} & 88.40\red{3.04} & 75.32\red{0.92} & 86.85\blue{4.05} & 94.76\red{0.88} & 63.58\red{1.91} & 76.42\red{2.99} & 78.65\red{0.66} \\
    Rephrase Question~\citep{yu2024metamath} & 75.26\red{2.33} & 75.18\red{3.85} & 89.05\red{3.69} & 75.64\red{1.24} & 87.21\blue{3.69} & 95.12\red{1.24} & 63.95\red{2.28} & 77.80\red{4.37} & 79.19\red{1.20} \\
    \rowcolor{rowgray}
    Question Aug~\citep{li2024common} & 75.70\red{2.77} & 76.40\red{5.07} & 89.64\red{4.28} & 75.97\red{1.57} & 87.60\blue{3.30} & 95.67\red{1.79} & 64.45\red{2.78} & 78.21\red{4.78} & 79.81\red{1.82} \\
    Answer Aug~\citep{yu2024metamath} & 76.20\red{3.27} & 77.36\red{6.03} & 90.10\red{4.74} & 76.18\red{1.78} & 87.92\blue{2.98} & 96.12\red{2.24} & 64.83\red{3.16} & 78.74\red{5.31} & 80.34\red{2.35} \\
    \rowcolor{rowgray}
    RevThink~\citep{chen-etal-2025-reverse} & 77.02\red{4.09} & 78.45\red{7.12} & 90.55\red{5.19} & 76.40\red{2.00} & 88.00\blue{2.90} & 96.73\red{2.85} & 65.10\red{3.43} & 79.01\red{5.58} & 80.89\red{2.90} \\
    \textcolor{black}{Divide-or-Conquer}~\citep{wu2024divide} & \textcolor{black}{77.15}\red{4.22} & \textcolor{black}{77.42}\red{6.09} & \textcolor{black}{90.12}\red{4.76} & \textcolor{black}{76.05}\red{1.65} & \textcolor{black}{88.43}\blue{2.47} & \textcolor{black}{95.62}\red{1.74} & \textcolor{black}{63.89}\red{2.22} & \textcolor{black}{74.92}\red{1.49} & \textcolor{black}{80.45}\red{2.46} \\
    \rowcolor{rowgray}
    \textcolor{black}{On-Policy Distillation}~\citep{agarwal2024policy} & \textcolor{black}{78.12}\red{5.19} & \textcolor{black}{79.05}\red{7.72} & \textcolor{black}{91.08}\red{5.72} & \textcolor{black}{76.24}\red{1.84} & \textcolor{black}{89.55}\blue{1.35} & \textcolor{black}{96.15}\red{2.27} & \textcolor{black}{64.18}\red{2.51} & \textcolor{black}{78.43}\red{5.00} & \textcolor{black}{81.60}\red{3.61} \\
    \texttt{ExGRPO} (Ours) & \textbf{79.04}\red{6.11} & \textbf{80.85}\red{9.52} & \textbf{91.45}\red{6.09} & \textbf{76.58}\red{2.18} & \textbf{91.46}\red{0.56} & \textbf{97.10}\red{3.22} & \textbf{64.00}\red{2.33} & \textbf{79.80}\red{6.37} & \textbf{82.54}\red{4.55} \\
    \rowcolor{rowgray}
    \textcolor{black}{\texttt{ExGRPO} (On-Policy Adapt.)} & \textcolor{black}{\textbf{79.62}}\red{6.69} & \textbf{\textcolor{black}{81.45}}\red{10.12} & \textbf{\textcolor{black}{91.78}}\red{6.42} & \textbf{\textcolor{black}{77.12}}\red{2.72} & \textbf{\textcolor{black}{91.85}}\red{0.95} & \textbf{\textcolor{black}{97.43}}\red{3.55} & \textbf{\textcolor{black}{64.42}}\red{2.75} & \textbf{\textcolor{black}{81.13}}\red{7.70} & \textbf{\textcolor{black}{83.10}}\red{5.11} \\
    \midrule
    \multicolumn{10}{c}{\textit{Gemma-7B-it (Student Model)}} \\
    \midrule
    \rowcolor{rowgray}
    Zero-shot~\citep{kojima2022large} & 56.33* & 66.26* & 68.34* & 8.58* & 41.09* & 55.67* & 37.92* & 40.24* & 46.80* \\
    SKD~\citep{li2023symbolic,west2022symbolic} & 56.77*\red{0.44} & 72.48*\red{6.22} & 73.29*\red{4.95} & 16.86*\red{8.28} & 52.24*\red{11.15} & 60.52*\red{4.85} & 45.42*\red{7.50} & 59.62*\red{19.38} & 54.65*\red{7.85} \\
    \rowcolor{rowgray}
    Distill Step-by-Step~\citep{hsieh2023distilling} & 56.77*\red{0.44} & 73.01*\red{6.75} & 72.92*\red{4.58} & 16.04*\red{7.46} & 51.88*\red{10.79} & 62.11*\red{6.44} & 44.23*\red{6.31} & 60.91*\red{20.67} & 54.73*\red{7.93} \\
    Rephrase Question~\citep{yu2024metamath} & 54.15*\blue{2.18} & 70.22*\red{3.96} & 72.37*\red{4.03} & 16.96*\red{8.38} & 53.07*\red{11.98} & 57.62*\red{1.95} & 43.07*\red{5.15} & 57.99*\red{17.75} & 53.18*\red{6.38} \\
    \rowcolor{rowgray}
    Question Aug~\citep{li2024common} & 55.10*\blue{1.23} & 68.11*\red{1.85} & 72.74*\red{4.40} & 17.76*\red{9.18} & 56.38*\red{15.29} & 63.16*\red{7.49} & 41.22*\red{3.30} & 59.83*\red{19.59} & 54.29*\red{7.49} \\
    Answer Aug~\citep{yu2024metamath} & 57.21*\red{0.88} & 73.01*\red{6.76} & 73.92*\red{5.58} & 18.92*\red{10.34} & 57.37*\red{16.28} & 65.93*\red{10.26} & 42.72*\red{4.80} & 64.14*\red{23.90} & 56.65*\red{9.85} \\
    \rowcolor{rowgray}
    RevThink~\citep{chen-etal-2025-reverse} & 64.19*\red{7.86} & 74.53*\red{8.27} & 75.09*\red{6.75} & 19.96*\red{11.38} & 57.21*\red{16.12} & 84.71*\red{29.04} & 47.36*\red{9.44} & 66.27*\red{26.03} & 61.17*\red{14.37}\\
    \textcolor{black}{Divide-or-Conquer}~\citep{wu2024divide} & \textcolor{black}{60.12}\red{3.79} & \textcolor{black}{70.15}\red{3.89} & \textcolor{black}{74.23}\red{5.89} & \textcolor{black}{18.15}\red{9.57} & \textcolor{black}{56.12}\red{15.03} & \textcolor{black}{75.18}\red{19.51} & \textcolor{black}{47.45}\red{9.53} & \textcolor{black}{69.16}\red{28.92} & \textcolor{black}{58.82}\red{12.02} \\
    \rowcolor{rowgray}
    \textcolor{black}{On-Policy Distillation}~\citep{agarwal2024policy} & \textcolor{black}{66.23}\red{9.90} & \textcolor{black}{74.12}\red{7.86} & \textcolor{black}{78.15}\red{9.81} & \textcolor{black}{22.34}\red{13.76} & \textcolor{black}{60.45}\red{19.36} & \textcolor{black}{86.21}\red{30.54} & \textcolor{black}{52.18}\red{14.26} & \textcolor{black}{83.52}\red{43.28} & \textcolor{black}{65.40}\red{18.60} \\
    \texttt{ExGRPO} (Ours) & \textbf{69.43}\red{13.10} & \textbf{76.82}\red{10.56} & \textbf{79.94}\red{11.60} & \textbf{25.82}\red{17.24} & \textbf{65.27}\red{24.18} & \textbf{90.55}\red{34.88} & \textbf{55.75}\red{17.83} & \textbf{73.96}\red{33.72} & \textbf{67.19}\red{20.39} \\
    \rowcolor{rowgray}
    \textcolor{black}{\texttt{ExGRPO} (On-Policy Adapt.)} & \textbf{\textcolor{black}{70.65}}\red{14.32} & \textbf{\textcolor{black}{77.42}}\red{11.16} & \textbf{\textcolor{black}{80.67}}\red{12.33} & \textbf{\textcolor{black}{26.45}}\red{17.87} & \textbf{\textcolor{black}{66.12}}\red{25.03} & \textbf{\textcolor{black}{91.23}}\red{35.56} & \textbf{\textcolor{black}{56.34}}\red{18.42} & \textbf{\textcolor{black}{75.52}}\red{35.28} & \textbf{\textcolor{black}{68.05}}\red{21.25} \\
    \bottomrule
  \end{NiceTabular}%
  }
  \vspace{-4mm}
\end{table*}

\cref{tab:main_results} presents the main experimental results
Our \texttt{ExGRPO} method demonstrates a clear and consistent advantage, achieving an average accuracy of \textbf{82.54\%} for Qwen and \textbf{67.19\%} for Gemma. This surpasses all baselines. \texttt{ExGRPO} outperforms standard knowledge distillation techniques such as Distill Step-by-Step, as well as strong data augmentation strategies such as RevThink. The impact of \texttt{ExGRPO} is particularly notable when contrasted across student models of \textbf{varying initial strengths}. For the stronger Qwen model, \texttt{ExGRPO} yields a significant average improvement of \textbf{+4.55\%} over its zero-shot performance. For the initially weaker Gemma model, \texttt{ExGRPO} delivers a remarkable \textbf{+20.39\%} average improvement, showing its effectiveness in enhancing reasoning potential, especially for less capable base models. Notably, on the GSM8K dataset, where the strong Qwen student model already has a strong performance, \texttt{ExGRPO} is the only method that yields positive transfer. This \textbf{consistent pattern} of strong improvement across both student models and individual datasets highlights the robustness and broad applicability of our approach. Interestingly, our EI-based data augmentation also improved the Gemini-1.5-Pro teacher's zero-shot performance by \textbf{+2.22\%} (85.71\% ``Zero-shot-EI'' vs. 83.48\% zero-shot), suggesting EI can beneficially structure reasoning tasks even for powerful models (We further show the improvement of each EI logic in Appendix~\ref{app:ei_rules}). The best student performance (Qwen with \texttt{ExGRPO} at \textbf{82.54\%}) approaches the teacher's zero-shot capability, demonstrating effective knowledge transfer. In summary, \texttt{ExGRPO}, by probing and refining reasoning pathways via EI and our novel RL strategy, achieves state-of-the-art results among evaluated methods, significantly enhancing distilled LLM reasoning across different capabilities.

\vspace{-3mm}
\subsection{\textcolor{black}{Ablation} Study}
\label{sec:ablation_results}
\vspace{-2mm}

\cref{tab:ablation_rl} presents our ablation study on the impact of SFT warm-up and key RL components within \texttt{ExGRPO}, using the same EI-augmented datasets for all variants. The results show several critical factors. \underline{Firstly}, different from post-training, most of the time \textbf{SFT is useful for distillation}; RL from a cold start with only $R_{\text{base}}$ results in catastrophic performance degradation for both Qwen (avg. \textbf{-62.00\%} drop from zero-shot) and Gemma (avg. \textbf{-34.46\%} drop from zero-shot). However, \textbf{vanilla or insufficient SFT does not always improve the student model}, especially when the model is strong, as shown for the GSM8K dataset using Qwen2.5-7b as the student. \underline{Secondly}, $\mathcal{L}_{\text{SFT-aux}}$, \textbf{the imitation-based policy regularization, proves highly beneficial}. Adding $\mathcal{L}_{\text{SFT-aux}}$ to cold-start RL recovers significant performance, bringing avg. \textbf{+9.53\%} gain to Qwen and \textbf{+31.15\%} gain to Gemma-7B. When combined with a 3-epoch SFT warm-up, $\mathcal{L}_{\text{SFT-aux}}$ further lifts students' performance. \underline{Finally}, the introduction of \textbf{$r_{\text{dsu}}$ provides the crucial boost}, elevating Qwen2.5-7B to {82.54\%} (an additional \textbf{+1.31\%}) and Gemma-7B to {67.19\%} (an additional \textbf{+5.39\%}). This demonstrates that while SFT warm-up and $\mathcal{L}_{\text{SFT-aux}}$ are vital for stabilizing RL and guiding intermediate reasoning, the $r_{\text{dsu}}$ component uniquely incentivizes coherent processing of the entire explanatory dialogue. A post-hoc analysis of each individual EI rule is given in Appendix~\ref{app:each}.

\vspace{-2mm}
\subsection{Training Dynamics: Reward Evolution}
\label{sec:reward_dynamics}
\vspace{-1mm}

\begin{wraptable}[13]{r}{0.5\textwidth}
  \centering
  \vspace{-4mm}
  \caption{OOD generalization on four held-out datasets. \texttt{ExGRPO} significantly improves generalization across both Qwen2.5-7B and Gemma-7B.}
  \label{tab:held_out_performance}
  \scalebox{0.57}{
  \begin{NiceTabular}{lcccccc}
    \toprule
    \rowcolor{headergray}
    \textbf{Method} & \textbf{BoolQ} & \textbf{Openbook} & \textbf{e-SNLI} & \textbf{GSM8K-Rev} & \textbf{Avg.} \\
    \midrule
    \multicolumn{6}{c}{\textit{Qwen2.5-7B-Instruct (student)}} \\  
    \midrule
    \rowcolor{rowgray}
    Zero-shot & 71.25 & 77.80 & 69.33 & 75.10 & 73.87 \\
    SKD       & 74.62\red{3.37} & 79.80\red{2.00} & 71.20\red{1.87} & 78.45\red{3.35} & 76.52\red{2.65} \\
    \rowcolor{rowgray}
    AnsAug    & 75.53\red{4.28} & 80.30\red{2.50} & 70.80\red{1.47} & 79.62\red{4.52} & 76.56\red{2.69} \\
    RevThink  & 77.10\red{5.85} & 82.85\red{5.05} & 73.91\red{4.58} & 82.40\red{7.30} & 79.06\red{5.19} \\
    \rowcolor{rowgray}
    \texttt{ExGRPO} & \textbf{80.30}\red{9.05} & \textbf{86.41}\red{8.61} & \textbf{78.44}\red{9.11} & \textbf{86.22}\red{11.12} & \textbf{82.34}\red{8.47} \\
    \midrule
    \multicolumn{6}{c}{\textit{Gemma-7B-it} (student)} \\
    \midrule
    \rowcolor{rowgray}
    Zero-shot & 53.18 & 70.20 & 52.27 & 17.37 & 48.76 \\
    SKD & 58.65\red{5.47} & 73.12\red{2.92} & 54.40\red{2.13} & 26.05\red{8.68} & 53.56\red{4.80} \\
    \rowcolor{rowgray}
    AnsAug & 59.42\red{6.24} & 73.80\red{3.60} & 53.75\red{1.48} & 26.90\red{9.53} & 53.97\red{5.21} \\
    RevThink & 61.85\red{8.67} & 77.20\red{7.00} & 58.30\red{6.03} & 30.12\red{12.75} & 56.87\red{8.11} \\
    \rowcolor{rowgray}
    \texttt{ExGRPO} & \textbf{66.23}\red{13.05} & \textbf{80.70}\red{10.50} & \textbf{63.21}\red{10.94} & \textbf{34.88}\red{17.51} & \textbf{61.76}\red{13.00} \\
    \bottomrule
  \end{NiceTabular}}
  \vspace{-3mm}
\end{wraptable}
\cref{fig:reward_evolution} depicts the training dynamics of key reward components within our \texttt{ExGRPO} framework. The average base outcome reward ($R_{\text{base}}$), reflecting final answer correctness, steadily increases from approximately \textbf{0.6} to stabilize near \textbf{0.8} after around \textbf{0.3 epochs}, indicating the student model's improving ability to solve the primary task. Concurrently, $r_{\text{dsu}}$, which rewards effective use of the multi-turn explanatory dialogue, exhibits a similar upward trend and stabilization. This synchronized improvement suggests that the \texttt{ExGRPO} policy successfully learns to achieve correct outcomes, potentially by leveraging the structured interactions with explanatory probes to foster deeper reasoning.

\begin{table*}[t]
\centering
\caption{Ablation study on the impact of SFT warm-up training and RL components for \texttt{ExGRPO}. We evaluate performance across eight reasoning datasets. All model variants are trained using the same augmented datasets with EI probes. RL without $r_{\text{dsu}}$ treats each EI probe as an independent training sample without grouping across reasoning paths.}
\label{tab:ablation_rl}
\resizebox{\textwidth}{!}{%
\begin{NiceTabular}{@{}lccccccccc@{}}
\toprule
\rowcolor{headergray}
\textbf{Model Configuration} & \textbf{SQA} & \textbf{CSQA} & \textbf{ARC-c} & \textbf{MATH} & \textbf{GSM8K} & \textbf{TabMWP} & \textbf{ANLI} & \textbf{Date} & \textbf{Avg.} \\
\midrule
\multicolumn{10}{c}{\textit{Qwen2.5-7B-Instruct (Student Model)}} \\
\midrule
\rowcolor{rowgray}
Zero-shot & 72.93 & 71.33 & 85.36 & 74.40 & 90.90 & 93.88 & 61.67 & 73.43 & 77.99 \\
SFT (1 epoch only) & 73.42\red{0.49} & 72.10\red{0.77} & 85.90\red{0.54} & 74.80\red{0.40} & 86.55\blue{4.35} & 94.12\red{0.24} & 62.10\red{0.43} & 74.02\red{0.59} & 77.88\blue{0.11} \\
\rowcolor{rowgray}
SFT (3 epochs only) & 76.40\red{3.47} & 74.50\red{3.17} & 89.30\red{3.94} & 74.65\red{0.25} & 86.60\blue{4.30} & 94.25\red{0.37} & 62.75\red{1.08} & 76.90\red{3.47} & 79.17\red{1.18} \\
RL (cold start, $R_{\text{base}}$) & 10.20\blue{62.73} & 15.80\blue{55.53} & 18.60\blue{66.76} & 5.10\blue{69.30} & 20.90\blue{70.00} & 22.30\blue{71.58} & 13.40\blue{48.27} & 18.60\blue{54.83} & 15.99\blue{62.00} \\
\rowcolor{rowgray}
RL (cold start, $R_{\text{base}}$) + $\mathcal{L}_{\text{SFT-aux}}$ & 63.10\blue{9.83} & 62.45\blue{8.88} & 79.40\blue{5.96} & 66.55\blue{7.85} & 85.90\blue{5.00} & 88.95\blue{4.93} & 54.80\blue{6.87} & 67.10\blue{6.33} & 71.53\blue{6.46} \\
SFT (1 ep) + RL ($R_{\text{base}}$) & 71.90\blue{1.03} & 70.20\blue{1.13} & 84.10\blue{1.26} & 73.20\blue{1.20} & 89.50\blue{1.40} & 93.60\blue{0.28} & 60.80\blue{0.87} & 72.20\blue{1.23} & 76.69\blue{1.30} \\
\rowcolor{rowgray}
SFT (3 ep) + RL ($R_{\text{base}}$) & 76.21\red{3.28} & 78.60\red{7.27} & 89.22\red{3.86} & 74.10\blue{0.30} & 90.40\blue{0.50} & 96.40\red{2.52} & 63.20\red{1.53} & 76.80\red{3.37} & 80.13\red{2.14} \\
SFT (3 ep) + RL ($R_{\text{base}}$) + $\mathcal{L}_{\text{SFT-aux}}$ & 77.30\red{4.37} & 79.50\red{8.17} & 90.10\red{4.74} & 75.40\red{1.00} & 90.80\blue{0.10} & 96.85\red{2.97} & 63.75\red{2.08} & 78.10\red{4.67} & 81.23\red{3.24} \\
\rowcolor{rowgray}
SFT (3 ep) + \texttt{ExGRPO} ($R_{\text{base}} + r_{\text{dsu}}$) + $\mathcal{L}_{\text{SFT-aux}}$ & \textbf{79.04}\red{6.11} & \textbf{80.85}\red{9.52} & \textbf{91.45}\red{6.09} & \textbf{76.58}\red{2.18} & \textbf{91.46}\red{0.56} & \textbf{97.10}\red{3.22} & \textbf{64.00}\red{2.33} & \textbf{79.80}\red{6.37} & \textbf{82.54}\red{4.55} \\
\midrule
\multicolumn{10}{c}{\textit{Gemma-7B-it (Student Model)}} \\
\midrule
\rowcolor{rowgray}
Zero-shot & 56.33 & 66.26 & 68.34 & 8.58 & 41.09 & 55.67 & 37.92 & 40.24 & 46.80 \\
SFT (1 epoch only) & 53.90\blue{-2.43} & 64.13\blue{-2.13} & 66.06\blue{-2.28} & 6.36\blue{-2.22} & 37.35\blue{-3.74} & 53.33\blue{-2.34} & 35.45\blue{-2.47} & 38.20\blue{-2.04} & 44.35\blue{-2.45} \\
\rowcolor{rowgray}
SFT (3 epochs only) & 63.93\red{+7.60} & 70.83\red{+4.57} & 73.83\red{+5.49} & 15.89\red{+7.31} & 37.74\blue{-3.35} & 63.27\red{+7.60} & 40.64\red{+10.72} & 57.30\red{+17.06} & 52.93\red{+7.3} \\
RL (cold start, $R_{\text{base}}$) & 9.10\blue{47.23} & 12.40\blue{53.86} & 14.20\blue{54.14} & 3.80\blue{4.78} & 13.75\blue{27.34} & 18.20\blue{37.47} & 11.90\blue{26.02} & 15.33\blue{24.91} & 12.34\blue{34.46} \\
\rowcolor{rowgray}
RL (cold start, $R_{\text{base}}$) + $\mathcal{L}_{\text{SFT-aux}}$ & 50.10\blue{6.23} & 62.40\blue{3.86} & 65.30\blue{3.04} & 7.20\blue{1.38} & 38.60\blue{2.49} & 52.90\blue{2.77} & 35.60\blue{2.32} & 38.80\blue{1.44} & 43.49\blue{3.31} \\
SFT (1 ep) + RL ($R_{\text{base}}$) & 54.80\blue{1.53} & 64.00\blue{2.26} & 67.21\blue{1.13} & 7.81\blue{0.77} & 39.60\blue{1.49} & 55.44\blue{0.23} & 36.10\blue{1.82} & 39.30\blue{0.94} & 45.79\blue{1.01} \\
\rowcolor{rowgray}
SFT (3 ep) + RL ($R_{\text{base}}$) & 66.11\red{9.78} & 74.12\red{7.86} & 76.43\red{8.09} & 22.20\red{13.62} & 40.70\red{0.39} & 86.30\red{30.63} & 52.60\red{14.68} & 69.01\red{28.77} & 58.94\red{12.14} \\
SFT (3 ep) + RL ($R_{\text{base}}$) + $\mathcal{L}_{\text{SFT-aux}}$ & 67.70\red{11.37} & 75.01\red{8.75} & 77.80\red{9.46} & 23.95\red{15.37} & 43.00\red{1.91} & 88.00\red{32.33} & 53.40\red{15.48} & 71.50\red{31.26} & 61.80\red{14.99} \\
\rowcolor{rowgray}
SFT (3 ep) + \texttt{ExGRPO} ($R_{\text{base}} + r_{\text{dsu}}$) + $\mathcal{L}_{\text{SFT-aux}}$ & \textbf{69.43}\red{13.10} & \textbf{76.82}\red{10.56} & \textbf{79.94}\red{11.60} & \textbf{25.82}\red{17.24} & \textbf{65.27}\red{24.18} & \textbf{90.55}\red{34.88} & \textbf{55.75}\red{17.83} & \textbf{73.96}\red{33.72} & \textbf{67.19}\red{20.39} \\
\bottomrule
\end{NiceTabular}%
}
\vspace{-1mm}
\end{table*}

\begin{figure}[t]
    \centering
    \includegraphics[trim={1cm 0 1cm 0}, width=0.91\linewidth]{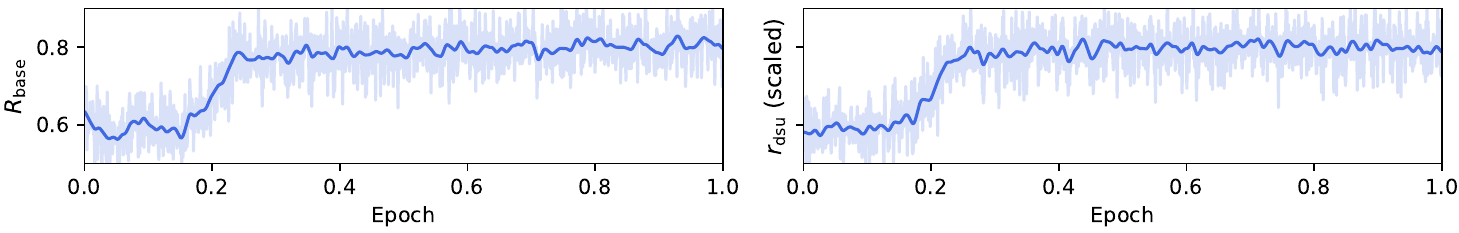}
    \vspace{-4mm}
    \caption{RL training curves with Gemma as student. Evolution of key reward components during \texttt{ExGRPO} training. \underline{Left}: $R_{base}$. \underline{Right}: $r_{dsu}$, scaled for visualization.}
    \label{fig:reward_evolution}
    \vspace{-4mm}
\end{figure}

\vspace{-3mm}
\subsection{Generalization to Out-of-Distribution Datasets}
\vspace{-1mm}

We further evaluate \texttt{ExGRPO}'s ability to generalize to datasets unseen during distillation following~\citet{chen-etal-2025-reverse}, a critical indicator of robust reasoning. \cref{tab:held_out_performance} compares OOD performance against baselines on four held-out datasets: BoolQ~\citep{clark2019boolq} (trained on StrategyQA), OpenbookQA~\citep{mihaylov2018can} (trained on ARC-c), e-SNLI~\citep{camburu2018snli} (trained on ANLI), and GSM8K-Reversal~\citep{guo2024exploring} (trained on GSM8K).

Our \texttt{ExGRPO} method demonstrates superior generalization. For the Gemma student model, \texttt{ExGRPO} outperforms RevThink by \textbf{+4.89\%} on average, with consistent gains across all four OOD tasks
Notably, for the stronger Qwen student, \texttt{ExGRPO} surpasses RevThink by a significant margin of \textbf{+3.28\%} on average. This includes strong individual gains such as
\textbf{+3.82\%} on GSM8K-Reversal.
These results suggest that \texttt{ExGRPO}'s approach of probing and refining reasoning pathways
improves the generalizability of learned reasoning skills to unseen domains.


\begin{figure}[t]
    \centering
    \includegraphics[trim={1cm 0 1cm 0}, width=0.96\linewidth]{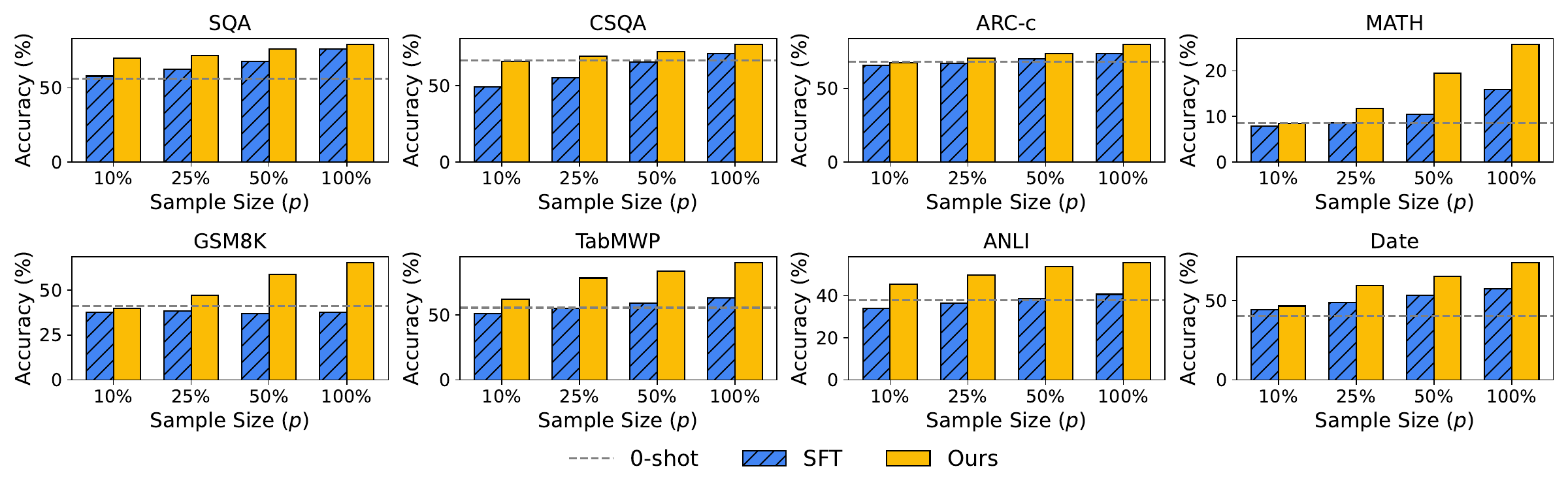}
    \vspace{-2mm}
    \caption{Sample efficiency comparison on eight datasets. Our \texttt{ExGRPO} method achieves higher accuracy than standard SFT across all training data fractions ($p\in \{0.1,0.25,0.5,1.0\}$), often surpassing SFT trained on the full dataset with only $10-25\%$ of the data.}
    \label{fig:sample}
    \vspace{-2mm}
\end{figure}

\subsection{Efficiency Study}

\textbf{Data Sample Efficiency.} \cref{fig:sample} illustrates the data sample efficiency of our \texttt{ExGRPO} compared to SFT and the student model's zero-shot performance, evaluated across eight reasoning datasets using varying percentages ($p$) of the available training data. \texttt{ExGRPO} consistently demonstrates superior sample efficiency. Across all datasets and at every data fraction, our method achieves higher accuracy than SFT. Notably, 
for instance, on SQA and CSQA, \texttt{ExGRPO} with just \textbf{10\%} of the data already outperforms SFT with 100\% of the data. 
This trend is also evident in datasets.
This signifies that our approach not only achieves higher peak performance but also learns more effectively from limited data augmented through EI, making it a more efficient distillation strategy.

\begin{wrapfigure}[9]{r}{0.4\textwidth}
\vspace{-7mm}
    \centering
    \includegraphics[trim={1cm 0 1cm 0}, width=0.9\linewidth]{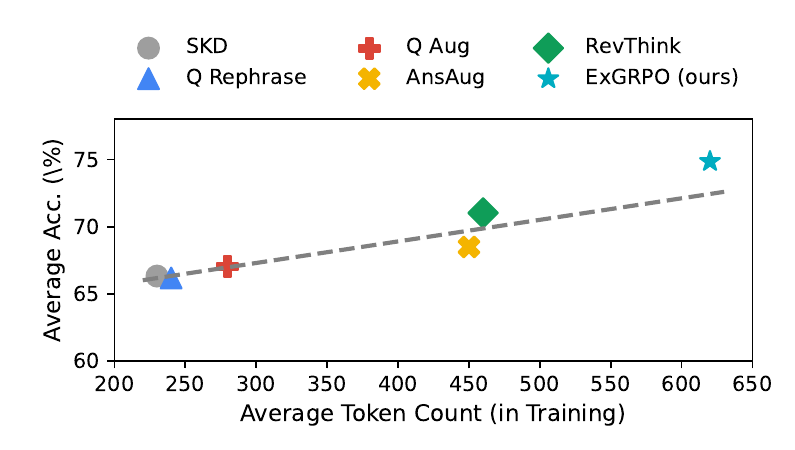}
    \vspace{-3mm}
    \caption{Average accuracy \textit{v.s.} average training token count. The dashed line shows the regression over the baselines. 
    } 
    \label{fig:token}
    \vspace{-4mm}
\end{wrapfigure}
\textbf{Token Efficiency.}
\cref{fig:token} compares the average test-time accuracy against the average token count per sample used during training for \texttt{ExGRPO} and baseline methods. \texttt{ExGRPO} generates a comparable number of tokens across baselines during inference.
Notably, \texttt{ExGRPO} significantly outperforms the general trendline established by the baseline methods, indicating a more effective use of training tokens for performance gain, 
suggesting that the richer interactions facilitated by EI and our RL framework lead to more impactful learning per token.
\vspace{-2mm}

\subsection{Case Study}
\vspace{-1mm}
\cref{fig:case} illustrates an example from the MATH dataset requiring continuity at $x=-2$ and $x=2$. 
While the ground truth solution yields $a=-3$, $b=3$, and $a+b=0$, the base model (Gemma-7b-it) 
fails by repeating template-like ``limit'' statements without enforcing the necessary equalities. 
This reflects \textit{pattern following} rather than true constraint reasoning. 
By contrast, EI probes reformulate the task into explanation-seeking questions 
(e.g., ``Which two pieces must be equal at $x=2$?''). These encourage explicit matching of adjacent 
pieces, guiding both the teacher and the student to recover the correct logic and solution. 
Moreover, with ExGRPO, rewards are tied not only to final correctness but also to maintaining 
a coherent reasoning trajectory across probes. This enforces alignment to structural steps, 
helping the student generalize beyond rote imitation.


\subsection{\textcolor{black}{Compatibility with Open-Source Teacher Model}}


\begin{figure}[t]
\centering
\setlength{\tabcolsep}{0pt}

\begin{tabular}{@{}p{0.46\textwidth}@{\hspace{0.02\textwidth}}p{0.52\textwidth}@{}}

\begin{minipage}[t]{\linewidth}\vspace{3pt}
\centering
\includegraphics[width=\linewidth,clip]{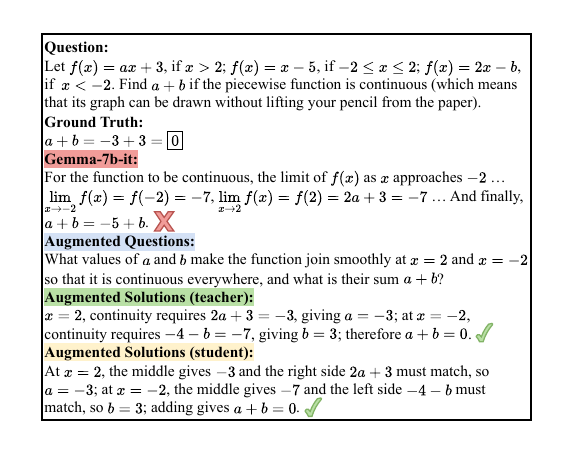}
\caption{An example in dataset MATH.}
\label{fig:case}
\end{minipage}

&

\begin{minipage}[t]{\linewidth}\vspace{0pt}
\centering
\caption{\textcolor{black}{Distillation performance using \texttt{ExGRPO} with Llama-3-70B-Instruct and Gemini-1.5-Pro teachers.}}
\label{tab:llama70b_wrap}
\vspace{-2mm}
\resizebox{\linewidth}{!}{
\begin{NiceTabular}{lcccccc}
\toprule
\rowcolor{headergray}
\textbf{Model Setting} & \textbf{Teacher} & \textbf{SQA} & \textbf{CSQA} & \textbf{ARC-c} & \textbf{GSM8K} & \textbf{Avg} \\
\midrule
\multicolumn{7}{c}{\textit{Teacher Models}} \\
\midrule
\rowcolor{rowgray}
Teacher (Zero-shot) & Llama-3-70B & 76.8 & 80.8 & 90.5 & 92.0 & 85.0 \\
Teacher (Zero-shot) & Gemini-1.5-Pro & 78.6 & 77.9 & 92.0 & 94.5 & 85.8 \\
\midrule
\multicolumn{7}{c}{\textit{Qwen2.5-7B (Student Model)}} \\
\midrule
\rowcolor{rowgray}
Zero-shot (Baseline) & N/A & 72.9 & 71.3 & 85.4 & 90.9 & 77.9 \\
ExGRPO (Ours) & Llama-3-70B & 77.2 & 79.4 & 90.1 & 91.2 & 80.9 \\
\rowcolor{rowgray}
ExGRPO (Ours) & Gemini-1.5-Pro & \textbf{79.0} & \textbf{80.9} & \textbf{91.5} & \textbf{91.5} & \textbf{82.5} \\
\midrule
\multicolumn{7}{c}{\textit{Gemma-7B (Student Model)}} \\
\midrule
\rowcolor{rowgray}
Zero-shot (Baseline) & N/A & 56.3 & 66.3 & 68.3 & 41.1 & 46.8 \\
ExGRPO (Ours) & Llama-3-70B & 67.5 & 75.1 & 78.2 & 63.4 & 65.3 \\
\rowcolor{rowgray}
ExGRPO (Ours) & Gemini-1.5-Pro & \textbf{69.4} & \textbf{76.8} & \textbf{79.9} & \textbf{65.3} & \textbf{67.2} \\
\bottomrule
\end{NiceTabular}
}
\end{minipage}

\end{tabular}
\end{figure}

\textcolor{black}{
A common question raised was the reliance on the proprietary Gemini-1.5-Pro teacher. To address this concern, we conducted an additional set of experiments using the fully open-source Meta-Llama-3-70B-Instruct model as the teacher. As shown in Table~\ref{tab:llama70b_wrap}, ExGRPO remains highly effective even under this alternative setup. When distilling into Gemma-7B, ExGRPO with the Llama-3 teacher yields an average improvement of +18.5\% over the zero-shot baseline, demonstrating that the method is robust and continues to provide substantial gains regardless of teacher choice. Although Llama-3 is slightly weaker than Gemini-1.5 on some challenging reasoning tasks, it performs competitively on ARC-c and GSM8K, resulting in distilled student performance that is very close to the Gemini-based results (typically within 1--2\%). This confirms that the gains arise primarily from the ExGRPO training algorithm rather than from any specific advantage of the proprietary teacher model. Overall, these findings show that ExGRPO is reproducible, scalable, and practical even when using widely accessible open-weight models.}

\vspace{-2mm}
\section{Conclusion}
\vspace{-2mm}
We introduce \texttt{ExGRPO}, a novel distillation framework that enhances student LLM reasoning by integrating Explanatory Inversion (EI) with a tailored rule-based reinforcement learning algorithm. By using EI to generate diverse ``explanatory probes'' and leveraging a Dialogue Structure Utility Bonus, \texttt{ExGRPO} fosters deeper engagement with the reasoning process beyond simple imitation. Our experiments confirm substantial improvements in student model performance, sample efficiency, and OOD generalization, marking a promising step towards more capable and reliable distilled LLMs.

\section*{Acknowledgement}
This research was funded by the National Science Foundation (NSF) under grant III-2229461. The views and conclusions contained in this document are those of the
authors and should not be interpreted as representing official policies, either expressed or implied, of NSF.

\newpage
\section*{Ethics Statement}

The primary contribution of this work is the development of methods to create more computationally efficient and robust reasoning language models. The positive societal impact of this research includes enhancing the accessibility of advanced AI capabilities. By enabling smaller models to perform complex reasoning, our work can lower the computational and financial barriers to entry, allowing for wider use in beneficial applications such as education and scientific research, particularly in resource-constrained environments. Furthermore, the use of more efficient models can contribute to reducing the overall energy consumption associated with AI technologies.

However, we also recognize potential risks. A core ethical consideration is the propagation of biases from the teacher model to the student. As with all distillation methods, any societal biases, factual inaccuracies, or harmful stereotypes present in the teacher model can be inherited and potentially amplified by the student. Additionally, the creation of more powerful and accessible reasoning models raises concerns about potential misuse, such as the large-scale generation of sophisticated misinformation or deceptive content.

To mitigate these risks, we advocate for continued research into bias detection and mitigation techniques specifically tailored for distilled models. We also recommend that practitioners deploying models trained with our method conduct thorough evaluations for fairness and safety in their specific use cases. We believe that transparently acknowledging these limitations and encouraging responsible development practices are crucial steps toward ensuring the positive impact of this line of research.



\bibliographystyle{iclr2026_conference}
\bibliography{custom}

@article{wei2022chain,
  title={Chain-of-thought prompting elicits reasoning in large language models},
  author={Wei, Jason and Wang, Xuezhi and Schuurmans, Dale and Bosma, Maarten and Xia, Fei and Chi, Ed and Le, Quoc V and Zhou, Denny and others},
  journal={Advances in neural information processing systems},
  volume={35},
  pages={24824--24837},
  year={2022}
}

@article{kojima2022large,
  title={Large language models are zero-shot reasoners},
  author={Kojima, Takeshi and Gu, Shixiang Shane and Reid, Machel and Matsuo, Yutaka and Iwasawa, Yusuke},
  journal={Advances in neural information processing systems},
  volume={35},
  pages={22199--22213},
  year={2022}
}

@article{rafailov2023direct,
  title={Direct preference optimization: Your language model is secretly a reward model},
  author={Rafailov, Rafael and Sharma, Archit and Mitchell, Eric and Manning, Christopher D and Ermon, Stefano and Finn, Chelsea},
  journal={Advances in Neural Information Processing Systems},
  volume={36},
  pages={53728--53741},
  year={2023}
}

@incollection{ausubel1966meaningful,
  title={Meaningful reception learning and the acquisition of concepts},
  author={Ausubel, David P},
  booktitle={Analyses of concept learning},
  pages={157--175},
  year={1966},
  publisher={Elsevier}
}

@article{hinton2015distilling,
  title={Distilling the knowledge in a neural network},
  author={Hinton, Geoffrey and Vinyals, Oriol and Dean, Jeff},
  journal={arXiv preprint arXiv:1503.02531},
  year={2015}
}

@inproceedings{li2023symbolic,
  title={Symbolic Chain-of-Thought Distillation: Small Models Can Also “Think” Step-by-Step},
  author={Li, Liunian Harold and Hessel, Jack and Yu, Youngjae and Ren, Xiang and Chang, Kai-Wei and Choi, Yejin},
  booktitle={Proceedings of the 61st Annual Meeting of the Association for Computational Linguistics (Volume 1: Long Papers)},
  pages={2665--2679},
  year={2023}
}

@article{li2025small,
  title={Small models struggle to learn from strong reasoners},
  author={Li, Yuetai and Yue, Xiang and Xu, Zhangchen and Jiang, Fengqing and Niu, Luyao and Lin, Bill Yuchen and Ramasubramanian, Bhaskar and Poovendran, Radha},
  journal={arXiv preprint arXiv:2502.12143},
  year={2025}
}

@article{magister2022teaching,
  title={Teaching small language models to reason},
  author={Magister, Lucie Charlotte and Mallinson, Jonathan and Adamek, Jakub and Malmi, Eric and Severyn, Aliaksei},
  journal={arXiv preprint arXiv:2212.08410},
  year={2022}
}

@article{berglund2023reversal,
  title={The Reversal Curse: LLMs trained on" A is B" fail to learn" B is A"},
  author={Berglund, Lukas and Tong, Meg and Kaufmann, Max and Balesni, Mikita and Stickland, Asa Cooper and Korbak, Tomasz and Evans, Owain},
  journal={arXiv preprint arXiv:2309.12288},
  year={2023}
}

@inproceedings{guo2024exploring,
  title={Exploring Reversal Mathematical Reasoning Ability for Large Language Models},
  author={Guo, Pei and You, Wangjie and Li, Juntao and Bowen, Yan and Zhang, Min},
  booktitle={Findings of the Association for Computational Linguistics ACL 2024},
  pages={13671--13685},
  year={2024}
}

@article{yang2024survey,
  title={Survey on knowledge distillation for large language models: methods, evaluation, and application},
  author={Yang, Chuanpeng and Zhu, Yao and Lu, Wang and Wang, Yidong and Chen, Qian and Gao, Chenlong and Yan, Bingjie and Chen, Yiqiang},
  journal={ACM Transactions on Intelligent Systems and Technology},
  year={2024},
  publisher={ACM New York, NY}
}

@inproceedings{yang2024self,
  title={Self-Distillation Bridges Distribution Gap in Language Model Fine-Tuning},
  author={Yang, Zhaorui and Pang, Tianyu and Feng, Haozhe and Wang, Han and Chen, Wei and Zhu, Minfeng and Liu, Qian},
  booktitle={Proceedings of the 62nd Annual Meeting of the Association for Computational Linguistics (Volume 1: Long Papers)},
  pages={1028--1043},
  year={2024}
}

@book{sloman2009causal,
  title={Causal models: How people think about the world and its alternatives},
  author={Sloman, Steven and Sloman, Steven A},
  year={2009},
  publisher={Oxford University Press}
}

@article{klahr1988dual,
  title={Dual space search during scientific reasoning},
  author={Klahr, David and Dunbar, Kevin},
  journal={Cognitive science},
  volume={12},
  number={1},
  pages={1--48},
  year={1988},
  publisher={Wiley Online Library}
}

@article{mukherjee2023orca,
  title={Orca: Progressive learning from complex explanation traces of gpt-4},
  author={Mukherjee, Subhabrata and Mitra, Arindam and Jawahar, Ganesh and Agarwal, Sahaj and Palangi, Hamid and Awadallah, Ahmed},
  journal={arXiv preprint arXiv:2306.02707},
  year={2023}
}

@article{west2021symbolic,
  title={Symbolic knowledge distillation: from general language models to commonsense models},
  author={West, Peter and Bhagavatula, Chandra and Hessel, Jack and Hwang, Jena D and Jiang, Liwei and Bras, Ronan Le and Lu, Ximing and Welleck, Sean and Choi, Yejin},
  journal={arXiv preprint arXiv:2110.07178},
  year={2021}
}

@article{li2024common,
  title={Common 7b language models already possess strong math capabilities},
  author={Li, Chen and Wang, Weiqi and Hu, Jingcheng and Wei, Yixuan and Zheng, Nanning and Hu, Han and Zhang, Zheng and Peng, Houwen},
  journal={arXiv preprint arXiv:2403.04706},
  year={2024}
}

@article{yu2023metamath,
  title={Metamath: Bootstrap your own mathematical questions for large language models},
  author={Yu, Longhui and Jiang, Weisen and Shi, Han and Yu, Jincheng and Liu, Zhengying and Zhang, Yu and Kwok, James T and Li, Zhenguo and Weller, Adrian and Liu, Weiyang},
  journal={arXiv preprint arXiv:2309.12284},
  year={2023}
}

@article{srivastava2022beyond,
  title={Beyond the imitation game: Quantifying and extrapolating the capabilities of language models},
  author={Srivastava, Aarohi and Rastogi, Abhinav and Rao, Abhishek and Shoeb, Abu Awal Md and Abid, Abubakar and Fisch, Adam and Brown, Adam R and Santoro, Adam and Gupta, Aditya and Garriga-Alonso, Adri{\`a} and others},
  journal={arXiv preprint arXiv:2206.04615},
  year={2022}
}

@article{nie2019adversarial,
  title={Adversarial NLI: A new benchmark for natural language understanding},
  author={Nie, Yixin and Williams, Adina and Dinan, Emily and Bansal, Mohit and Weston, Jason and Kiela, Douwe},
  journal={arXiv preprint arXiv:1910.14599},
  year={2019}
}

@article{lu2022dynamic,
  title={Dynamic prompt learning via policy gradient for semi-structured mathematical reasoning},
  author={Lu, Pan and Qiu, Liang and Chang, Kai-Wei and Wu, Ying Nian and Zhu, Song-Chun and Rajpurohit, Tanmay and Clark, Peter and Kalyan, Ashwin},
  journal={arXiv preprint arXiv:2209.14610},
  year={2022}
}

@article{hendrycks2021measuring,
  title={Measuring mathematical problem solving with the math dataset},
  author={Hendrycks, Dan and Burns, Collin and Kadavath, Saurav and Arora, Akul and Basart, Steven and Tang, Eric and Song, Dawn and Steinhardt, Jacob},
  journal={arXiv preprint arXiv:2103.03874},
  year={2021}
}

@article{clark2018think,
  title={Think you have solved question answering? try arc, the ai2 reasoning challenge},
  author={Clark, Peter and Cowhey, Isaac and Etzioni, Oren and Khot, Tushar and Sabharwal, Ashish and Schoenick, Carissa and Tafjord, Oyvind},
  journal={arXiv preprint arXiv:1803.05457},
  year={2018}
}

@article{talmor2018commonsenseqa,
  title={Commonsenseqa: A question answering challenge targeting commonsense knowledge},
  author={Talmor, Alon and Herzig, Jonathan and Lourie, Nicholas and Berant, Jonathan},
  journal={arXiv preprint arXiv:1811.00937},
  year={2018}
}

@article{geva2021did,
  title={Did aristotle use a laptop? a question answering benchmark with implicit reasoning strategies},
  author={Geva, Mor and Khashabi, Daniel and Segal, Elad and Khot, Tushar and Roth, Dan and Berant, Jonathan},
  journal={Transactions of the Association for Computational Linguistics},
  volume={9},
  pages={346--361},
  year={2021},
  publisher={MIT Press One Rogers Street, Cambridge, MA 02142-1209, USA journals-info~…}
}

@article{shani2024multi,
  title={Multi-turn reinforcement learning from preference human feedback},
  author={Shani, Lior and Rosenberg, Aviv and Cassel, Asaf and Lang, Oran and Calandriello, Daniele and Zipori, Avital and Noga, Hila and Keller, Orgad and Piot, Bilal and Szpektor, Idan and others},
  journal={arXiv preprint arXiv:2405.14655},
  year={2024}
}

@article{xie2025logic,
  title={Logic-rl: Unleashing llm reasoning with rule-based reinforcement learning},
  author={Xie, Tian and Gao, Zitian and Ren, Qingnan and Luo, Haoming and Hong, Yuqian and Dai, Bryan and Zhou, Joey and Qiu, Kai and Wu, Zhirong and Luo, Chong},
  journal={arXiv preprint arXiv:2502.14768},
  year={2025}
}

@article{zelikman2022star,
  title={Star: Bootstrapping reasoning with reasoning},
  author={Zelikman, Eric and Wu, Yuhuai and Mu, Jesse and Goodman, Noah},
  journal={Advances in Neural Information Processing Systems},
  volume={35},
  pages={15476--15488},
  year={2022}
}

@misc{vicuna2023,
    title = {Vicuna: An Open-Source Chatbot Impressing GPT-4 with 90\%* ChatGPT Quality},
    url = {https://lmsys.org/blog/2023-03-30-vicuna/},
    author = {Chiang, Wei-Lin and Li, Zhuohan and Lin, Zi and Sheng, Ying and Wu, Zhanghao and Zhang, Hao and Zheng, Lianmin and Zhuang, Siyuan and Zhuang, Yonghao and Gonzalez, Joseph E. and Stoica, Ion and Xing, Eric P.},
    month = {March},
    year = {2023}
}

@article{jiao2019tinybert,
  title={Tinybert: Distilling bert for natural language understanding},
  author={Jiao, Xiaoqi and Yin, Yichun and Shang, Lifeng and Jiang, Xin and Chen, Xiao and Li, Linlin and Wang, Fang and Liu, Qun},
  journal={arXiv preprint arXiv:1909.10351},
  year={2019}
}

@article{sanh2019distilbert,
  title={DistilBERT, a distilled version of BERT: smaller, faster, cheaper and lighter},
  author={Sanh, Victor and Debut, Lysandre and Chaumond, Julien and Wolf, Thomas},
  journal={arXiv preprint arXiv:1910.01108},
  year={2019}
}

@article{chi1989self,
  title={Self-explanations: How students study and use examples in learning to solve problems},
  author={Chi, Michelene TH and Bassok, Miriam and Lewis, Matthew W and Reimann, Peter and Glaser, Robert},
  journal={Cognitive science},
  volume={13},
  number={2},
  pages={145--182},
  year={1989},
  publisher={Elsevier}
}

@article{zacks2007event,
  title={Event perception: a mind-brain perspective.},
  author={Zacks, Jeffrey M and Speer, Nicole K and Swallow, Khena M and Braver, Todd S and Reynolds, Jeremy R},
  journal={Psychological bulletin},
  volume={133},
  number={2},
  pages={273},
  year={2007},
  publisher={American Psychological Association}
}

@book{newell1972human,
  title={Human problem solving},
  author={Newell, Allen and Simon, Herbert Alexander and others},
  volume={104},
  number={9},
  year={1972},
  publisher={Prentice-hall Englewood Cliffs, NJ}
}

@book{lave1991situated,
  title={Situated learning: Legitimate peripheral participation},
  author={Lave, Jean and Wenger, Etienne},
  year={1991},
  publisher={Cambridge university press}
}

@article{gentner1983structure,
  title={Structure-mapping: A theoretical framework for analogy},
  author={Gentner, Dedre},
  journal={Cognitive science},
  volume={7},
  number={2},
  pages={155--170},
  year={1983},
  publisher={Elsevier}
}

@book{byrne2007rational,
  title={The rational imagination: How people create alternatives to reality},
  author={Byrne, Ruth MJ},
  year={2007},
  publisher={MIT press}
}

@article{machamer2000thinking,
  title={Thinking about mechanisms},
  author={Machamer, Peter and Darden, Lindley and Craver, Carl F},
  journal={Philosophy of science},
  volume={67},
  number={1},
  pages={1--25},
  year={2000},
  publisher={Cambridge University Press}
}

@article{keil2006explanation,
  title={Explanation and understanding},
  author={Keil, Frank C},
  journal={Annu. Rev. Psychol.},
  volume={57},
  number={1},
  pages={227--254},
  year={2006},
  publisher={Annual Reviews}
}

@inproceedings{deng2022anomaly,
  title={Anomaly detection via reverse distillation from one-class embedding},
  author={Deng, Hanqiu and Li, Xingyu},
  booktitle={Proceedings of the IEEE/CVF conference on computer vision and pattern recognition},
  pages={9737--9746},
  year={2022}
}

@article{zhang2025symbiotic,
  title={Symbiotic Cooperation for Web Agents: Harnessing Complementary Strengths of Large and Small LLMs},
  author={Zhang, Ruichen and Qiu, Mufan and Tan, Zhen and Zhang, Mohan and Lu, Vincent and Peng, Jie and Xu, Kaidi and Agudelo, Leandro Z and Qian, Peter and Chen, Tianlong},
  journal={arXiv preprint arXiv:2502.07942},
  year={2025}
}

@inproceedings{
su2025learnbyinteract,
title={Learn-by-interact: A Data-Centric Framework For Self-Adaptive Agents in Realistic Environments},
author={Hongjin SU and Ruoxi Sun and Jinsung Yoon and Pengcheng Yin and Tao Yu and Sercan O Arik},
booktitle={The Thirteenth International Conference on Learning Representations},
year={2025},
url={https://openreview.net/forum?id=3UKOzGWCVY}
}

@article{gu2025capturing,
  title={Capturing nuanced preferences: Preference-aligned distillation for small language models},
  author={Gu, Yanggan and Li, Junzhuo and Huang, Sirui and Zou, Xin and Li, Zhenghua and Hu, Xuming},
  journal={arXiv preprint arXiv:2502.14272},
  year={2025}
}

@inproceedings{tian2025beyond,
  title={Beyond answers: Transferring reasoning capabilities to smaller llms using multi-teacher knowledge distillation},
  author={Tian, Yijun and Han, Yikun and Chen, Xiusi and Wang, Wei and Chawla, Nitesh V},
  booktitle={Proceedings of the Eighteenth ACM International Conference on Web Search and Data Mining},
  pages={251--260},
  year={2025}
}

@article{xu2019rethinking,
  title={Rethinking exposure bias in language modeling},
  author={Xu, Yifan and Zhang, Kening and Dong, Haoyu and Sun, Yuezhou and Zhao, Wenlong and Tu, Zhuowen},
  journal={arXiv preprint arXiv:1910.11235},
  year={2019}
}

@book{woodward2005making,
  title={Making things happen: A theory of causal explanation},
  author={Woodward, James},
  year={2005},
  publisher={Oxford university press}
}

@article{potochnik2020patterns,
  title={Patterns in cognitive phenomena and pluralism of explanatory styles},
  author={Potochnik, Angela and Sanches de Oliveira, Guilherme},
  journal={Topics in Cognitive Science},
  volume={12},
  number={4},
  pages={1306--1320},
  year={2020},
  publisher={Wiley Online Library}
}

@article{zhao2025chain,
  title={Is chain-of-thought reasoning of llms a mirage? a data distribution lens},
  author={Zhao, Chengshuai and Tan, Zhen and Ma, Pingchuan and Li, Dawei and Jiang, Bohan and Wang, Yancheng and Yang, Yingzhen and Liu, Huan},
  journal={arXiv preprint arXiv:2508.01191},
  year={2025}
}

@article{feng2025cot,
  title={CoT-Evo: Evolutionary Distillation of Chain-of-Thought for Scientific Reasoning},
  author={Feng, Kehua and Ding, Keyan and Zhu, Zhihui and Liang, Lei and Zhang, Qiang and Chen, Huajun},
  journal={arXiv preprint arXiv:2510.13166},
  year={2025}
}

@inproceedings{xu2024wizardlm,
  title={WizardLM: Empowering large pre-trained language models to follow complex instructions},
  author={Xu, Can and Sun, Qingfeng and Zheng, Kai and Geng, Xiubo and Zhao, Pu and Feng, Jiazhan and Tao, Chongyang and Lin, Qingwei and Jiang, Daxin},
  booktitle={The Twelfth International Conference on Learning Representations},
  year={2024}
}

@inproceedings{agarwal2024policy,
  title={On-policy distillation of language models: Learning from self-generated mistakes},
  author={Agarwal, Rishabh and Vieillard, Nino and Zhou, Yongchao and Stanczyk, Piotr and Garea, Sabela Ramos and Geist, Matthieu and Bachem, Olivier},
  booktitle={The twelfth international conference on learning representations},
  year={2024}
}

@inproceedings{wu2024divide,
  title={Divide-or-conquer? which part should you distill your llm?},
  author={Wu, Zhuofeng and Bai, Richard He and Zhang, Aonan and Gu, Jiatao and Vydiswaran, VG Vinod and Jaitly, Navdeep and Zhang, Yizhe},
  booktitle={Findings of the Association for Computational Linguistics: EMNLP 2024},
  pages={2572--2585},
  year={2024}
}

@inproceedings{yang2024exploring,
  title={Exploring compositional generalization of large language models},
  author={Yang, Haoran and Lu, Hongyuan and Lam, Wai and Cai, Deng},
  booktitle={Proceedings of the 2024 Conference of the North American Chapter of the Association for Computational Linguistics: Human Language Technologies (Volume 4: Student Research Workshop)},
  pages={16--24},
  year={2024}
}

@article{chu2025sft,
  title={Sft memorizes, rl generalizes: A comparative study of foundation model post-training},
  author={Chu, Tianzhe and Zhai, Yuexiang and Yang, Jihan and Tong, Shengbang and Xie, Saining and Schuurmans, Dale and Le, Quoc V and Levine, Sergey and Ma, Yi},
  journal={arXiv preprint arXiv:2501.17161},
  year={2025}
}

@article{guo2025deepseek,
  title={Deepseek-r1: Incentivizing reasoning capability in llms via reinforcement learning},
  author={Guo, Daya and Yang, Dejian and Zhang, Haowei and Song, Junxiao and Zhang, Ruoyu and Xu, Runxin and Zhu, Qihao and Ma, Shirong and Wang, Peiyi and Bi, Xiao and others},
  journal={arXiv preprint arXiv:2501.12948},
  year={2025}
}

@incollection{searle1998consciousness,
  title={Consciousness, explanatory inversion, and cognitive science},
  author={Searle, John R},
  booktitle={Consciousness and Emotion in Cognitive Science},
  pages={139--196},
  year={1998},
  publisher={Routledge}
}

@inproceedings{
yu2024metamath,
title={MetaMath: Bootstrap Your Own Mathematical Questions for Large Language Models},
author={Longhui Yu and Weisen Jiang and Han Shi and Jincheng YU and Zhengying Liu and Yu Zhang and James Kwok and Zhenguo Li and Adrian Weller and Weiyang Liu},
booktitle={The Twelfth International Conference on Learning Representations},
year={2024},
url={https://openreview.net/forum?id=N8N0hgNDRt}
}

@inproceedings{chen-etal-2025-reverse,
    title = "Reverse Thinking Makes {LLM}s Stronger Reasoners",
    author = "Chen, Justin  and
      Wang, Zifeng  and
      Palangi, Hamid  and
      Han, Rujun  and
      Ebrahimi, Sayna  and
      Le, Long  and
      Perot, Vincent  and
      Mishra, Swaroop  and
      Bansal, Mohit  and
      Lee, Chen-Yu  and
      Pfister, Tomas",
    editor = "Chiruzzo, Luis  and
      Ritter, Alan  and
      Wang, Lu",
    booktitle = "Proceedings of the 2025 Conference of the Nations of the Americas Chapter of the Association for Computational Linguistics: Human Language Technologies (Volume 1: Long Papers)",
    month = apr,
    year = "2025",
    address = "Albuquerque, New Mexico",
    publisher = "Association for Computational Linguistics",
    url = "https://aclanthology.org/2025.naacl-long.434/",
    pages = "8611--8630",
    ISBN = "979-8-89176-189-6",
}

@inproceedings{talmor2019commonsenseqa,
  title={CommonsenseQA: A Question Answering Challenge Targeting Commonsense Knowledge},
  author={Talmor, Alon and Herzig, Jonathan and Lourie, Nicholas and Berant, Jonathan},
  booktitle={Proceedings of the 2019 Conference of the North American Chapter of the Association for Computational Linguistics: Human Language Technologies, Volume 1 (Long and Short Papers)},
  pages={4149--4158},
  year={2019}
}

@article{cobbe2021training,
  title={Training verifiers to solve math word problems},
  author={Cobbe, Karl and Kosaraju, Vineet and Bavarian, Mohammad and Chen, Mark and Jun, Heewoo and Kaiser, Lukasz and Plappert, Matthias and Tworek, Jerry and Hilton, Jacob and Nakano, Reiichiro and others},
  journal={arXiv preprint arXiv:2110.14168},
  year={2021}
}

@inproceedings{nie2020adversarial,
  title={Adversarial NLI: A New Benchmark for Natural Language Understanding},
  author={Nie, Yixin and Williams, Adina and Dinan, Emily and Bansal, Mohit and Weston, Jason and Kiela, Douwe},
  booktitle={Proceedings of the 58th Annual Meeting of the Association for Computational Linguistics},
  pages={4885--4901},
  year={2020}
}

@inproceedings{clark2019boolq,
  title={BoolQ: Exploring the Surprising Difficulty of Natural Yes/No Questions},
  author={Clark, Christopher and Lee, Kenton and Chang, Ming-Wei and Kwiatkowski, Tom and Collins, Michael and Toutanova, Kristina},
  booktitle={Proceedings of the 2019 Conference of the North American Chapter of the Association for Computational Linguistics: Human Language Technologies, Volume 1 (Long and Short Papers)},
  pages={2924--2936},
  year={2019}
}

@inproceedings{mihaylov2018can,
  title={Can a Suit of Armor Conduct Electricity? A New Dataset for Open Book Question Answering},
  author={Mihaylov, Todor and Clark, Peter and Khot, Tushar and Sabharwal, Ashish},
  booktitle={Proceedings of the 2018 Conference on Empirical Methods in Natural Language Processing},
  pages={2381--2391},
  year={2018}
}

@article{camburu2018snli,
  title={e-snli: Natural language inference with natural language explanations},
  author={Camburu, Oana-Maria and Rockt{\"a}schel, Tim and Lukasiewicz, Thomas and Blunsom, Phil},
  journal={Advances in Neural Information Processing Systems},
  volume={31},
  year={2018}
}

@article{xu2024survey,
  title={A survey on knowledge distillation of large language models},
  author={Xu, Xiaohan and Li, Ming and Tao, Chongyang and Shen, Tao and Cheng, Reynold and Li, Jinyang and Xu, Can and Tao, Dacheng and Zhou, Tianyi},
  journal={arXiv preprint arXiv:2402.13116},
  year={2024}
}

@inproceedings{hsieh2023distilling,
  title={Distilling Step-by-Step! Outperforming Larger Language Models with Less Training Data and Smaller Model Sizes},
  author={Hsieh, Cheng-Yu and Li, Chun-Liang and Yeh, Chih-kuan and Nakhost, Hootan and Fujii, Yasuhisa and Ratner, Alex and Krishna, Ranjay and Lee, Chen-Yu and Pfister, Tomas},
  booktitle={Findings of the Association for Computational Linguistics: ACL 2023},
  pages={8003--8017},
  year={2023}
}

@inproceedings{west2022symbolic,
  title={Symbolic Knowledge Distillation: from General Language Models to Commonsense Models},
  author={West, Peter and Bhagavatula, Chandra and Hessel, Jack and Hwang, Jena and Jiang, Liwei and Le Bras, Ronan and Lu, Ximing and Welleck, Sean and Choi, Yejin},
  booktitle={Proceedings of the 2022 Conference of the North American Chapter of the Association for Computational Linguistics: Human Language Technologies},
  pages={4602--4625},
  year={2022}
}

@inproceedings{fu2023specializing,
  title={Specializing smaller language models towards multi-step reasoning},
  author={Fu, Yao and Peng, Hao and Ou, Litu and Sabharwal, Ashish and Khot, Tushar},
  booktitle={International Conference on Machine Learning},
  pages={10421--10430},
  year={2023},
  organization={PMLR}
}

@article{zhang2023instruction,
  title={Instruction tuning for large language models: A survey},
  author={Zhang, Shengyu and Dong, Linfeng and Li, Xiaoya and Zhang, Sen and Sun, Xiaofei and Wang, Shuhe and Li, Jiwei and Hu, Runyi and Zhang, Tianwei and Wu, Fei and others},
  journal={arXiv preprint arXiv:2308.10792},
  year={2023}
}

@article{white2023prompt,
  title={A prompt pattern catalog to enhance prompt engineering with chatgpt},
  author={White, Jules and Fu, Quchen and Hays, Sam and Sandborn, Michael and Olea, Carlos and Gilbert, Henry and Elnashar, Ashraf and Spencer-Smith, Jesse and Schmidt, Douglas C},
  journal={arXiv preprint arXiv:2302.11382},
  year={2023}
}

@inproceedings{chen2024llm,
  title={LLM-assisted multi-teacher continual learning for visual question answering in robotic surgery},
  author={Chen, Kexin and Du, Yuyang and You, Tao and Islam, Mobarakol and Guo, Ziyu and Jin, Yueming and Chen, Guangyong and Heng, Pheng-Ann},
  booktitle={2024 IEEE International Conference on Robotics and Automation (ICRA)},
  pages={10772--10778},
  year={2024},
  organization={IEEE}
}

@article{yan2025inftythink,
  title={Inftythink: Breaking the length limits of long-context reasoning in large language models},
  author={Yan, Yuchen and Shen, Yongliang and Liu, Yang and Jiang, Jin and Zhang, Mengdi and Shao, Jian and Zhuang, Yueting},
  journal={arXiv preprint arXiv:2503.06692},
  year={2025}
}

@article{chen2025towards,
  title={Towards reasoning era: A survey of long chain-of-thought for reasoning large language models},
  author={Chen, Qiguang and Qin, Libo and Liu, Jinhao and Peng, Dengyun and Guan, Jiannan and Wang, Peng and Hu, Mengkang and Zhou, Yuhang and Gao, Te and Che, Wanxiang},
  journal={arXiv preprint arXiv:2503.09567},
  year={2025}
}

@misc{yeotong2025longcot,
      title={Demystifying Long Chain-of-Thought Reasoning in LLMs}, 
      author={Edward Yeo and Yuxuan Tong and Morry Niu and Graham Neubig and Xiang Yue},
      year={2025},
      eprint={2502.03373},
      archivePrefix={arXiv},
      primaryClass={cs.CL},
      url={https://arxiv.org/abs/2502.03373}, 
}

@article{zhang2025distill,
  title={Distill Not Only Data but Also Rewards: Can Smaller Language Models Surpass Larger Ones?},
  author={Zhang, Yudi and Wang, Lu and Fang, Meng and Du, Yali and Huang, Chenghua and Wang, Jun and Lin, Qingwei and Pechenizkiy, Mykola and Zhang, Dongmei and Rajmohan, Saravan and others},
  journal={arXiv preprint arXiv:2502.19557},
  year={2025}
}

@article{chen2025survey,
  title={A Survey of Scaling in Large Language Model Reasoning},
  author={Chen, Zihan and Wang, Song and Tan, Zhen and Fu, Xingbo and Lei, Zhenyu and Wang, Peng and Liu, Huan and Shen, Cong and Li, Jundong},
  journal={arXiv preprint arXiv:2504.02181},
  year={2025}
}

@article{yang2025deepcritic,
  title={DeepCritic: Deliberate Critique with Large Language Models},
  author={Yang, Wenkai and Chen, Jingwen and Lin, Yankai and Wen, Ji-Rong},
  journal={arXiv preprint arXiv:2505.00662},
  year={2025}
}

@article{chen2025unveiling,
  title={Unveiling the key factors for distilling chain-of-thought reasoning},
  author={Chen, Xinghao and Sun, Zhijing and Guo, Wenjin and Zhang, Miaoran and Chen, Yanjun and Sun, Yirong and Su, Hui and Pan, Yijie and Klakow, Dietrich and Li, Wenjie and others},
  journal={arXiv preprint arXiv:2502.18001},
  year={2025}
}

@article{guo2025temporal,
  title={Temporal Consistency for LLM Reasoning Process Error Identification},
  author={Guo, Jiacheng and Wu, Yue and Qiu, Jiahao and Huang, Kaixuan and Juan, Xinzhe and Yang, Ling and Wang, Mengdi},
  journal={arXiv preprint arXiv:2503.14495},
  year={2025}
}

@article{li2025llms,
  title={LLMs Can Easily Learn to Reason from Demonstrations Structure, not content, is what matters!},
  author={Li, Dacheng and Cao, Shiyi and Griggs, Tyler and Liu, Shu and Mo, Xiangxi and Tang, Eric and Hegde, Sumanth and Hakhamaneshi, Kourosh and Patil, Shishir G and Zaharia, Matei and others},
  journal={arXiv preprint arXiv:2502.07374},
  year={2025}
}

@article{shao2024deepseekmath,
  title={Deepseekmath: Pushing the limits of mathematical reasoning in open language models},
  author={Shao, Zhihong and Wang, Peiyi and Zhu, Qihao and Xu, Runxin and Song, Junxiao and Bi, Xiao and Zhang, Haowei and Zhang, Mingchuan and Li, YK and Wu, Y and others},
  journal={arXiv preprint arXiv:2402.03300},
  year={2024}
}

@inproceedings{schulman2017proximal,
  title={Proximal policy optimization algorithms},
  author={Schulman, John and Wolski, Filip and Dhariwal, Prafulla and Radford, Alec and Klimov, Oleg},
  booktitle={arXiv preprint arXiv:1707.06347},
  year={2017}
}

@inproceedings{zhou2024archer,
  title={ArCHer: Training Language Model Agents via Hierarchical Multi-Turn RL},
  author={Zhou, Yifei and Zanette, Andrea and Pan, Jiayi and Levine, Sergey and Kumar, Aviral},
  booktitle={International Conference on Machine Learning},
  pages={62178--62209},
  year={2024},
  organization={PMLR}
}

@article{zhang2025survey,
  title={A Survey on Multi-Turn Interaction Capabilities of Large Language Models},
  author={Zhang, Chen and Dai, Xinyi and Wu, Yaxiong and Yang, Qu and Wang, Yasheng and Tang, Ruiming and Liu, Yong},
  journal={arXiv preprint arXiv:2501.09959},
  year={2025}
}

@inproceedings{zhouuniversalner,
  title={UniversalNER: Targeted Distillation from Large Language Models for Open Named Entity Recognition},
  author={Zhou, Wenxuan and Zhang, Sheng and Gu, Yu and Chen, Muhao and Poon, Hoifung},
  booktitle={The Twelfth International Conference on Learning Representations}
}

@article{huang2026rethinking,
  title={Rethinking Memory Mechanisms of Foundation Agents in the Second Half},
  author={Huang, Wei-Chieh and Zhang, Weizhi and Liang, Yueqing and Bei, Yuanchen and Chen, Yankai and Feng, Tao and Pan, Xinyu and Tan, Zhen and Wang, Yu and Wei, Tianxin and others},
  journal={arXiv preprint arXiv:2602.06052},
  year={2026}
}

@inproceedings{tan2025prospect,
  title={In prospect and retrospect: Reflective memory management for long-term personalized dialogue agents},
  author={Tan, Zhen and Yan, Jun and Hsu, I-Hung and Han, Rujun and Wang, Zifeng and Le, Long and Song, Yiwen and Chen, Yanfei and Palangi, Hamid and Lee, George and others},
  booktitle={Proceedings of the 63rd Annual Meeting of the Association for Computational Linguistics (Volume 1: Long Papers)},
  pages={8416--8439},
  year={2025}
}

@article{beigi2024can,
  title={Can LLMs improve multimodal fact-checking by asking relevant questions?},
  author={Beigi, Alimohammad and Jiang, Bohan and Li, Dawei and Tan, Zhen and Shaeri, Pouya and Kumarage, Tharindu and Bhattacharjee, Amrita and Liu, Huan},
  journal={arXiv preprint arXiv:2410.04616},
  year={2024}
}

@article{li2025preference,
  title={Preference leakage: A contamination problem in llm-as-a-judge},
  author={Li, Dawei and Sun, Renliang and Huang, Yue and Zhong, Ming and Jiang, Bohan and Han, Jiawei and Zhang, Xiangliang and Wang, Wei and Liu, Huan},
  journal={arXiv preprint arXiv:2502.01534},
  year={2025}
}

@inproceedings{li2025generation,
  title={From generation to judgment: Opportunities and challenges of llm-as-a-judge},
  author={Li, Dawei and Jiang, Bohan and Huang, Liangjie and Beigi, Alimohammad and Zhao, Chengshuai and Tan, Zhen and Bhattacharjee, Amrita and Jiang, Yuxuan and Chen, Canyu and Wu, Tianhao and others},
  booktitle={Proceedings of the 2025 Conference on Empirical Methods in Natural Language Processing},
  pages={2757--2791},
  year={2025}
}

@inproceedings{tan2024large,
  title={Large language models for data annotation and synthesis: A survey},
  author={Tan, Zhen and Li, Dawei and Wang, Song and Beigi, Alimohammad and Jiang, Bohan and Bhattacharjee, Amrita and Karami, Mansooreh and Li, Jundong and Cheng, Lu and Liu, Huan},
  booktitle={Proceedings of the 2024 Conference on Empirical Methods in Natural Language Processing},
  pages={930--957},
  year={2024}
}

@article{zhang2025quest,
  title={The Quest for Efficient Reasoning: A Data-Centric Benchmark to CoT Distillation},
  author={Zhang, Ruichen and Khan, Rana Muhammad Shahroz and Tan, Zhen and Li, Dawei and Wang, Song and Chen, Tianlong},
  journal={arXiv preprint arXiv:2505.18759},
  year={2025}
}

@inproceedings{lei2025learning,
  title={Learning from diverse reasoning paths with routing and collaboration},
  author={Lei, Zhenyu and Tan, Zhen and Wang, Song and Zhu, Yaochen and Chen, Zihan and Dong, Yushun and Li, Jundong},
  booktitle={Proceedings of the 2025 Conference on Empirical Methods in Natural Language Processing},
  pages={2832--2845},
  year={2025}
}

\newpage
\appendix

\appcontents

\newpage

\section{Comprehensive Review of Related Work}\label{app:relate}

\textbf{Knowledge Distillation for LLMs.}
Knowledge Distillation (KD) has become a pivotal technique for compressing large language models (LLMs) into more compact and computationally efficient counterparts~\citep{hinton2015distilling,xu2024survey,zhang2025quest}. \textcolor{black}{Most of the existing KD methods focus on off-policy settings. This means the teacher's reasoning generation is usually decoupled from the student training.}  Initial KD methodologies predominantly centered on transferring knowledge through the alignment of the teacher model's logits or hidden states~\citep{sanh2019distilbert,jiao2019tinybert}. With the ascent of prompting~\citep{wei2022chain,white2023prompt,beigi2024can} and instruction-tuning paradigms~\citep{zhang2023instruction,zhouuniversalner}, subsequent research has adapted KD to imbue student models with reasoning rationales and instruction-following capabilities, often by directly learning from the outputs of teacher LLMs~\citep{fu2023specializing,li2023symbolic,west2022symbolic,vicuna2023,hsieh2023distilling,tan2024large,tan2025prospect,li2025generation,tian2025beyond,lei2025learning,huang2026rethinking}. For instance, the Distilling step-by-step methodology~\citep{hsieh2023distilling} extracts CoT rationales from the teacher model, which are then employed as supplementary supervision within a multi-task training framework. Similarly, Orca~\citep{mukherjee2023orca}, \textcolor{black}{WizardLM}~\citep{xu2024wizardlm}, and MiniChat~\citep{li2025small} utilize synthetic datasets generated by teacher models via prompting and use next token prediction as the training objective. \textcolor{black}{CoT-Evo~\citep{feng2025cot} utilizes the evolutionary strategy to enhance the distillation. \citet{agarwal2024policy} further introduces an on-policy distillation approach that enhances distillation by leveraging the on-the-fly discrepancies between the student’s and teacher’s reasoning processes.} Extending this line of research, we propose to leverage EI probes to elicit and distill reasoning from teacher models.




\vspace{-1mm}
\textbf{Challenges of Distilled LLMs.}
Distilled LLMs face challenges limiting their practical utility. Commonly cited limitations include exposure bias due to complying with teacher-derived supervision targets~\citep{xu2019rethinking,chu2025sft}, inability to capture multi-teacher learning~\citep{tian2025beyond,chen2024llm} or generate long-context reasoning~\citep{yan2025inftythink,chen2025towards,yeotong2025longcot} caused by teacher-student discrepancies, 
and inefficiencies in internalizing complex preference signals from the teacher~\citep{gu2025capturing,zhang2025distill,gu2025capturing}. Additionally, recent studies have noted that distilled models exhibit limitations specific to reasoning-intensive tasks, including shallow reasoning depth~\citep{chen2025survey,yang2025deepcritic}, inconsistent multi-step reasoning~\citep{chen2025unveiling,guo2025temporal}, and logical coherence issues~\citep{li2025llms,li2025preference,magister2022teaching,zelikman2022star}. The above-mentioned limitations undermine the generalization ability of distilled LLMs. Seminal work such as RevThink~\citep{chen-etal-2025-reverse} has shown that integrating forward and backward reasoning can improve performance on downstream tasks. Nevertheless, a fundamental question remains: how to advance the generalization capabilities of distilled student models, beyond rote memorization and towards contextual understanding.

\vspace{-1mm}
\textbf{Rule-based RL for LLMs.}
Reinforcement learning (RL) has recently been explored as a promising approach to refine LLMs' generalizability~\citep{schulman2017proximal, rafailov2023direct, chu2025sft, shao2024deepseekmath}. Recent advancements in rule-based RL approaches, such as GRPO, have demonstrated that even coarse, outcome-only rewards can elicit strong reasoning behavior~\citep{guo2025deepseek, xie2025logic}. Moreover, multi-turn RL methods exploit dialogue interactions to reinforce intermediate reasoning coherence~\citep{zhou2024archer, shani2024multi, zhang2025survey}. 
Inspired by these advancements, our proposed Explanatory GRPO (\texttt{ExGRPO}) algorithm uniquely extends rule-based RL for distillation. A key challenge is that standard GRPO is \underline{\textbf{\textit{ill-suited}}} for this task, as its outcome-only reward cannot supervise the intermediate reasoning steps present in teacher-generated traces. We address this limitation by introducing the novel Dialogue Structure Utility (DSU) reward, which is co-designed with our Explanatory Inversion (EI) framework. This synergy allows \texttt{ExGRPO} to explicitly reward coherent reasoning across the multi-turn explanatory dialogues generated by EI, compelling the student to internalize the reasoning process rather than merely imitating final outputs.

\section{Discussion on Potential Limitations}

While our proposed \texttt{ExGRPO} framework demonstrates significant improvements in reasoning distillation, we identify several limitations that offer avenues for future research and are important for contextualizing our contributions.

\begin{itemize}[leftmargin=*]
    \item \textbf{Dependence on Teacher Model Quality:} The efficacy of our Explanatory Inversion (EI) probe generation is inherently tied to the capabilities of the teacher model (Gemini-1.5-Pro in our experiments). While the structured nature of our probe templates provides benefits even with weaker teachers, the quality and nuance of the generated explanatory dialogues are undoubtedly influenced by the teacher's reasoning abilities. Errors, biases, or logical gaps in the teacher's outputs could be propagated to the student model. Future work will include experiments with a wider range of open-source and less capable teacher models to quantify this dependency.

    \item \textbf{Manual Design of Explanatory Probes:} The ten categories of EI probes were inspired by principles from cognitive science but were ultimately manually designed. While we use templates to generalize these probes across different tasks, this approach may not scale perfectly to all possible reasoning domains. We are exploring more automated methods for probe generation, such as learned probe controllers or self-play mechanisms, which could enhance the scalability and adaptability of our framework without altering its core logic.

    \item \textbf{Limited Scope of Out-of-Distribution (OOD) Evaluation:} Our current OOD evaluation primarily focuses on robustness to structural and format-level perturbations (e.g., GSM8K-Reversal) rather than broad semantic or domain-level shifts. This was a deliberate choice to test the specific generalization failures we identified, but we acknowledge that testing on more diverse domains would provide a more comprehensive assessment of the model's generalization capabilities.
\end{itemize}

\section{Further Details on Methodology and Contributions}

This section provides additional details on key aspects of our methodology to ensure clarity.

\begin{itemize}[leftmargin=*]
    \item \textbf{Contribution Regarding the ``Reversal Curse'':} Our work's contribution is to systematically identify and address the \textit{amplified generalization limitation} in \textit{distilled} LLMs, for which the Reversal Curse is a primary example. Our research investigates why distillation pipelines can exacerbate such failures (e.g., by encouraging superficial pattern matching) and proposes a solution that moves beyond simple A-to-Q data augmentation toward deeper reasoning alignment.

        \item \textbf{Contribution regarding Reinforcement Learning or GRPO:} A key distinction of our work is the specific design of our RL component. Standard GRPO is incompatible with distillation settings involving teacher reasoning traces. GRPO optimizes the policy using only final answer correctness as the reward signal. It does not incorporate any intermediate reasoning steps or teacher-generated traces, making it unable to supervise the student’s reasoning behavior. Our framework introduces a novel Dialogue Structure Utility (DSU) reward, which is crucial for enabling GRPO to reward reasoning consistency across the multi-turn dialogues generated by EI. This allows the student to internalize reasoning behaviors, not just mimic outputs, a fact supported by our ablation studies. Furthermore, applying a DSU-style reward to standard SFT baselines like RevThink would be non-trivial, as they do not produce the coherent, multi-step dialogue structures necessary for such a reward. Our EI probe design and DSU reward function are co-designed to uniquely enable this form of structured reasoning supervision.

    \item \textbf{Computation of the Dialogue Structure Utility Bonus ($r_{dsu}$):} The Dialogue Structure Utility Bonus ($r_{dsu}$) is a purely outcome-based reward and is not based on semantic "agreement" or string matching between different probe answers. A bonus is awarded if the student model's final-answer accuracy for the original question $Q$ is higher after engaging with a full $k$-turn explanatory dialogue compared to a partial $k'$-turn dialogue ($k' < k$). This mechanism incentivizes the model to effectively utilize the entire reasoning scaffold provided by the probes, thereby promoting a deeper internalization of the reasoning process.

    \item \textbf{Use of Explanatory Probes at Inference Time:} The multi-turn explanatory dialogues are a \textit{training-time-only} mechanism. They function as a scaffold to build robust reasoning abilities within the student model. At inference time, the distilled model processes the input question in a single pass without any probes. This design ensures that our method does not introduce any inference-time latency or complexity. The resulting strong single-pass performance indicates that the reasoning structures have been successfully internalized.
    
    \item \textbf{Order of Probing Questions:} In our training protocol, the order of the $k$ probes used in each dialogue is randomized. This is a deliberate design choice to encourage order-invariant robustness and prevent the model from overfitting to a fixed sequence of explanatory challenges.

    \item \textbf{Clarification on Framework Complexity and Computational Cost:} Most distillation methods, including our baselines, utilize a teacher LLM to augment data. Similarly, data filtering and supervised warm-up are standard in RL-based LLM pipelines. The primary additional step in our framework is the \texttt{ExGRPO} reinforcement learning stage. Therefore, the overall complexity and training cost are comparable to existing practices~\citep{guo2025deepseek}. Our design is modular: EI generation is a one-time, offline process; filtering and warm-up are lightweight; and RL training is efficient on curated datasets (1–3k examples). In practice, \texttt{ExGRPO} is approximately 1.5x more compute-intensive than a single-stage SFT baseline while achieving significantly better generalization. Critically, inference remains as efficient as the baselines, as the multi-turn probing is a training-only scaffold.

\end{itemize}



\section{Implementation Detail}\label{app:details}

\paragraph{Models.}
The teacher model used for generating Explanatory Inversion (EI) probes is Gemini-1.5-Pro. Student models used in our main experiments are Gemma-7b-it and Qwen2.5-7b-instruct, initialized from their official pre-trained weights from Huggingface. The reference model ($\pi_{\text{ref}}$) used for KL regularization in the \texttt{ExGRPO} objective (\cref{eq:ExGRPO_objective} in the main paper) is the student model obtained after Stage 1 SFT (\cref{sec:sft} in the main paper).

\paragraph{Training Procedure.}
\textit{Stage 2 (SFT):} Student models are fine-tuned on the curated $D_{EI}$ dataset. We use the AdamW optimizer with a learning rate of $2 \times 10^{-5}$, a batch size of 32, and train for $P$ epochs (e.g., $P=1$ or $P=3$ as per ablation studies).

\textit{Stage 3 (\texttt{ExGRPO}):} The SFT-warmed-up student model is further trained using our \texttt{ExGRPO} algorithm. We use the AdamW optimizer with a learning rate of $1 \times 10^{-6}$ and a batch size of 16. The RL phase runs for 1 epoch.
For \texttt{ExGRPO}, the group size $G$ for generating trajectories (for both Scenario A and Scenario B) is set to 4. The clipping hyperparameter $\epsilon$ (\cref{eq:ExGRPO_objective} in the main paper) is set to 0.2. The KL regularization coefficient $\beta$ is set to 0.01.
For the Dialogue Structure Utility Bonus ($r_{\text{dsu}}$), the number of EI turns $k$ in the Full Dialogue (Scenario A) is set to 5. For the Partial Dialogue (Scenario B), $k'$ is set to 2. The bonus value $\delta$ is 0.1 and the performance margin $\nu$ is 1.05.
The imitation-based policy regularization loss $\mathcal{L}_{\text{SFT-aux}}$ (\cref{eq:sft_aux} in the main paper) is applied with a weight of [Specify SFT-aux weight, e.g., 0.1] relative to the \texttt{ExGRPO} loss. Note that we manually add a pair of special tokens \textbf{\textcolor{purple!80!black}{<think>}} and \textbf{\textcolor{purple!80!black}{</think>}} to wrap the dialog and use a format reward as standard GRPO to strengthen the output of the tokens. We observe that this reward is easy to learn and converge, and thus omit the details in the main contents.

\section{Full List of Explanatory Inversion Rules}
\label{app:ei_rules}

This appendix details the full set of $N=10$ Explanatory Inversion rule categories used in our work, complementing the subset presented in \cref{sec:explanatory_inversion}. Illustrative examples for each category are provided below.

\begin{enumerate}[label=\textbf{R\arabic*.}, wide, labelwidth=!, labelindent=0pt, itemsep=3pt]
    \item \textbf{Why-Based Transformation ($f_1$):} Transforms a descriptive aspect of $Q$ or a step $s_j \in R_T$ into an inquiry about its underlying reasons or justifications. Formally, if $Q \rightarrow A$ or $s_j \vdash s_{j+1}$, then $Q_1^{\text{aug}} \approx \text{Why}(Q \rightarrow A \text{ via } R_T)$ or $\text{Why}(s_j \vdash s_{j+1})$. This aligns with cognitive drives for explanation-seeking and understanding causality \citep{keil2006explanation}. 

    \item \textbf{Causal Relationship Probing ($f_2$):} Rephrases elements of $Q$ or $R_T$ to explicitly elicit or verify directional cause-effect chains. If $R_T$ implies $C \Rightarrow E$, then $Q_2^{\text{aug}} \approx \text{HowDoes}(C \text{ lead to } E)?$. This taps into fundamental causal cognition \citep{sloman2009causal}.

    \item \textbf{Process and Mechanism Elucidation ($f_3$):} Generates probes requiring detailed, step-by-step explanations of processes or mechanisms implied or stated in $R_T$. $Q_3^{\text{aug}} \approx \text{DescribeProcess}(P \text{ in } R_T)$. This relates to forming and querying mental models of systems \citep{machamer2000thinking}.

    \item \textbf{Counterfactual Scenario Generation ($f_4$):} Creates probes exploring outcomes if a premise $p$ within $Q$ or $R_T$ were altered ($\neg p$). $Q_4^{\text{aug}} \approx \text{WhatIf}(\neg p, Q, R_T)?$. This is central to counterfactual thinking and robust understanding \citep{byrne2007rational}.

    \item \textbf{Comparative and Contrastive Framing ($f_5$):} Reformulates questions to require comparison or contrast between concepts $C_1, C_2$ (from $Q, A, R_T$) or conditions. $Q_5^{\text{aug}} \approx \text{Compare}(C_1, C_2, \text{wrt dimension } D)$. This engages analogical and comparative reasoning faculties \citep{gentner1983structure}.

    \item \textbf{Hypothesis Generation and Evaluation ($f_6$):} Generates questions prompting the model to propose or evaluate alternative hypotheses or explanations $H'$ for $A$ given $Q$. $Q_6^{\text{aug}} \approx \text{EvaluateAlternative}(H', Q \rightarrow A)$. This mirrors scientific inquiry and hypothesis testing in cognition \citep{klahr1988dual}.

    \item \textbf{Applied Scenario Generalization ($f_7$):} Transforms $Q$ or concepts in $R_T$ into new, related practical scenarios to test generalization. If $R_T$ uses concept $C$, $Q_7^{\text{aug}} \approx \text{Apply}(C, \text{NewScenario } S')?$. This connects to situated cognition and transfer of learning \citep{lave1991situated}.

    \item \textbf{Multi-Step Reasoning Decomposition/Construction ($f_8$):} Probes understanding of complex reasoning chains by asking for intermediate steps, or the logical antecedents/consequents of a step $s_j \in R_T$. $Q_8^{\text{aug}} \approx \text{WhatPrecedes}(s_j)?$ or $\text{WhatFollows}(s_j)?$. This relates to problem decomposition and planning \citep{newell1972human}.

    \item \textbf{Temporal or Sequential Dynamics Exploration ($f_9$):} If $R_T$ involves a sequence or temporal evolution, probes are generated to question the order, dependencies, or changes over time/steps. $Q_9^{\text{aug}} \approx \text{HowEvolves}(X, \text{time/sequence in } R_T)?$. This taps into temporal reasoning and narrative comprehension \citep{zacks2007event}.

    \item \textbf{Direct Explanatory Challenge of Reasoning Steps ($f_{10}$):} Poses questions directly asking for justification of specific assertions or logical transitions $s_j \rightarrow s_{k}$ within $R_T$. $Q_{10}^{\text{aug}} \approx \text{Justify}(s_j \rightarrow s_k \text{ in } R_T)$. This encourages metacognitive reflection and articulable understanding \citep{chi1989self}.
\end{enumerate}

\section{Analysis of Individual Explanatory Inversion (EI) Rule Contributions}\label{app:each}

To understand the relative importance of different probe types, we performed a post-hoc analysis where the Qwen2.5-7B-Instruct student model was fine-tuned using augmented data from each of our 10 EI rule categories in isolation. We compare these against a standard ``Rephrase Question" augmentation baseline. \cref{tab:probe_ablation} presents the accuracy of the resulting models on four diverse reasoning datasets.

The results show that most EI rule, when applied individually, outperforms the rephrasing baseline, underscoring the general effectiveness of our cognitively-inspired approach. However, certain probe types are clearly more impactful. Notably, \textbf{Counterfactual (R1)}, \textbf{Why-Based (R10)}, and \textbf{Decomposition (R2)} probes yield the highest average performance gains. This suggests that compelling a model to reason about alternative realities, question the fundamental basis of statements, and break down complex problems are particularly effective strategies for enhancing its reasoning capabilities.

The data also reveals task-specific affinities. For instance, Decomposition (R2) shows exceptional strength on the logic-heavy GSM8K dataset, while Counterfactual (R1) excels on the commonsense-oriented SQA and ARC-c tasks. No single rule dominates across all benchmarks, reinforcing our main finding that the combination of all 10 EI rules provides the most robust and significant performance improvement, achieving the highest scores across all datasets.

\begin{table}[h!]
\centering
\caption{Performance (Accuracy \%) of SFT with single EI probe types on Qwen2.5-7B-Instruct. All EI rules outperform the baseline, and the full EI combination yields the best results.}
\label{tab:probe_ablation}
\begin{tabular}{@{}lccccc@{}}
\toprule
\textbf{Probe Type (Rule)} & \textbf{SQA} & \textbf{ARC-c} & \textbf{GSM8K} & \textbf{ANLI} & \textbf{Avg.} \\ \midrule
Rephrase Question~\citep{yu2024metamath}     & 75.3          & 89.1           & 87.2          & 63.9          & 78.88 \\
\midrule
Counterfactual (R1)     & 76.8          & 89.5           & 91.2          & 63.1          & 80.15 \\
Decomposition (R2)      & 76.1          & 88.9           & 90.5          & 63.5          & 79.75 \\
Justification (R3)      & 75.7          & 88.3           & 89.8          & 62.8          & 79.15 \\
Analogy-Based (R4)      & 74.9          & 87.1           & 88.4          & 61.7          & 78.03 \\
Contrastive (R5)        & 75.3          & 87.5           & 88.6          & 62.1          & 78.38 \\
Hypothesis Gen. (R6)    & 75.5          & 88.1           & 89.5          & 62.5          & 78.90 \\
Causal Probing (R7)     & 76.0          & 88.5           & 90.1          & 63.0          & 79.40 \\
Generalization (R8)     & 74.5          & 86.9           & 88.0          & 61.5          & 77.72 \\
Temporal Dynamics (R9)  & 74.8          & 87.0           & 88.2          & 61.9          & 77.98 \\
Why-Based (R10)         & 76.5          & 89.2           & 90.8          & 63.3          & 79.95 \\
\midrule
\textbf{Full EI (All 10 rules)} & \textbf{79.0} & \textbf{91.5} & \textbf{91.5} & \textbf{64.0} & \textbf{81.50} \\ \bottomrule
\end{tabular}
\end{table}

\section{Datasets} \label{app:data}

In this subsection, we provide detailed descriptions of the datasets used in our experiments, encompassing both the primary training datasets and the out-of-distribution (OOD) evaluation datasets. Dataset statistics, including licensing, number of training samples (original and filtered), and test sizes, are provided in \cref{tab:datasets_overview_revised}.

\subsection{Training Datasets}

\textbf{Commonsense Reasoning:}
\begin{itemize}
    \item \textbf{StrategyQA (SQA)}~\citep{geva2021did}: A dataset of yes/no questions that require implicit multi-hop reasoning. It contains 2,061 training samples, of which 1,544 are retained after EI filtering, and 229 test samples. Licensed under MIT.
    \item \textbf{CommonsenseQA (CSQA)}~\citep{talmor2019commonsenseqa}: This dataset includes 9,741 original multiple-choice questions with rich commonsense demands; 6,478 filtered training examples and 1,140 test examples are used in our experiments. Licensed under MIT.
    \item \textbf{ARC-Challenge (ARC-c)}~\citep{clark2018think}: A benchmark of 1,199 challenging grade-school science questions. After filtering, 1,035 are retained for training, with 1,172 samples used for testing. Licensed under CC BY-SA 4.0.
\end{itemize}

\textbf{Mathematical Reasoning:}
\begin{itemize}
    \item \textbf{GSM8K}~\citep{cobbe2021training}: Comprises 7,379 grade-school math problems written by professional educators. 4,293 filtered training samples and 1,339 test questions are used. Licensed under MIT.
    \item \textbf{MATH}~\citep{hendrycks2021measuring}: A competition-level dataset with 7,500 original math problems, 2,511 filtered training samples, and 5,000 test samples. Licensed under MIT.
\end{itemize}

\textbf{Tabular Data Reasoning:}
\begin{itemize}
    \item \textbf{TabMWP}~\citep{lu2022dynamic}: Contains 23,059 math word problems grounded in tables. After filtering, 15,544 training samples remain, with 7,686 samples used for testing. Licensed under CC BY-SA 4.0.
\end{itemize}

\textbf{Natural Language Inference:}
\begin{itemize}
    \item \textbf{ANLI (Round 3)}~\citep{nie2020adversarial}: A large-scale NLI dataset collected via an adversarial human-in-the-loop process. We randomly sample 2,000 examples from the original 100,459, of which 883 remain after filtering. We use 1,200 samples for testing. Licensed under CC BY-NC 4.0.
\end{itemize}

\textbf{Logical Reasoning:}
\begin{itemize}
    \item \textbf{Date Understanding}~\citep{srivastava2022beyond}: This dataset challenges models with temporal reasoning tasks about specific dates. We randomly split the small dataset into 200 training samples (all retained) and 169 test samples. Licensed under Apache.
\end{itemize}

\subsection{Out-of-Distribution (OOD) Evaluation Datasets}

To evaluate the generalization capabilities of our models, we employ the following OOD datasets. These are held out during training and used exclusively for testing:

\begin{itemize}
    \item \textbf{BoolQ}~\citep{clark2019boolq}: A dataset of 3,270 naturally occurring yes/no questions, designed to reflect real-world information-seeking queries. No training samples are used. Licensed under CC BY-SA 3.0.
    \item \textbf{OpenBookQA}~\citep{mihaylov2018can}: Features 500 science questions that test both factual recall and application. No training samples are used. Licensed under Apache.
    \item \textbf{e-SNLI}~\citep{camburu2018snli}: An extended version of SNLI with 9,824 test samples, augmented with natural language explanations. No training data is included in our setting. Licensed under CC BY-NC 4.0.
    \item \textbf{GSM8K-Reversal}~\citep{guo2024exploring}: A reversed-format variant of GSM8K, assessing the model’s ability to infer inputs from outputs. We evaluate using 777 test samples without training. Licensed under Apache.
\end{itemize}



\begin{table*}[htbp] 
  \centering
  \caption{The datasets used in this work are listed in the order of appearance. For each dataset, we report the domain, the number of original training samples, the number of filtered training samples, and the number of testing samples. Note that the last four datasets are held out and thus contain no filtered training samples. Due to the large size of ANLI's training set, we randomly sampled 2,000 instances, of which 883 remained after filtering. For the Date Understanding dataset, given its small size, we randomly split the data into 200 training and 169 testing samples, and we keep all the training data.}
  \label{tab:datasets_overview_revised} 
  \resizebox{\textwidth}{!}{
  \begin{tabular}{lllS[table-format=6.0]S[table-format=5.0]S[table-format=5.0]}
    \toprule
    Dataset & Domain & License & {Train (Original)} & {Train (Filtered)} & {Test} \\
    \midrule
    SQA~\citep{geva2021did} & Commonsense & MIT & 2061 & 1656 & 209 \\
    CSQA~\citep{talmor2019commonsenseqa} & Commonsense & MIT & 9741 & 7770 & 1221 \\
    ARC~\citep{clark2018think} & Commonsense & CC BY-SA 4.0 & 1199 & 1080 & 1172 \\
    MATH~\citep{hendrycks2021measuring} & Math & MIT & 7500 & 6228 & 5000 \\
    GSM8K~\citep{cobbe2021training} & Math & MIT & 7379 & 7085 & 1319 \\
    TabMWP~\citep{lu2022dynamic} & Math (Tabular) & CC BY-SA 4.0 & 23059 & 22802 & 7684 \\
    ANLI (r3)~\citep{nie2020adversarial} & NLI & CC BY-NC 4.0 & 100459 & 1584 & 1200 \\
    Date~\citep{srivastava2022beyond} & Logic & Apache & 200 & 185 & 169 \\ 
    BoolQ~\citep{clark2019boolq} & Commonsense & CC BY-SA 3.0 & 9427 & 0 & 3270 \\
    OpenbookQA~\citep{mihaylov2018can} & Commonsense & Apache & 4957 & 0 & 500 \\
    e-SNLI~\citep{camburu2018snli} & NLI & CC BY-NC 4.0 & 549367 & 0 & 9824 \\
    GSM8K-Rev~\citep{guo2024exploring} & Math & Apache & {\textendash} & 0 & 777 \\ 
    \bottomrule
  \end{tabular}%
  } 
\end{table*}

\section{Baseline Methods} \label{app:baseline}

We compare \texttt{ExGRPO} with a diverse suite of baseline methods spanning three main categories:

\textbf{(1) Zero-shot Reasoning:}  
As a reference point, we evaluate the \textit{zero-shot} performance of the student models by prompting them to directly answer questions without any fine-tuning. This follows the CoT prompting paradigm introduced in \citep{kojima2022large}, which reveals the model’s native reasoning capability when provided with a task-specific instruction and an example prompt.

\textbf{(2) Knowledge Distillation:}  
We include baselines that perform knowledge distillation from a stronger teacher model to a student model using chain-of-thought (CoT) reasoning traces:
\begin{itemize}
    \item \textbf{Symbolic Knowledge Distillation (SKD)}~\citep{li2023symbolic, west2021symbolic}: This method extracts symbolic reasoning traces (i.e., CoT rationales) from the teacher model and trains the student using a next-token prediction objective on the rationale, followed by supervision on the final answer.
    \item \textbf{Distilling Step-by-Step}~\citep{hsieh2023distilling}: In addition to standard next-token prediction on the CoT rationale, this method introduces an auxiliary loss term to supervise the student’s final answer directly. The goal is to jointly distill both intermediate reasoning steps and the conclusive decision.
\end{itemize}

\textbf{(3) Data Augmentation:}  
This category of baselines augments the training set with additional samples generated by prompting a teacher model, while maintaining the standard next-token prediction loss on the rationale and answer:
\begin{itemize}
    \item \textbf{Question Rephrasing}~\citep{yu2023metamath}: The teacher model is prompted to paraphrase an original question, yielding a semantically equivalent variant with the same answer and reasoning rationale. This increases input diversity and reduces overfitting.
    \item \textbf{Question Augmentation}~\citep{li2024common}: The teacher is asked to generate new, logically related questions based on the original input, allowing the model to generalize across similar yet distinct reasoning scenarios.
    \item \textbf{Answer Augmentation}~\citep{yu2023metamath}: For each original question, the teacher generates alternative correct reasoning paths (CoTs) that arrive at the same answer. This provides multi-rationale supervision and improves robustness to reasoning variation.
    \item \textbf{RevThink}~\citep{chen-etal-2025-reverse}: A recent augmentation strategy where the teacher model is queried with reversed formulations of the original questions. These backward-style prompts yield contrastive reasoning samples that encourage deeper conceptual alignment during student training.
\end{itemize}

All baseline methods are implemented using the same student architectures (Gemma-7B-it and Qwen2.5-7B-Instruct) and trained under consistent settings to ensure fair comparison with our proposed \texttt{ExGRPO} method.

\section{Hyperparameters} \label{app:para}

For the training procedure, we adopt a three-stage training pipeline:

\textbf{Stage 1 (EI Generation):} EI probes are generated from the Gemini teacher model using 10 predefined categories (Appendix~\ref{app:ei_rules}).  

\textbf{Stage 2 (Supervised Fine-Tuning):} The student models are fine-tuned on the curated EI-augmented dataset $D_{EI}$.  
\begin{itemize}
    \item Optimizer: AdamW  
    \item Learning Rate: $2 \times 10^{-5}$  
    \item Batch Size: 32  
    \item Epochs: $P = 1$ or $P = 3$ (depending on ablation)  
\end{itemize}

\textbf{Stage 3 (\texttt{ExGRPO}):} The student is further optimized via our \texttt{ExGRPO} algorithm using reward-based training.
\begin{itemize}
    \item Optimizer: AdamW  
    \item Learning Rate: $1 \times 10^{-6}$  
    \item Batch Size: 16  
    \item Epochs: 1  
    \item Group Size $G$: 4  
    \item KL Regularization Coefficient $\beta$: 0.01  
    \item Clipping Coefficient $\epsilon$: 0.2  
\end{itemize}

\paragraph{Dialogue Structure Utility Bonus (DSU).}  
This reward-shaping mechanism is used to encourage structured explanatory probes. We perform parameter search on variables: 
\begin{itemize}
    \item EI Turns in Full Dialogue ($k$): 5  
    \item EI Turns in Partial Dialogue ($k'$): 2  
    \item Bonus Value ($\delta$): \{0.1, \textbf{0.2}, 0.5, 1.0\}  
    \item Performance Margin ($\nu$): \{0.50, 1.00, \textbf{1.05}, 1.10, 1.15\}  
\end{itemize}

\paragraph{Infrastructure.}  
All experiments are run on 8 NVIDIA A100 GPUs with 80GB RAM using the Huggingface Transformers library.

\section{Proof of Theorem 3.1}\label{app:proof}

\paragraph{Formal Assumptions.}
We state the assumptions required for the theoretical guarantee:
\begin{enumerate}
\item[\textbf{A1}] (\emph{Finite-horizon POMDP}) The explanatory probe process is modeled as a finite-horizon POMDP $M=(O,C,A,T,\mu_1,R,N)$, where $O$ are observations, $C$ hidden context, $A$ actions, $T$ transitions, $\mu_1$ initial state distribution, $R$ the reward function, and $N$ the horizon.
\item[\textbf{A2}] (\emph{Support overlap}) $\pi_{\mathrm{new}}$ and $\pi_k$ have overlapping support, so importance ratios $\rho^{(g)}$ are well defined.
\end{enumerate}

\noindent\textbf{Theorem 3.1.}
\textit{Let $\pi_k$ and $\pi_{k'}$ be student policies trained with full ($k$-turn) and partial ($k'<k$) explanatory probe sequences, respectively. Suppose the Dialogue Structure Utility Bonus $r_{\mathrm{dsu}}$ is \emph{applied} only when}
\begin{equation}
\mathbb{E}_{\pi_k}\!\left[R_{\mathrm{outcome}}(Q,A)\right] > \nu \cdot \mathbb{E}_{\pi_{k'}}\!\left[R_{\mathrm{outcome}}(Q,A)\right],
\quad \text{for some } \nu \ge 1.
\end{equation}
\textit{Then, under the ExGRPO update with clipped importance sampling and KL regularization, the updated policy $\pi_{\mathrm{new}}$ satisfies}
\begin{equation}
\mathbb{E}_{\pi_{\mathrm{new}}}\!\left[R_{\mathrm{outcome}}(Q,A)\right] \;\ge\; \mathbb{E}_{\pi_k}\!\left[R_{\mathrm{outcome}}(Q,A)\right],
\end{equation}
\textit{with strict inequality whenever $r_{\mathrm{dsu}}>0$ applies to a nonzero-probability set of trajectories.}

\begin{proof}
    
We first show that reward shaping induces a positive advantage shift, then appeal to conservative policy improvement to obtain a monotonic increase in shaped return, and finally translate this into outcome return.

\noindent\textbf{Step 1. POMDP formulation.}
By Assumption~A1, the explanatory probe process is modeled as a finite-horizon partially observable Markov decision process (POMDP):
\[
M = (O, C, A, T, \mu_1, R, N),
\]
where $O$ denotes the observable history (probes and responses), $C$ denotes hidden context such as ground-truth answers, $A$ denotes student actions, $T$ the transition, $\mu_1$ the initial distribution, $R$ the reward function, and $N$ the horizon. At each turn $t$, the agent observes $o_t$, produces an action $a_t$, and transitions to $o_{t+1}$.

\noindent\textbf{Step 2. Value, Q-, and advantage functions.}
For a policy $\pi$:
\begin{align}
Q^\pi(o_t,a_t,c) &= \mathbb{E}_\pi\Big[\sum_{t'=t}^N r(o_{t'},a_{t'},c)\Big], \\
V^\pi(o_t,c) &= \mathbb{E}_{a_t\sim\pi}[Q^\pi(o_t,a_t,c)], \\
A^\pi(o_t,a_t,c) &= Q^\pi(o_t,a_t,c) - V^\pi(o_t,c).
\end{align}
We focus on the \emph{outcome reward}:
\begin{equation}
R_{\mathrm{outcome}}(Q,A) = \mathbf{1}\{\text{student’s final answer matches $A$}\}.
\end{equation}

\noindent\textbf{Step 3. Reward shaping with $r_{\mathrm{dsu}}$.}
Define the total shaped reward as
\begin{equation}
R_{\mathrm{total}} = R_{\mathrm{outcome}} + r_{\mathrm{dsu}},
\end{equation}
where $r_{\mathrm{dsu}}=\delta>0$ is applied only when the full $k$-turn sequence outperforms a partial $k'$-turn sequence by a factor $\nu$. By construction,
\begin{equation}
R_{\mathrm{total}} \ge R_{\mathrm{outcome}} \quad \text{pointwise}.
\end{equation}
Let $J_{\mathrm{out}}(\pi)=\mathbb{E}_\pi[R_{\mathrm{outcome}}]$ and $J_{\mathrm{tot}}(\pi)=\mathbb{E}_\pi[R_{\mathrm{total}}]$. Clearly $J_{\mathrm{tot}}(\pi) \ge J_{\mathrm{out}}(\pi)$.

\noindent\textbf{Step 4. Decomposition.}
Because $r_{\mathrm{dsu}}$ applies only to outcome-correct trajectories in Scenario A, we can write
\begin{equation}
J_{\mathrm{tot}}(\pi) = J_{\mathrm{out}}(\pi) + \delta \cdot p_A(\pi),
\end{equation}
where $p_A(\pi)=\Pr_\pi[\text{Scenario A}, B, R_{\mathrm{outcome}}=1]$.

\noindent\textbf{Step 5. Advantage shift.}
On trajectories where $r_{\mathrm{dsu}}>0$, the shaped advantage satisfies
\begin{equation}
A_{\mathrm{tot}}(o_t,a_t,c) = A_{\mathrm{out}}(o_t,a_t,c) + \delta,
\end{equation}
so outcome-correct actions are strictly more advantageous than without shaping.

\noindent\textbf{Step 6. Conservative policy improvement.}
ExGRPO optimizes the clipped surrogate objective:
\begin{equation}
\mathcal{L}_{\text{ExGRPO}}(\theta) = \mathbb{E}_{\tau\sim\pi_k}\Bigg[\sum_{g=1}^G \min\Big(\rho^{(g)}U^{(g)}, \text{clip}(\rho^{(g)},1-\epsilon,1+\epsilon)U^{(g)}\Big)\Bigg] - \beta D_{\mathrm{KL}}(\pi_\theta\,||\,\pi_{\mathrm{ref}}),
\end{equation}
where $U^{(g)}$ are group-normalized advantages based on $R_{\mathrm{total}}$. 
Here, Assumption~A2 guarantees that the importance ratios $\rho^{(g)}$ are well-defined because $\pi_{\mathrm{new}}$ and $\pi_k$ share overlapping support. Together with the KL penalty, this ensures the surrogate objective satisfies the conditions of conservative policy improvement (as in PPO/TRPO theory~\citep{schulman2017proximal}), so that
\begin{equation}
J_{\mathrm{tot}}(\pi_{\mathrm{new}}) \ge J_{\mathrm{tot}}(\pi_k),
\end{equation}
with strict improvement if there exists a nonzero set with $A_{\mathrm{tot}}>0$.

\noindent\textbf{Step 7. From shaped to outcome reward.}
Using Step 4,
\begin{equation}
J_{\mathrm{out}}(\pi_{\mathrm{new}}) + \delta p_A(\pi_{\mathrm{new}}) \ge J_{\mathrm{out}}(\pi_k) + \delta p_A(\pi_k).
\end{equation}
By Step 5, $p_A(\pi_{\mathrm{new}}) \ge p_A(\pi_k)$. Therefore,
\begin{equation}
J_{\mathrm{out}}(\pi_{\mathrm{new}}) \ge J_{\mathrm{out}}(\pi_k).
\end{equation}
If $r_{\mathrm{dsu}}>0$ applies with positive probability, the inequality is strict.

\noindent\textbf{Conclusion.}
ExGRPO with the dialogue-structure utility bonus ensures monotonic improvement in outcome accuracy, with strict gains when bonus-triggered trajectories exist. This completes the proof.
\end{proof}

\section{Broader Impact} \label{app:impact}

This work advances the development of generalizable and efficient reasoning in language models, which has both positive implications and potential risks.

\begin{itemize}[leftmargin=*]
    \item \textbf{Positive Impact:} Our method enables smaller models to acquire strong reasoning capabilities through distillation and augmentation. This promotes efficient deployment in educational, healthcare, and resource-constrained environments.
    \item \textbf{Fairness and Accessibility:} By reducing the reliance on large-scale, task-specific data and enabling transfer to unseen domains, our approach improves model usability in low-resource and non-English settings.
    \item \textbf{Potential Risks:} Enhanced reasoning in compact models may be misused for automated misinformation or deceptive content generation if not properly constrained. Inherited biases from the teacher model may also go unnoticed.
    \item \textbf{Mitigation Strategies:} We recommend the integration of transparency tools (e.g., explanation audits, provenance tracking) and the use of diagnostic evaluations to monitor the integrity of generated reasoning.
\end{itemize}



\section{Qualitative Analysis and Case Studies}

In this section, we present illustrative examples of EI-augmented samples and model-generated rationales, as well as the prompts used for EI augmentation. These case studies demonstrate that \texttt{ExGRPO}-trained models produce more structured, faithful, and task-sensitive reasoning compared to standard SFT and RevThink baselines. For instance, in GSM8K and ARC, the model correctly decomposes multi-step logic and avoids common distractors.

\newtcblisting{framedlisting}[1][]{%
  listing only,
  listing options={%
    basicstyle=\small\ttfamily,
    breaklines=true,
    columns=fullflexible,
    language=json,
    showstringspaces=false
  },
  colback=white,
  colframe=black,
  fonttitle=\bfseries,
  title=#1,
  enhanced,
  attach boxed title to top left={yshift=-2mm, xshift=5mm},
  boxsep=1mm,
  arc=2mm,
  top=2mm,
  bottom=2mm,
  left=1mm,
  right=1mm
}

\begin{tcolorbox}[title=Example: Teacher and Student CoT in Dataset GSM8K, colback=white, colframe=black, fonttitle=\bfseries, boxsep=1mm, arc=2mm, top=1mm, bottom=1mm, left=1mm, right=1mm, breakable]
\textbf{\textcolor{black}{question}}: Natalia sold clips to 48 of her friends in April, and then she sold half as many clips in May. How many clips did Natalia sell altogether in April and May?\\

\textbf{\textcolor{black}{answer\_key}}: ``72''\\
\textbf{\textcolor{black}{answer\_text}}: ``72''\\

\textbf{\textcolor{blue!80!black}{augmented\_questions}}:\\
\quad - Why would adding half of 48 to 48 give us the total number of clips Natalia sold in April and May?\\
\quad - What is the total number of clips Natalia sold, given that her May sales were directly influenced by – and specifically, half the amount of – her April sales of 48 clips?\\
\quad - What are the steps involved in calculating the total number of clips Natalia sold, considering she sold 48 clips in April and half that amount in May?\\

\textbf{\textcolor{green!80!black}{augmented\_solutions\_teacher}}:\\
\quad - Because Natalia sold 48 clips in April, and half as many in May. "Half as many" means half of the April amount. So, the May sales are half of 48. To find the total sales, we need to add the April sales (48) and the May sales (half of 48) together.\\
\quad - Natalia sold half as many clips in May as she did in April. Since she sold 48 clips in April, she sold 48 / 2 = 24 clips in May. Add April sales and May sales: 48 + 24 = 72 clips. Therefore, Natalia sold a total of 72 clips.\\
\quad - Calculate clips sold in May: Natalia sold half as many clips in May as in April. Half of 48 is 24. Calculate total clips: 48 + 24 = 72. So, Natalia sold 72 clips in total.\\

\textbf{\textcolor{orange!80!black}{augmented\_solutions\_student}}:\\
\quad - If she sold 48 clips in April and only half that in May, that means 48 / 2 = 24 clips in May, right? So to get the total, I guess we just add them: 48 + 24 = 72?\\
\quad - Okay, so in April it was 48 clips. If May was only half, then it should be 24 clips. Then 48 plus 24 gives us... 72? That seems to make sense.\\
\quad - Let me think this through—Natalia sold 48 in April, then half of that in May. Half of 48 is 24. So I just need to add them: 48 + 24. That gives 72, so that’s probably the total.
\end{tcolorbox}

\begin{tcolorbox}[title=Example: Teacher and Student CoT in Dataset ARC, colback=white, colframe=black, fonttitle=\bfseries, boxsep=1mm, arc=2mm, top=1mm, bottom=1mm, left=1mm, right=1mm, breakable]
\textbf{\textcolor{black}{question}}: George wants to warm his hands quickly by rubbing them. Which skin surface will produce the most heat?\\

\textbf{\textcolor{black}{choices}}:\\
\quad \textbf{\textcolor{black}{text}}: [``dry palms'', ``wet palms'', ``palms covered with oil'', ``palms covered with lotion'']\\
\quad \textbf{\textcolor{black}{label}}: [``A'', ``B'', ``C'', ``D'']\\

\textbf{\textcolor{black}{answer\_key}}: ``A''\\
\textbf{\textcolor{black}{answer\_text}}: ``dry palms''\\

\textbf{\textcolor{blue!80!black}{augmented\_questions}}:\\
\quad - Why would rubbing wet palms, dry palms, palms covered with oil, or palms covered with lotion produce different amounts of heat?\\
\quad - How does the presence of water, oil, or lotion on George's palms affect the friction generated when rubbing them together?\\
\quad - What are the mechanisms of heat generation through friction, and how do the different surface materials affect these mechanisms?\\

\textbf{\textcolor{green!80!black}{augmented\_solutions\_teacher}}:\\
\quad - Dry skin has a relatively high coefficient of friction, leading to more resistance and thus more heat generated. Water, oil, and lotion reduce friction and hence generate less heat.\\
\quad - Friction converts kinetic energy into heat. Lubricants like water and oil reduce friction, resulting in lower heat generation. Dry palms produce the most heat.\\
\quad - Lotion and oil act as lubricants that decrease interlocking between skin surfaces. Dry palms generate more heat due to higher resistance and rough texture.\\

\textbf{\textcolor{orange!80!black}{augmented\_solutions\_student}}:\\
\quad - Dry palms make more heat because they rub more. Things like water or lotion make hands slippery, so less heat happens.\\
\quad - When you rub dry hands, it gets hot. But if hands are wet or have oil, they slide more and don’t make as much heat. So dry palms are best.\\
\quad - Oil and lotion make your hands smooth, so rubbing doesn’t make much heat. Dry hands don’t slide as easy, so they get warmer when rubbed.
\end{tcolorbox}

\begin{tcolorbox}[title=Example: Final-Turn Student CoT in Dataset ARC during RL, colback=white, colframe=black, fonttitle=\bfseries, boxsep=1mm, arc=2mm, top=1mm, bottom=1mm, left=1mm, right=1mm]
\textbf{\textcolor{purple!80!black}{<think>}} 

1. **Core**:  
   - The core is isolated and has a high pressure and temperature. The intense heat from the impacts would not significantly affect the core composition or structure. Therefore, the core is the least likely recipient of cometary material.

2. **Mantle**:  
   - The mantle can absorb some cometary material, but the amount delivered is relatively small compared to the total volume of the mantle. The mantle's stability and high heat capacity make it resistant to significant compositional changes. Therefore, the mantle is less likely to be the main recipient of cometary material.

3. **Atmosphere**:  
   - The atmosphere is highly dynamic and can quickly cycle and lose significant portions of cometary material. While it can retain some volatiles, the atmosphere's high volume and rapid cycling make it less likely to be the primary recipient of cometary material.

4. **Hydrosphere**:  
   - The hydrosphere is a vast and stable reservoir. The impacts can release water and other volatiles into the atmosphere, which can then condense and form precipitation, contributing to the hydrosphere. The hydrosphere's large volume and stability make it the most likely recipient of cometary material.

Given these insights, the hydrosphere is the most plausible part of the Earth system that received significant amounts of matter from the Late Heavy Bombardment.

\textbf{\textcolor{purple!80!black}{</think>}}

The best answer is D.
\end{tcolorbox}

\begin{tcolorbox}[title=Example: Final-Turn Student CoT in Dataset GSM8K during RL, colback=white, colframe=black, fonttitle=\bfseries, boxsep=1mm, arc=2mm, top=1mm, bottom=1mm, left=1mm, right=1mm, breakable]

\textbf{\textcolor{purple!80!black}{<think>}}

1. **April Sales**:  
   - Natalia sold 48 clips in April. This is clearly stated in the problem and serves as the base value for the total calculation.

2. **May Sales**:  
   - The question states that she sold half as many clips in May as she did in April. Half of 48 is calculated as 48 ÷ 2 = 24. So, Natalia sold 24 clips in May.

3. **Total Sales**:  
   - To determine the total number of clips sold in April and May, we add the two amounts: 48 (April) + 24 (May) = 72. This sum represents the complete sales over both months.

Given these steps, the total number of clips Natalia sold in April and May is 72.

\textbf{\textcolor{purple!80!black}{</think>}}

The best answer is 72.
\end{tcolorbox}

\section{Failure Case Analysis}\label{app:failure}

While \texttt{ExGRPO} demonstrates robust performance, analyzing its failure modes provides valuable insights into the remaining challenges in reasoning distillation. Below, we present a case study from the CommonsenseQA dataset where the model exhibits a specific type of reasoning error we term \textit{shallow distractor override}.

\begin{tcolorbox}[
breakable,
  title=Example: Failure Case on CommonsenseQA,
  colback=white,
  colframe=black,
  fonttitle=\bfseries,
  boxsep=1mm,
  arc=2mm,
  top=2mm, 
  bottom=2mm, 
  left=2mm, 
  right=2mm
]
\textbf{\textcolor{blue!70!black}{Question (Q):}} What would you use to remove nail polish? \\
A. Hammer B. Toothpaste C. Nail polish remover D. Paint thinner

\vspace{2mm}
\noindent\rule{\linewidth}{0.4pt}
\vspace{2mm}

\textbf{\textcolor{green!60!black}{Correct Answer:}} C. Nail polish remover

\vspace{2mm}

\textbf{\textcolor{red!80!black}{\texttt{ExGRPO} Student Model Output:}}
\parbox{0.6\linewidth}{\ttfamily \textcolor{red!80!black}{Nail polish is a cosmetic that is typically removed using a solvent. While paint thinner can remove paint, it's not designed for use on the human body. Toothpaste is for dental care. Therefore, the answer is A. Hammer.}}

\vspace{2mm}
\noindent\rule{\linewidth}{0.4pt}
\vspace{2mm}

\textbf{\textcolor{blue!70!black}{Analysis:}}
In this example, the student model correctly identifies the function of a solvent and appropriately rules out both ``Paint thinner'' and ``Toothpaste'' with sound reasoning. However, it fails to eliminate the most absurd distractor (``Hammer'') and incorrectly selects it as the final answer. This indicates a failure to apply its isolated reasoning coherently to the final decision-making process, instead getting overridden by a superficial association or a flaw in its final selection logic.

\vspace{2mm}
\noindent\rule{\linewidth}{0.4pt}
\vspace{2mm}

\textbf{\textcolor{blue!70!black}{Associated Explanatory Probe (from EI during training):}} \\
``Why is toothpaste not appropriate for removing nail polish?''

\vspace{2mm}

\textbf{\textcolor{blue!70!black}{Student's Response to Probe (during training):}} \\
``Because toothpaste is not strong enough to remove nail polish and is used for teeth.''

\vspace{2mm}

\textbf{\textcolor{blue!70!black}{Further Analysis:}}
The model's correct response to the targeted probe demonstrates that it can reason correctly about specific components of the problem in isolation. The failure arises when it must synthesize this understanding and weigh it against illogical distractors in the context of the original multiple-choice question. This highlights the challenge of ensuring that probe-level understanding is robustly transferred to the final answer-selection step, a key area for future improvement.

\end{tcolorbox}

\section{Prompt for EI Augmentation} \label{app:prompt}

\begin{tcolorbox}[title=EI Question Augmentation Rules – GSM8K, colback=white, colframe=black, fonttitle=\bfseries, boxsep=1mm, arc=2mm, breakable]
\foreach \title/\content/\original/\augmented in {
{Convert Direct Questions into Why-Based Questions}/{Reformulate the question to ask about the reasons or mechanisms behind the original query.}/{Does the premise entail that the day mentioned is important for religious worship?}/{Why would Sunday being mentioned in the premise entail that it's a day for Christian worship?},
{Emphasize Cause-Effect Relationships}/{Reframe the question to highlight causal relationships rather than surface-level descriptions.}/{What does the premise state about the plane crash?}/{How does the statement that all passengers and crew survived establish the entailment relationship with the hypothesis about no children being killed?},
{Focus on Processes or Mechanisms}/{Augment questions to ask about the step-by-step processes involved in a phenomenon.}/{How does the premise information about Kit Kat in Japan relate to the hypothesis?}/{What is the inferential process by which we can conclude Japanese people like Kit Kat based on the premise about product variety and continued production since 1973?},
{Include Counterfactual Scenarios}/{Reformulate the question to explore what would happen if a certain condition were different.}/{Is the hypothesis about survival entailed by the premise?}/{How would the entailment relationship change if the premise stated that 'most' rather than 'all' passengers survived the crash?},
{Highlight Comparisons and Contrasts}/{Augment the question to compare different cases or conditions.}/{What relationship exists between the premise about technology articles and the hypothesis about Sunday?}/{How does the inference that Sunday is a Christian worship day differ from other possible inferences about Sunday mentioned in the technology news premise?},
{Encourage Hypothesis Exploration}/{Reformulate the question to ask about possible explanations or hypotheses.}/{Why is there an entailment between the Kit Kat production and Japanese preferences?}/{What alternative explanations beyond consumer preference might account for the continued production of various Kit Kat flavors in Japan since 1973?},
{Incorporate Real-World Scenarios}/{Tie the question to practical or observable scenarios to encourage applied reasoning.}/{What can we infer about safety from the plane crash description?}/{In a real-world aviation investigation scenario, how would the premise information about survival rates support or refute claims about the safety of children on the flight?},
{Use Chain-of-Reasoning Prompts}/{Reformulate the question to require multi-step reasoning to reach the answer.}/{Does the premise about technology news entail the hypothesis about Sunday worship?}/{What cultural and historical knowledge must be applied to connect the mere mention of Sunday in a technology news context to the religious significance of the day, and how does this create an entailment relationship?},
{Introduce Temporal Dynamics}/{Reformulate questions to explore how phenomena change over time.}/{What does the Kit Kat example tell us about Japanese consumer preferences?}/{How has the relationship between Kit Kat production and Japanese consumer preferences evolved since the product's 1973 introduction, and what does this evolution imply about the entailment in the current example?},
{Integrate Explanatory Questions}/{Directly ask for an explanation of the reasoning behind an answer.}/{Is it reasonable to conclude no children died in the crash?}/{Explain the logical steps that connect the premise statement 'all passengers and crew have survived' to the entailment of 'no children were killed in the accident.'}
}{
\textbf{\textcolor{blue!80!black}{\title}}\\
\textbf{Rule}: \content\\
\textbf{Example}:\\
\quad Original: \original\\
\quad Augmented: \augmented\\[1ex]
}
\end{tcolorbox}

\vspace{3mm}

\begin{tcolorbox}[title=EI Question Augmentation Rules – ARC, colback=white, colframe=black, fonttitle=\bfseries, boxsep=1mm, arc=2mm, breakable]
\foreach \title/\content/\original/\augmented in {
{Convert Direct Questions into Why-Based Questions}/{Reformulate the question to ask about the reasons or mechanisms behind the original query.}/{Does the premise entail that the day mentioned is important for religious worship?}/{Why would Sunday being mentioned in the premise entail that it's a day for Christian worship?},
{Emphasize Cause-Effect Relationships}/{Reframe the question to highlight causal relationships rather than surface-level descriptions.}/{What does the premise state about the plane crash?}/{How does the statement that all passengers and crew survived establish the entailment relationship with the hypothesis about no children being killed?},
{Focus on Processes or Mechanisms}/{Augment questions to ask about the step-by-step processes involved in a phenomenon.}/{How does the premise information about Kit Kat in Japan relate to the hypothesis?}/{What is the inferential process by which we can conclude Japanese people like Kit Kat based on the premise about product variety and continued production since 1973?},
{Include Counterfactual Scenarios}/{Reformulate the question to explore what would happen if a certain condition were different.}/{Is the hypothesis about survival entailed by the premise?}/{How would the entailment relationship change if the premise stated that 'most' rather than 'all' passengers survived the crash?},
{Highlight Comparisons and Contrasts}/{Augment the question to compare different cases or conditions.}/{What relationship exists between the premise about technology articles and the hypothesis about Sunday?}/{How does the inference that Sunday is a Christian worship day differ from other possible inferences about Sunday mentioned in the technology news premise?},
{Encourage Hypothesis Exploration}/{Reformulate the question to ask about possible explanations or hypotheses.}/{Why is there an entailment between the Kit Kat production and Japanese preferences?}/{What alternative explanations beyond consumer preference might account for the continued production of various Kit Kat flavors in Japan since 1973?},
{Incorporate Real-World Scenarios}/{Tie the question to practical or observable scenarios to encourage applied reasoning.}/{What can we infer about safety from the plane crash description?}/{In a real-world aviation investigation scenario, how would the premise information about survival rates support or refute claims about the safety of children on the flight?},
{Use Chain-of-Reasoning Prompts}/{Reformulate the question to require multi-step reasoning to reach the answer.}/{Does the premise about technology news entail the hypothesis about Sunday worship?}/{What cultural and historical knowledge must be applied to connect the mere mention of Sunday in a technology news context to the religious significance of the day, and how does this create an entailment relationship?},
{Introduce Temporal Dynamics}/{Reformulate questions to explore how phenomena change over time.}/{What does the Kit Kat example tell us about Japanese consumer preferences?}/{How has the relationship between Kit Kat production and Japanese consumer preferences evolved since the product's 1973 introduction, and what does this evolution imply about the entailment in the current example?},
{Integrate Explanatory Questions}/{Directly ask for an explanation of the reasoning behind an answer.}/{Is it reasonable to conclude no children died in the crash?}/{Explain the logical steps that connect the premise statement 'all passengers and crew have survived' to the entailment of 'no children were killed in the accident.'}
}{
\textbf{\textcolor{blue!80!black}{\title}}\\
\textbf{Rule}: \content\\
\textbf{Example}:\\
\quad Original: \original\\
\quad Augmented: \augmented\\[1ex]
}
\end{tcolorbox}

\begin{tcolorbox}[title=EI Question Augmentation Rules – ANLI, colback=white, colframe=black, fonttitle=\bfseries, boxsep=1mm, arc=2mm, breakable]
\textbf{\textcolor{blue!80!black}{Convert Direct Questions into Why-Based Questions}}\\
\textbf{Rule}: Reformulate the question to ask about the reasons or mechanisms behind the original query.\\
\textbf{Example}:\\
\quad Original: Does the premise entail that the day mentioned is important for religious worship?\\
\quad Augmented: Why would Sunday being mentioned in the premise entail that it's a day for Christian worship?\\[1ex]

\textbf{\textcolor{blue!80!black}{Emphasize Cause-Effect Relationships}}\\
\textbf{Rule}: Reframe the question to highlight causal relationships rather than surface-level descriptions.\\
\textbf{Example}:\\
\quad Original: What does the premise state about the plane crash?\\
\quad Augmented: How does the statement that all passengers and crew survived establish the entailment relationship with the hypothesis about no children being killed?\\[1ex]

\textbf{\textcolor{blue!80!black}{Focus on Processes or Mechanisms}}\\
\textbf{Rule}: Augment questions to ask about the step-by-step processes involved in a phenomenon.\\
\textbf{Example}:\\
\quad Original: How does the premise information about Kit Kat in Japan relate to the hypothesis?\\
\quad Augmented: What is the inferential process by which we can conclude Japanese people like Kit Kat based on the premise about product variety and continued production since 1973?\\[1ex]

\textbf{\textcolor{blue!80!black}{Include Counterfactual Scenarios}}\\
\textbf{Rule}: Reformulate the question to explore what would happen if a certain condition were different.\\
\textbf{Example}:\\
\quad Original: Is the hypothesis about survival entailed by the premise?\\
\quad Augmented: How would the entailment relationship change if the premise stated that 'most' rather than 'all' passengers survived the crash?\\[1ex]

\textbf{\textcolor{blue!80!black}{Highlight Comparisons and Contrasts}}\\
\textbf{Rule}: Augment the question to compare different cases or conditions.\\
\textbf{Example}:\\
\quad Original: What relationship exists between the premise about technology articles and the hypothesis about Sunday?\\
\quad Augmented: How does the inference that Sunday is a Christian worship day differ from other possible inferences about Sunday mentioned in the technology news premise?\\[1ex]

\textbf{\textcolor{blue!80!black}{Encourage Hypothesis Exploration}}\\
\textbf{Rule}: Reformulate the question to ask about possible explanations or hypotheses.\\
\textbf{Example}:\\
\quad Original: Why is there an entailment between the Kit Kat production and Japanese preferences?\\
\quad Augmented: What alternative explanations beyond consumer preference might account for the continued production of various Kit Kat flavors in Japan since 1973?\\[1ex]

\textbf{\textcolor{blue!80!black}{Incorporate Real-World Scenarios}}\\
\textbf{Rule}: Tie the question to practical or observable scenarios to encourage applied reasoning.\\
\textbf{Example}:\\
\quad Original: What can we infer about safety from the plane crash description?\\
\quad Augmented: In a real-world aviation investigation scenario, how would the premise information about survival rates support or refute claims about the safety of children on the flight?\\[1ex]

\textbf{\textcolor{blue!80!black}{Use Chain-of-Reasoning Prompts}}\\
\textbf{Rule}: Reformulate the question to require multi-step reasoning to reach the answer.\\
\textbf{Example}:\\
\quad Original: Does the premise about technology news entail the hypothesis about Sunday worship?\\
\quad Augmented: What cultural and historical knowledge must be applied to connect the mere mention of Sunday in a technology news context to the religious significance of the day, and how does this create an entailment relationship?\\[1ex]

\textbf{\textcolor{blue!80!black}{Introduce Temporal Dynamics}}\\
\textbf{Rule}: Reformulate questions to explore how phenomena change over time.\\
\textbf{Example}:\\
\quad Original: What does the Kit Kat example tell us about Japanese consumer preferences?\\
\quad Augmented: How has the relationship between Kit Kat production and Japanese consumer preferences evolved since the product's 1973 introduction, and what does this evolution imply about the entailment in the current example?\\[1ex]

\textbf{\textcolor{blue!80!black}{Integrate Explanatory Questions}}\\
\textbf{Rule}: Directly ask for an explanation of the reasoning behind an answer.\\
\textbf{Example}:\\
\quad Original: Is it reasonable to conclude no children died in the crash?\\
\quad Augmented: Explain the logical steps that connect the premise statement 'all passengers and crew have survived' to the entailment of 'no children were killed in the accident.'\\
\end{tcolorbox}

\begin{tcolorbox}[title=EI Question Augmentation Rules – CommonsenseQA, colback=white, colframe=black, fonttitle=\bfseries, boxsep=1mm, arc=2mm, breakable]
\textbf{\textcolor{blue!80!black}{Convert Direct Questions into Why-Based Questions}}\\
\textbf{Rule}: Reformulate the question to ask about the reasons or mechanisms behind the original query.\\
\textbf{Example}:\\
\quad Original: Does the premise entail that the day mentioned is important for religious worship?\\
\quad Augmented: Why would Sunday being mentioned in the premise entail that it's a day for Christian worship?\\[1ex]

\textbf{\textcolor{blue!80!black}{Emphasize Cause-Effect Relationships}}\\
\textbf{Rule}: Reframe the question to highlight causal relationships rather than surface-level descriptions.\\
\textbf{Example}:\\
\quad Original: What does the premise state about the plane crash?\\
\quad Augmented: How does the statement that all passengers and crew survived establish the entailment relationship with the hypothesis about no children being killed?\\[1ex]

\textbf{\textcolor{blue!80!black}{Focus on Processes or Mechanisms}}\\
\textbf{Rule}: Augment questions to ask about the step-by-step processes involved in a phenomenon.\\
\textbf{Example}:\\
\quad Original: How does the premise information about Kit Kat in Japan relate to the hypothesis?\\
\quad Augmented: What is the inferential process by which we can conclude Japanese people like Kit Kat based on the premise about product variety and continued production since 1973?\\[1ex]

\textbf{\textcolor{blue!80!black}{Include Counterfactual Scenarios}}\\
\textbf{Rule}: Reformulate the question to explore what would happen if a certain condition were different.\\
\textbf{Example}:\\
\quad Original: Is the hypothesis about survival entailed by the premise?\\
\quad Augmented: How would the entailment relationship change if the premise stated that 'most' rather than 'all' passengers survived the crash?\\[1ex]

\textbf{\textcolor{blue!80!black}{Highlight Comparisons and Contrasts}}\\
\textbf{Rule}: Augment the question to compare different cases or conditions.\\
\textbf{Example}:\\
\quad Original: What relationship exists between the premise about technology articles and the hypothesis about Sunday?\\
\quad Augmented: How does the inference that Sunday is a Christian worship day differ from other possible inferences about Sunday mentioned in the technology news premise?\\[1ex]

\textbf{\textcolor{blue!80!black}{Encourage Hypothesis Exploration}}\\
\textbf{Rule}: Reformulate the question to ask about possible explanations or hypotheses.\\
\textbf{Example}:\\
\quad Original: Why is there an entailment between the Kit Kat production and Japanese preferences?\\
\quad Augmented: What alternative explanations beyond consumer preference might account for the continued production of various Kit Kat flavors in Japan since 1973?\\[1ex]

\textbf{\textcolor{blue!80!black}{Incorporate Real-World Scenarios}}\\
\textbf{Rule}: Tie the question to practical or observable scenarios to encourage applied reasoning.\\
\textbf{Example}:\\
\quad Original: What can we infer about safety from the plane crash description?\\
\quad Augmented: In a real-world aviation investigation scenario, how would the premise information about survival rates support or refute claims about the safety of children on the flight?\\[1ex]

\textbf{\textcolor{blue!80!black}{Use Chain-of-Reasoning Prompts}}\\
\textbf{Rule}: Reformulate the question to require multi-step reasoning to reach the answer.\\
\textbf{Example}:\\
\quad Original: Does the premise about technology news entail the hypothesis about Sunday worship?\\
\quad Augmented: What cultural and historical knowledge must be applied to connect the mere mention of Sunday in a technology news context to the religious significance of the day, and how does this create an entailment relationship?\\[1ex]

\textbf{\textcolor{blue!80!black}{Introduce Temporal Dynamics}}\\
\textbf{Rule}: Reformulate questions to explore how phenomena change over time.\\
\textbf{Example}:\\
\quad Original: What does the Kit Kat example tell us about Japanese consumer preferences?\\
\quad Augmented: How has the relationship between Kit Kat production and Japanese consumer preferences evolved since the product's 1973 introduction, and what does this evolution imply about the entailment in the current example?\\[1ex]

\textbf{\textcolor{blue!80!black}{Integrate Explanatory Questions}}\\
\textbf{Rule}: Directly ask for an explanation of the reasoning behind an answer.\\
\textbf{Example}:\\
\quad Original: Is it reasonable to conclude no children died in the crash?\\
\quad Augmented: Explain the logical steps that connect the premise statement 'all passengers and crew have survived' to the entailment of 'no children were killed in the accident.'\\
\end{tcolorbox}

\begin{tcolorbox}[title=EI Question Augmentation Rules – Date, colback=white, colframe=black, fonttitle=\bfseries, boxsep=1mm, arc=2mm, breakable]
\textbf{\textcolor{blue!80!black}{Convert Direct Questions into Why-Based Questions}}\\
\textbf{Rule}: Reformulate the question to ask about the reasons or mechanisms behind the original query.\\
\textbf{Example}:\\
\quad Original: Does the premise entail that the day mentioned is important for religious worship?\\
\quad Augmented: Why would Sunday being mentioned in the premise entail that it's a day for Christian worship?\\[1ex]
\textbf{\textcolor{blue!80!black}{Emphasize Cause-Effect Relationships}}\\
\textbf{Rule}: Reframe the question to highlight causal relationships rather than surface-level descriptions.\\
\textbf{Example}:\\
\quad Original: What does the premise state about the plane crash?\\
\quad Augmented: How does the statement that all passengers and crew survived establish the entailment relationship with the hypothesis about no children being killed?\\[1ex]
\textbf{\textcolor{blue!80!black}{Focus on Processes or Mechanisms}}\\
\textbf{Rule}: Augment questions to ask about the step-by-step processes involved in a phenomenon.\\
\textbf{Example}:\\
\quad Original: How does the premise information about Kit Kat in Japan relate to the hypothesis?\\
\quad Augmented: What is the inferential process by which we can conclude Japanese people like Kit Kat based on the premise about product variety and continued production since 1973?\\[1ex]
\textbf{\textcolor{blue!80!black}{Include Counterfactual Scenarios}}\\
\textbf{Rule}: Reformulate the question to explore what would happen if a certain condition were different.\\
\textbf{Example}:\\
\quad Original: Is the hypothesis about survival entailed by the premise?\\
\quad Augmented: How would the entailment relationship change if the premise stated that 'most' rather than 'all' passengers survived the crash?\\[1ex]
\textbf{\textcolor{blue!80!black}{Highlight Comparisons and Contrasts}}\\
\textbf{Rule}: Augment the question to compare different cases or conditions.\\
\textbf{Example}:\\
\quad Original: What relationship exists between the premise about technology articles and the hypothesis about Sunday?\\
\quad Augmented: How does the inference that Sunday is a Christian worship day differ from other possible inferences about Sunday mentioned in the technology news premise?\\[1ex]
\textbf{\textcolor{blue!80!black}{Encourage Hypothesis Exploration}}\\
\textbf{Rule}: Reformulate the question to ask about possible explanations or hypotheses.\\
\textbf{Example}:\\
\quad Original: Why is there an entailment between the Kit Kat production and Japanese preferences?\\
\quad Augmented: What alternative explanations beyond consumer preference might account for the continued production of various Kit Kat flavors in Japan since 1973?\\[1ex]
\textbf{\textcolor{blue!80!black}{Incorporate Real-World Scenarios}}\\
\textbf{Rule}: Tie the question to practical or observable scenarios to encourage applied reasoning.\\
\textbf{Example}:\\
\quad Original: What can we infer about safety from the plane crash description?\\
\quad Augmented: In a real-world aviation investigation scenario, how would the premise information about survival rates support or refute claims about the safety of children on the flight?\\[1ex]
\textbf{\textcolor{blue!80!black}{Use Chain-of-Reasoning Prompts}}\\
\textbf{Rule}: Reformulate the question to require multi-step reasoning to reach the answer.\\
\textbf{Example}:\\
\quad Original: Does the premise about technology news entail the hypothesis about Sunday worship?\\
\quad Augmented: What cultural and historical knowledge must be applied to connect the mere mention of Sunday in a technology news context to the religious significance of the day, and how does this create an entailment relationship?\\[1ex]
\textbf{\textcolor{blue!80!black}{Introduce Temporal Dynamics}}\\
\textbf{Rule}: Reformulate questions to explore how phenomena change over time.\\
\textbf{Example}:\\
\quad Original: What does the Kit Kat example tell us about Japanese consumer preferences?\\
\quad Augmented: How has the relationship between Kit Kat production and Japanese consumer preferences evolved since the product's 1973 introduction, and what does this evolution imply about the entailment in the current example?\\[1ex]
\textbf{\textcolor{blue!80!black}{Integrate Explanatory Questions}}\\
\textbf{Rule}: Directly ask for an explanation of the reasoning behind an answer.\\
\textbf{Example}:\\
\quad Original: Is it reasonable to conclude no children died in the crash?\\
\quad Augmented: Explain the logical steps that connect the premise statement 'all passengers and crew have survived' to the entailment of 'no children were killed in the accident.'\\[1ex]
\end{tcolorbox}

\begin{tcolorbox}[title=EI Question Augmentation Rules – Math, colback=white, colframe=black, fonttitle=\bfseries, boxsep=1mm, arc=2mm, breakable]
\foreach \title/\content/\original/\augmented in {
{Convert Direct Questions into Why-Based Questions}/{Reformulate the question to ask about the reasons or mechanisms behind the original query.}/{Does the premise entail that the day mentioned is important for religious worship?}/{Why would Sunday being mentioned in the premise entail that it's a day for Christian worship?},
{Emphasize Cause-Effect Relationships}/{Reframe the question to highlight causal relationships rather than surface-level descriptions.}/{What does the premise state about the plane crash?}/{How does the statement that all passengers and crew survived establish the entailment relationship with the hypothesis about no children being killed?},
{Focus on Processes or Mechanisms}/{Augment questions to ask about the step-by-step processes involved in a phenomenon.}/{How does the premise information about Kit Kat in Japan relate to the hypothesis?}/{What is the inferential process by which we can conclude Japanese people like Kit Kat based on the premise about product variety and continued production since 1973?},
{Include Counterfactual Scenarios}/{Reformulate the question to explore what would happen if a certain condition were different.}/{Is the hypothesis about survival entailed by the premise?}/{How would the entailment relationship change if the premise stated that 'most' rather than 'all' passengers survived the crash?},
{Highlight Comparisons and Contrasts}/{Augment the question to compare different cases or conditions.}/{What relationship exists between the premise about technology articles and the hypothesis about Sunday?}/{How does the inference that Sunday is a Christian worship day differ from other possible inferences about Sunday mentioned in the technology news premise?},
{Encourage Hypothesis Exploration}/{Reformulate the question to ask about possible explanations or hypotheses.}/{Why is there an entailment between the Kit Kat production and Japanese preferences?}/{What alternative explanations beyond consumer preference might account for the continued production of various Kit Kat flavors in Japan since 1973?},
{Incorporate Real-World Scenarios}/{Tie the question to practical or observable scenarios to encourage applied reasoning.}/{What can we infer about safety from the plane crash description?}/{In a real-world aviation investigation scenario, how would the premise information about survival rates support or refute claims about the safety of children on the flight?},
{Use Chain-of-Reasoning Prompts}/{Reformulate the question to require multi-step reasoning to reach the answer.}/{Does the premise about technology news entail the hypothesis about Sunday worship?}/{What cultural and historical knowledge must be applied to connect the mere mention of Sunday in a technology news context to the religious significance of the day, and how does this create an entailment relationship?},
{Introduce Temporal Dynamics}/{Reformulate questions to explore how phenomena change over time.}/{What does the Kit Kat example tell us about Japanese consumer preferences?}/{How has the relationship between Kit Kat production and Japanese consumer preferences evolved since the product's 1973 introduction, and what does this evolution imply about the entailment in the current example?},
{Integrate Explanatory Questions}/{Directly ask for an explanation of the reasoning behind an answer.}/{Is it reasonable to conclude no children died in the crash?}/{Explain the logical steps that connect the premise statement 'all passengers and crew have survived' to the entailment of 'no children were killed in the accident.'}
}{
\textbf{\textcolor{blue!80!black}{\title}}\\
\textbf{Rule}: \content\\
\textbf{Example}:\\
\quad Original: \original\\
\quad Augmented: \augmented\\
}
\end{tcolorbox}

\begin{tcolorbox}[title=EI Question Augmentation Rules – StrategyQA, colback=white, colframe=black, fonttitle=\bfseries, boxsep=1mm, arc=2mm, breakable]
\foreach \title/\content/\original/\augmented in {
{Convert Direct Questions into Why-Based Questions}/{Reformulate the question to ask about the reasons or mechanisms behind the original query.}/{Does the premise entail that the day mentioned is important for religious worship?}/{Why would Sunday being mentioned in the premise entail that it's a day for Christian worship?},
{Emphasize Cause-Effect Relationships}/{Reframe the question to highlight causal relationships rather than surface-level descriptions.}/{What does the premise state about the plane crash?}/{How does the statement that all passengers and crew survived establish the entailment relationship with the hypothesis about no children being killed?},
{Focus on Processes or Mechanisms}/{Augment questions to ask about the step-by-step processes involved in a phenomenon.}/{How does the premise information about Kit Kat in Japan relate to the hypothesis?}/{What is the inferential process by which we can conclude Japanese people like Kit Kat based on the premise about product variety and continued production since 1973?},
{Include Counterfactual Scenarios}/{Reformulate the question to explore what would happen if a certain condition were different.}/{Is the hypothesis about survival entailed by the premise?}/{How would the entailment relationship change if the premise stated that 'most' rather than 'all' passengers survived the crash?},
{Highlight Comparisons and Contrasts}/{Augment the question to compare different cases or conditions.}/{What relationship exists between the premise about technology articles and the hypothesis about Sunday?}/{How does the inference that Sunday is a Christian worship day differ from other possible inferences about Sunday mentioned in the technology news premise?},
{Encourage Hypothesis Exploration}/{Reformulate the question to ask about possible explanations or hypotheses.}/{Why is there an entailment between the Kit Kat production and Japanese preferences?}/{What alternative explanations beyond consumer preference might account for the continued production of various Kit Kat flavors in Japan since 1973?},
{Incorporate Real-World Scenarios}/{Tie the question to practical or observable scenarios to encourage applied reasoning.}/{What can we infer about safety from the plane crash description?}/{In a real-world aviation investigation scenario, how would the premise information about survival rates support or refute claims about the safety of children on the flight?},
{Use Chain-of-Reasoning Prompts}/{Reformulate the question to require multi-step reasoning to reach the answer.}/{Does the premise about technology news entail the hypothesis about Sunday worship?}/{What cultural and historical knowledge must be applied to connect the mere mention of Sunday in a technology news context to the religious significance of the day, and how does this create an entailment relationship?},
{Introduce Temporal Dynamics}/{Reformulate questions to explore how phenomena change over time.}/{What does the Kit Kat example tell us about Japanese consumer preferences?}/{How has the relationship between Kit Kat production and Japanese consumer preferences evolved since the product's 1973 introduction, and what does this evolution imply about the entailment in the current example?},
{Integrate Explanatory Questions}/{Directly ask for an explanation of the reasoning behind an answer.}/{Is it reasonable to conclude no children died in the crash?}/{Explain the logical steps that connect the premise statement 'all passengers and crew have survived' to the entailment of 'no children were killed in the accident.'}
}{
\textbf{\textcolor{blue!80!black}{\title}}\\
\textbf{Rule}: \content\\
\textbf{Example}:\\
\quad Original: \original\\
\quad Augmented: \augmented\\
}
\end{tcolorbox}

\begin{tcolorbox}[title=EI Question Augmentation Rules – TabMWP, colback=white, colframe=black, fonttitle=\bfseries, boxsep=1mm, arc=2mm, breakable]
\foreach \title/\content/\original/\augmented in {
{Convert Direct Questions into Why-Based Questions}/{Reformulate the question to ask about the reasons or mechanisms behind the original query.}/{Does the premise entail that the day mentioned is important for religious worship?}/{Why would Sunday being mentioned in the premise entail that it's a day for Christian worship?},
{Emphasize Cause-Effect Relationships}/{Reframe the question to highlight causal relationships rather than surface-level descriptions.}/{What does the premise state about the plane crash?}/{How does the statement that all passengers and crew survived establish the entailment relationship with the hypothesis about no children being killed?},
{Focus on Processes or Mechanisms}/{Augment questions to ask about the step-by-step processes involved in a phenomenon.}/{How does the premise information about Kit Kat in Japan relate to the hypothesis?}/{What is the inferential process by which we can conclude Japanese people like Kit Kat based on the premise about product variety and continued production since 1973?},
{Include Counterfactual Scenarios}/{Reformulate the question to explore what would happen if a certain condition were different.}/{Is the hypothesis about survival entailed by the premise?}/{How would the entailment relationship change if the premise stated that 'most' rather than 'all' passengers survived the crash?},
{Highlight Comparisons and Contrasts}/{Augment the question to compare different cases or conditions.}/{What relationship exists between the premise about technology articles and the hypothesis about Sunday?}/{How does the inference that Sunday is a Christian worship day differ from other possible inferences about Sunday mentioned in the technology news premise?},
{Encourage Hypothesis Exploration}/{Reformulate the question to ask about possible explanations or hypotheses.}/{Why is there an entailment between the Kit Kat production and Japanese preferences?}/{What alternative explanations beyond consumer preference might account for the continued production of various Kit Kat flavors in Japan since 1973?},
{Incorporate Real-World Scenarios}/{Tie the question to practical or observable scenarios to encourage applied reasoning.}/{What can we infer about safety from the plane crash description?}/{In a real-world aviation investigation scenario, how would the premise information about survival rates support or refute claims about the safety of children on the flight?},
{Use Chain-of-Reasoning Prompts}/{Reformulate the question to require multi-step reasoning to reach the answer.}/{Does the premise about technology news entail the hypothesis about Sunday worship?}/{What cultural and historical knowledge must be applied to connect the mere mention of Sunday in a technology news context to the religious significance of the day, and how does this create an entailment relationship?},
{Introduce Temporal Dynamics}/{Reformulate questions to explore how phenomena change over time.}/{What does the Kit Kat example tell us about Japanese consumer preferences?}/{How has the relationship between Kit Kat production and Japanese consumer preferences evolved since the product's 1973 introduction, and what does this evolution imply about the entailment in the current example?},
{Integrate Explanatory Questions}/{Directly ask for an explanation of the reasoning behind an answer.}/{Is it reasonable to conclude no children died in the crash?}/{Explain the logical steps that connect the premise statement 'all passengers and crew have survived' to the entailment of 'no children were killed in the accident.'}
}{
\textbf{\textcolor{blue!80!black}{\title}}\\
\textbf{Rule}: \content\\
\textbf{Example}:\\
\quad Original: \original\\
\quad Augmented: \augmented\\[1ex]
}
\end{tcolorbox}


\section{The Use of Large Language Models (LLMs)}
To enhance clarity and readability, we utilized OpenAI GPT-5 and Google Gemini 2.5-Pro exclusively as a language polishing tool. Its role was confined to proofreading, grammatical correction, and stylistic refinement—functions analogous to those provided by traditional grammar checkers and dictionaries. This tool did not contribute to the generation of new scientific content or ideas, and its usage is consistent with standard practices for manuscript preparation
\end{document}